\documentclass{article}

\usepackage{microtype}
\usepackage{graphicx}
\usepackage{subcaption}
\usepackage{booktabs} % for professional tables

\usepackage{hyperref}

\usepackage{enumitem}
\usepackage{multirow}
\usepackage{multicol}
\usepackage{makecell}
\usepackage{adjustbox}
\usepackage{soul}
\sethlcolor{yellow}
\usepackage{tipa}
\usepackage[dvipsnames]{xcolor}
\usepackage{tcolorbox}
\usepackage{colortbl}
\tcbuselibrary{breakable}
\usepackage[fixed]{fontawesome5}
\usepackage{tabularx} 
\usepackage{ragged2e} 
\usepackage{array}
\usepackage{float}
\usepackage{wrapfig}
\usepackage{siunitx}
\let\OriginalAddContentsLine\addcontentsline

\newcolumntype{L}{>{\RaggedRight\arraybackslash}X}
\newcommand{\Cell}[1]{%
  \begin{minipage}[t]{\hsize}%
    \RaggedRight%              % 
    #1%                        % 
  \end{minipage}%
}

\usepackage[accepted]{icml2026}

\usepackage{amsmath}
\usepackage{amssymb}
\usepackage{mathtools}
\usepackage{amsthm}

\usepackage[capitalize,noabbrev]{cleveref}

\theoremstyle{plain}
\newtheorem{theorem}{Theorem}[section]

\newtheorem{lemma}[theorem]{Lemma}
\newtheorem{corollary}[theorem]{Corollary}
\theoremstyle{definition}
\newtheorem{definition}[theorem]{Definition}

\theoremstyle{remark}

\usepackage[textsize=tiny]{todonotes}

\icmltitlerunning{Hierarchical Abstract Tree for Cross-Document Retrieval-Augmented Generation}

\begin{document}

\twocolumn[
  \icmltitle{Hierarchical Abstract Tree for Cross-Document Retrieval-Augmented Generation}

  % It is OKAY to include author information, even for blind submissions: the
  % style file will automatically remove it for you unless you've provided
  % the [accepted] option to the icml2026 package.

  % List of affiliations: The first argument should be a (short) identifier you
  % will use later to specify author affiliations Academic affiliations
  % should list Department, University, City, Region, Country Industry
  % affiliations should list Company, City, Region, Country

  % You can specify symbols, otherwise they are numbered in order. Ideally, you
  % should not use this facility. Affiliations will be numbered in order of
  % appearance and this is the preferred way.
  \icmlsetsymbol{equal}{*}

  \begin{icmlauthorlist}
    % \icmlauthor{Firstname1 Lastname1}{equal,yyy}
    % \icmlauthor{Firstname2 Lastname2}{equal,yyy,comp}
    % \icmlauthor{Firstname3 Lastname3}{comp}
    % \icmlauthor{Firstname4 Lastname4}{sch}
    % \icmlauthor{Firstname5 Lastname5}{yyy}
    % \icmlauthor{Firstname6 Lastname6}{sch,yyy,comp}
    % \icmlauthor{Firstname7 Lastname7}{comp}
    % %\icmlauthor{}{sch}
    % \icmlauthor{Firstname8 Lastname8}{sch}
    % \icmlauthor{Firstname8 Lastname8}{yyy,comp}
    %\icmlauthor{}{sch}
    %\icmlauthor{}{sch}
    \icmlauthor{Ziwen Zhao}{hkustgz}
    \icmlauthor{Menglin Yang}{hkustgz}
  \end{icmlauthorlist}

  \icmlaffiliation{hkustgz}{The Hong Kong University of Science and Technology (Guangzhou), Guangzhou, China}
  % \icmlaffiliation{comp}{Company Name, Location, Country}
  % \icmlaffiliation{sch}{School of ZZZ, Institute of WWW, Location, Country}

  \icmlcorrespondingauthor{Menglin Yang}{menglinyang@hkust-gz.edu.cn}
  % \icmlcorrespondingauthor{Firstname2 Lastname2}{first2.last2@www.uk}

  % You may provide any keywords that you find helpful for describing your
  % paper; these are used to populate the "keywords" metadata in the PDF but
  % will not be shown in the document
  \icmlkeywords{Machine Learning, ICML}

  \vskip 0.3in
]

% this must go after the closing bracket ] following \twocolumn[ ...

% This command actually creates the footnote in the first column listing the
% affiliations and the copyright notice. The command takes one argument, which
% is text to display at the start of the footnote. The \icmlEqualContribution
% command is standard text for equal contribution. Remove it (just {}) if you
% do not need this facility.

% Use ONE of the following lines. DO NOT remove the command.
% If you have no special notice, KEEP empty braces:
\printAffiliationsAndNotice{}  % no special notice (required even if empty)
% Or, if applicable, use the standard equal contribution text:
% \printAffiliationsAndNotice{\icmlEqualContribution}

\begin{abstract}
Retrieval-augmented generation (RAG) enhances large language models with external knowledge, and tree-based RAG organizes documents into hierarchical indexes to support queries at multiple granularities.
However, existing Tree-RAG methods designed for single-document retrieval face critical challenges in scaling to cross-document multi-hop questions: \textit{(1) poor distribution adaptability}, where $k$-means clustering introduces noise due to rigid distribution assumptions; \textit{(2) structural isolation}, as tree indexes lack explicit cross-document connections; and \textit{(3) coarse abstraction}, which obscures fine-grained details. To address these limitations, we propose {\bf $\Psi$-RAG}, a tree-RAG framework with two key components. \textit{First}, a hierarchical abstract tree index 
built through an iterative ``merging and collapse'' process that adapts to data distributions without a priori assumption. 
\textit{Second}, a multi-granular retrieval agent that intelligently interacts with the knowledge base with reorganized queries and an agent-powered hybrid retriever.
$\Psi$-RAG supports diverse tasks from token-level question answering to document-level summarization. 
On cross-document multi-hop QA benchmarks, it outperforms RAPTOR by 25.9\% and HippoRAG 2 by 7.4\% in average F1 score\footnote{Code is available at \url{https://github.com/Newiz430/Psi-RAG.}}.
\end{abstract}

\section{Introduction}

\begin{figure}[t]
  \vskip -0.1in
  \begin{center}
    \centerline{\includegraphics[width=\columnwidth]{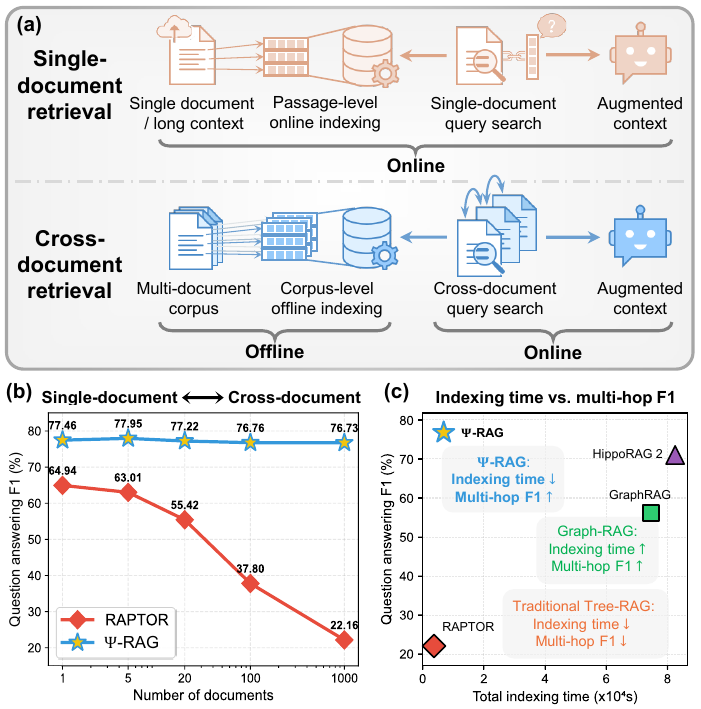}}
    \caption{
(a) Two RAG application scenarios: single-document and cross-document retrieval. 
(b) Multi-hop QA performance {\it w.r.t.} the number of documents in the tree index. 
(c) $\Psi$-RAG has both efficient indexing and accurate multi-hop retrieval compared to structured RAG baselines, using 2Wiki as an example. }
    \label{fig:corpusindex}
  \end{center}
  \vskip -0.3in
\end{figure}

Retrieval-augmented generation (RAG) \cite{rag} enhances large language models (LLMs) with reliable references from external knowledge bases. 
To effectively address multi-granular user queries from token-level factual question answering (QA) to document-level summarization, RAG frameworks must leverage the semantic hierarchy within corpora \cite{raptor, densex}. 
Traditional retrieval systems \cite{bm25, dpr} slice documents into short, independent chunks; this fixed retrieval granularity leads to incomplete and inaccurate responses for higher-level questions \cite{hiercrf, dhr, raptor, archrag}.

{\it Structured RAG} achieves multi-granularity by organizing documents relationally with knowledge graph indexes \cite{graphrag, hipporag, hipporag2} or hierarchically with tree indexes \cite{dhr, longrefiner, raptor}. 
Its utility is demonstrated in two key scenarios, as in \cref{fig:corpusindex}(a): (1) {\it single-document retrieval}, to process user requests in diverse granularity within a single user-uploaded document \cite{longcontextllm1,longcontextllm2,gor}; 
(2) {\it cross-document retrieval}, to process user requests by building a large index from an entire (domain-specific) corpus and retrieving across multiple documents, challenging the multi-hop retrieval and reasoning capabilities of RAG frameworks \cite{ircot, multihoprag}. 

Our work focuses on {\it Tree-RAG} \cite{dhr, longrefiner, raptor, treerag, memtree}, which organizes documents into a tree-structured index. 
It explicitly defines different information granularities, enabling superior retrieval performance, especially for summative tasks \cite{raptor, graphragbench}. See Appendix \ref{relatedwork} for more related works. However, existing Tree-RAG methods mainly target single-hop retrieval on a single long document, while cross-document and multi-hop retrieval remain underexplored. For example, representative $k$-means-type cluster trees like RAPTOR \cite{raptor} achieve high retrieval performance only with small-scale, passage-level indexes. As shown in \cref{fig:corpusindex}(b), {\bf RAPTOR's retrieval accuracy drops significantly when the search space expands to corpus-level with millions of tokens}. We identify three key limitations: 
(1) {\it Poor distribution adaptability}: $k$-means-type clustering implicitly relies on spherical data distributions \cite{raptor}, which introduces noisy documents to the retriever for corpora with a skewed distribution. 
(2) {\it Structural isolation}: 
unlike Graph-RAG which dynamically hops between documents based on pairwise relationships, leaf nodes in a tree index lack explicit connections. 
This prevents the retriever from capturing implicit causal dependencies in multi-hop questions. 
(3) {\it Coarse abstraction}: the coarse-grained abstracts act like a mosaic, obscuring token-level details at the very beginning of retrieval, as dense vector matching struggles to precisely associate a specific entity in the user query with abstract concepts on top of the tree. 

\begin{table}[t]
\caption{Comparison of RAG frameworks. ``\!\!{\small \textcolor{Dandelion}{\faQuestion}}\!\!'' indicates limited support by specific methods.}
\label{tab:ragcomparison}
\begin{center}
\begin{small}
\adjustbox{max width=1.0\columnwidth}{
\begin{tabular}{lccccc}
\toprule[1.2pt]
Task type & \makecell[c]{Single-\\document} & \makecell[c]{Cross-\\document} & \makecell[c]{Token-\\level} & \makecell[c]{Passage-\\level} & \makecell[c]{Document-\\level} \\
\midrule
Traditional RAG & \textcolor{Green}{\faCheck} & \textcolor{OrangeRed}{\faTimes} & \textcolor{Green}{\faCheck} & \textcolor{Dandelion}{\faQuestion} & \textcolor{OrangeRed}{\faTimes} \\
Graph-RAG & \textcolor{Green}{\faCheck} & \textcolor{Green}{\faCheck} & \textcolor{Green}{\faCheck} & \textcolor{Green}{\faCheck} & \textcolor{Dandelion}{\faQuestion} \\
Tree-RAG & \textcolor{Green}{\faCheck} & \textcolor{OrangeRed}{\faTimes} & \textcolor{Dandelion}{\faQuestion} & \textcolor{Green}{\faCheck} & \textcolor{Green}{\faCheck} \\
\rowcolor{gray!25}
\textbf{$\Psi$-RAG (Ours)} & \textcolor{Green}{\faCheck} & \textcolor{Green}{\faCheck} & \textcolor{Green}{\faCheck} & \textcolor{Green}{\faCheck} & \textcolor{Green}{\faCheck} \\
\bottomrule[1.2pt]
\end{tabular}
}
\end{small}
\end{center}
\vskip -0.2in
\end{table}

This work aims to address the limitations of Tree-RAG on cross-document and multi-hop retrieval. 
We present $\Psi$-RAG, a versatile and effective Tree-RAG framework, comprising a {\it hierarchical abstract tree index} and a {\it multi-granular agentic retriever}. (1) To address poor distribution adaptability, we build an abstract tree index via a hierarchical clustering-inspired ``merging and collapse'' process,  
which iteratively links similar chunks to existing or newly generated upper-level nodes in the tree. 
(2) To tackle structural isolation, a retrieval and answering agent reasons through the user query, during which it calls for additional document retrieval with a reorganized query when needed. This empowers the tree retriever with better causal understanding of user query. 
(3) To resolve the conflict between coarse abstraction and detailed factual retrieval, a multi-granular hybrid retrieval framework is employed, where an agent-powered sparse retriever supplements the hierarchical tree retrieval with fine-grained information, yielding significant performance gains. Our proposed framework possesses numerous advantages:

\begin{itemize}[leftmargin=10pt,itemsep=1pt,topsep=2pt]
\item {\bf $\Psi$-RAG generalizes Tree-RAG from passage-level to corpus-level indexing.} As shown in \cref{fig:corpusindex}(c), $\Psi$-RAG builds a corpus-level tree 10$\times$ faster than OpenIE-based Graph-RAG. Under this setting, $\Psi$-RAG achieves average gains of 23.7\% in retrieval and 25.9\% in generation over RAPTOR on token-level QA.
\item {\bf $\Psi$-RAG extends Tree-RAG to complex multi-hop scenarios.} To our knowledge, $\Psi$-RAG is the first Tree-RAG for cross-document multi-hop scenarios comparable to Graph-RAG, exhibiting great application potential. As in \cref{fig:corpusindex}(c), $\Psi$-RAG 
is both efficient and powerful, outperforming cutting-edge Graph-RAG frameworks like HippoRAG 2 \cite{hipporag2}. 
\item As in \cref{tab:ragcomparison}, {\bf $\Psi$-RAG is an all-in-one Tree-RAG framework supporting tasks across various granularities}: token-level factual QA, passage-level causal reasoning, and document-level summarization.
\item {\bf $\Psi$-RAG is built entirely with open-source LLMs}. Its components are flexibly replaceable. $\Psi$-RAG can be applied to custom corpora without any training or fine-tuning, demonstrating strong generalizability.
\end{itemize}

\begin{figure*}[ht]
  \begin{center}
    \centerline{\includegraphics[width=0.98\textwidth]{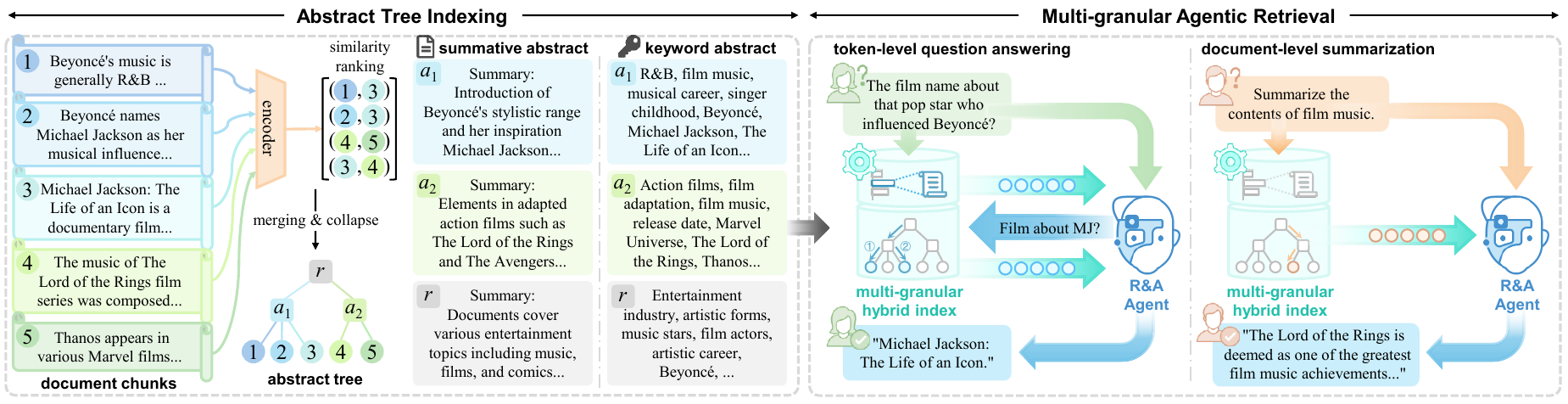}}
    \caption{
      Overview of $\Psi$-RAG. 
      In Abstract Tree Indexing, a hierarchical abstract tree is built by ranking pairwise chunk similarities and applying an iterative ``merging and collapse'' process. 
      Then, an abstraction agent generates a summative or keyword abstract for each abstract node. 
      In Multi-granular Agentic Retrieval, a retrieval and answering (R\&A) agent dynamically interacts with a multi-granular knowledge base organized by both the tree index and a sparse keyword index. 
      By adjusting the retrieval iterations and the utilization of the sparse index, $\Psi$-RAG adapts to diverse user requests from token-level question answering to document-level summarization.
    }
    \label{fig:overview}
  \end{center}
  \vskip -0.25in
\end{figure*}

\section{Preliminary} \label{preliminary}

{\bf Retrieval-Augmented Generation.} \label{preliminaryrag}
Given a user query $q$, an LLM $f(q; I)$ generates a corresponding response $a$, where $I$ is the system instruction for a specific task. For a reasoning model, $a = R \cup y$, where $R$ is the Chain-of-Thought \cite{cot} reasoning text and $y$ is the final answer. An LLM equipped with RAG can be defined as $f \cdot r$, where the retriever $r(q; \mathcal{I})$ maps $q$ to top-$k$ relevant document chunks $\mathcal{D}^* = \{u_i\}_{i=1}^{k} \subset \mathcal{D}$ using a scoring function $s(q, u)$. The external corpus $\mathcal{D}$ contains $n = \vert \mathcal{D} \vert$ chunks in total. 
The index $\mathcal{I}: \mathcal{D} \rightarrow \mathcal{T}$ maps each chunk in $\mathcal{D}$ to a discrete index space $\mathcal{T}$ for better retrieval. 

{\bf Tree Index.} \label{preliminarytree}
We overload the symbol $\mathcal{T}$ to denote a tree. Formally, a tree $\mathcal{T} = (\mathcal{V}, \mathcal{E})$ is a connected acyclic graph with a finite node set $\mathcal{V}=\{v_i\}_{i=1}^{\vert \mathcal{V} \vert}$ and an edge set $\mathcal{E} \subset \mathcal{V} \times \mathcal{V}$.
The root node for a node $u$ is denoted as $root(u)$. The set of children of $u$ is $c(u)=\{v\vert u \rightarrow v \in \mathcal{E}\}$. Let $\ell(u)$ be the set of all leaf nodes reachable from $u$. The depth of a node $\delta(\cdot)$ is defined as the length of the path to the root, with $\delta(root) = 0$. The tree depth is $\delta(\mathcal{T}) = \max_{u \in \mathcal{T}} \delta(u)$. We denote the parent (ancestor) of leaf $u$ at a distance $k$ as $p^{k}(u)$, where $p^0(u) = u$ and $root(u) = p^{\delta(u)}(u)$. 
The least common ancestor (LCA) of two nodes is defined as the deepest node that is an ancestor to both:
\begin{equation}
\text{LCA}(u,v) = \arg \max_{a\in \{p^k(u)\}_{k=1}^{\delta(u)} \cap \{p^k(v)\}_{k=1}^{\delta(v)}} {\delta(a)}
\end{equation}
\vskip -0.15in
A tree is binary (or bifurcating) if all its non-leaf nodes have exactly two children. A tree is multifurcating if all non-leaf nodes have at least two children and at least one node has more than two children. A tree is equi-depth if every leaf is equidistant from the root. 

\section{$\Psi$-RAG}

As illustrated in \cref{fig:overview}, $\Psi$-RAG follows the two-phase paradigm of existing structured RAG: indexing and retrieval. This section formally presents its two core components: Abstract Tree Indexing and Multi-granular Agentic Retrieval.

\subsection{Abstract Tree Indexing} \label{treeindex}

$\Psi$-RAG constructs a tree index from a long document or a document collection, which we term an {\it abstract tree}. The leaves consist of the original document chunks $\mathcal{D}$ and their encoded dense vectors. Upper-level nodes are called {\it abstract nodes}, each containing an abstract of its children generated by an abstraction agent. The abstract tree is constructed via a ``similarity ranking $\rightarrow$ iterative merging and collapse $\rightarrow$ abstraction'' pipeline.

\textbf{Similarity ranking.} First, all document chunks are encoded into fixed-length dense vectors $e(\mathcal{D})\in\mathbb{R}^{\vert \mathcal{D} \vert\times{d_e}}$ using a pre-trained language encoder. A symmetric similarity matrix $\mathbf{S} \in \mathbb{R}^{\vert \mathcal{D} \vert \times \vert \mathcal{D} \vert} = e(\mathcal{D})e(\mathcal{D})^\top$ is then constructed. These chunk pairs are finally sorted in descending order based on their similarity scores.

\begin{figure}[ht]
  \begin{center}
    \centerline{\includegraphics[width=\columnwidth]{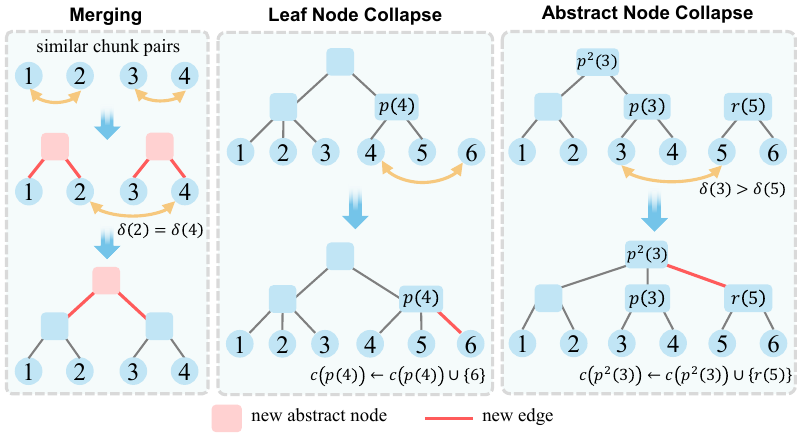}}
    \caption{
      Illustration of abstract tree building steps: merging, leaf node collapse, and abstract node collapse. Yellow arrows indicate the current chunk pair to be processed.
    }
    \label{fig:treebuild}
  \end{center}
  \vskip -0.2in
\end{figure}

\textbf{Iterative merging and collapse.} 
Beginning from the most similar pair of document chunks $(u, v)$, the tree is built iteratively: 
if neither $u$ nor $v$ has a parent, they are linked under a newly created abstract node $a$: $c(a) = \{u, v\}$. This is called a \textit{merging}.
If one node $u$ is already linked to an abstract node $p(u)$, the isolated node $v$ is also linked under $p(u)$: $c(p(u)) \leftarrow c(p(u))\cup \{v\}$. This is called a \textit{leaf node collapse}.
If both $u$ and $v$ are already linked to abstract nodes and their roots differ, then:
\begin{itemize}[leftmargin=10pt,itemsep=1pt,topsep=3pt]
% \item If they share the same root $root(u) = root(v)$, they are already merged and no action is needed.
\item If $\delta(u) = \delta(v)$, their roots are merged under a new abstract node $a$: $c(a) = \{root(u), root(v)\}$, making $root(u) = root(v) = a$.
\item If $\delta(u) > \delta(v)$, $root(v)$ is linked onto the path of $u$:
\begin{equation}
c(p^{\delta(v)+1}(u)) \leftarrow c(p^{\delta(v)+1}(u)) \cup \{root(v)\}
\end{equation}
This results in $\delta(u)=\delta(v)$. The case for $\delta(u) < \delta(v)$ is analogous. This is called an \textit{abstract node collapse}.
\end{itemize}

\cref{fig:treebuild} illustrates these tree-building steps. The iteration runs for exactly $n-1$ steps, which is the minimum number of connections required to link all $n$ nodes into a single tree.
Subsequently, the abstract tree is {\it rebalanced} by splitting nodes with an excessive number of children and distributing the children evenly between two new nodes. This avoids overly long contexts for the abstraction agent.
A detailed algorithm is provided in Appendix \ref{alg}. 

\textbf{Abstraction.} This step employs an abstraction agent to generate and encode an abstract for each abstract node. We consider two abstraction types. (1) \textit{Summative abstract}: paragraphs of summary text \cite{raptor, graphrag} that capture coherent logic and key entity relationships. (2) \textit{Keyword abstract}: keywords of different types distilled from the document chunks, with high information density but weak logical correlations. Phrases are encouraged over single words to preserve the entity co-occurrence. 
% Unlike traditional keyword generation methods such as named entity recognition, 
% Our keyword abstract consists of three types: 
% \begin{itemize}[leftmargin=10pt,itemsep=1pt,topsep=3pt]
% \item \textit{Taxonomic keywords}: keywords that reflect common topics, such as ``R\&B music'' ``American pop singer'' ``entertainment industry''. These keywords effectively unify documents from diverse sources.
% \item \textit{Thematic keywords}: keywords that describe information types, such as ``artistic career'' ``birthplace'' ``singer awards''. 
% These keywords help retrievers prefer documents with thematic relevance: 
% % These keywords counteract the ``taxonomic relation bias'' where retrievers often prefer documents related to specific entities but lacking thematic relevance \cite{semantic}. 
% chunks of Beyonc\'e's discography are likely unhelpful for a user querying ``Beyonc\'e's birthplace''.
% \item \textit{Terminological keywords}: keywords that distinguish the specific content of each chunk, such as ``Beyonc\'e'' ``Michael Jackson'' ``The Lord of the Rings''. These keywords refine the coarse-grained high-level abstracts.
% \end{itemize}

\subsection{Multi-granular Agentic Retrieval} \label{agenticretrieval}

Existing Tree-RAG frameworks struggle with multi-hop questions due to the structural isolation of tree indexes.
For a multi-hop query like ``{\it Who is the wife of the man} (David Gest) {\it who produced the documentary of the pop star who influenced Beyonc\'e?}'', 
a retriever relying solely on dense vector similarity would prioritize content related to ``Beyonc\'e'' and ``documentary'', while missing the true subject ``David Gest''. % who produced the documentary about Michael Jackson. 
% Furthermore, such implicit associations often follow a causal order: one must first obtain the answer ``{\it The man who produced the documentary is David Gest}'' before knowing ``{\it Who is the wife of David Gest}''. 
% Graph-RAG frameworks naturally address this by capturing causally ordered multi-hop relations with node walks, albeit at the cost of a much slower indexing process to extract them from the corpus. 
As such implicit associations often follow a causal order, existing tree retrieval methods like beam search \cite{raptor, lattice} are insufficient for ordered causality reasoning. 

{\bf Retrieval and Answering (R\&A) Agent. } To capture ordered causality during reasoning, we employ a versatile R\&A agent that intelligently engages in multi-turn interactions with the abstract tree index. 
It first assesses whether the available information is sufficient to answer the user query accurately or whether an additional retriever call is necessary. The agent response is a triple $a=(R, \texttt{<action>}, \cdot)$ where the action tag is either $\texttt{<answer>}$ or $\texttt{<retrieve>}$. If $\texttt{<retrieve>}$ is returned, the agent proposes a new query $q'$ and requests another retrieval. 
The newly retrieved documents are then fed back to the agent alongside the original user query for the next reasoning step.

For each retrieval step, $\Psi$-RAG performs top-down dense vector matching following \cite{raptor}. Starting from the root, it computes the cosine similarity $s(q,u)$ between the query vector and each candidate node at layer $l$, selecting the top-$k$ nodes $\mathcal{U}_l$. Then, all children of the selected nodes $\bigcup_{u \in \mathcal{U}_l} c(u)$ become the candidates for the next layer. This process iterates until the leaf layer is reached.

Our agentic retrieval workflow is as follows. Given a user query $q$, an initial retrieval is invoked. The returned top-$k$ chunks $\mathcal{D}^*_0$ are fed to the agent, yielding $a_1 = f(q_0; I(\mathcal{D}^*_0))$. Then, ``retrieval-generation'' steps are iterated:
\vskip -0.4in
\begin{equation}
\!\!\mathcal{D}^*_i\!=\!r(q'_{i},\!\mathcal{T}), a_{i+1}\!=\!f\big(q;\bigcup_{j=1}^{i}{\!(I(\mathcal{D}^*_j)\!\cup\!a_j)} \cup I(\mathcal{D}^*_0)\big) \label{eq:iterretrieve}
\end{equation}
\vskip -0.1in
until the action tag becomes $\texttt{<answer>}$ or the number of retrieval steps reaches a threshold $i_{\max}$.
The user instruction $I$ continuously reminds the agent of the remaining retrieval attempts. 
If the information remains insufficient after exhausting all attempts, the R\&A agent returns ``Not mentioned''.

{\bf Agent-powered Hybrid Retrieval.}
In a corpus-level search space, the coarse-grained upper-level abstracts make even simple factual searches challenging for Tree-RAG. 
To better capture fine-grained information, a sparse keyword index is constructed during tree indexing using BM25 \cite{bm25}. During retrieval, the R\&A agent combines results from both the tree and sparse indexes via two optional strategies: (1) parametrically, using a reranker model to integrate both top-$k$ chunks; or (2) non-parametrically, via reciprocal rank fusion (RRF) \cite{rrf}. 

Furthermore, the R\&A agent broadens the scope of sparse retrieval. It is encouraged to reorganize its query with enriched context, such as inserting descriptive appositions: 

$q:$ {\it ``Who is the wife of David Gest?''} $\xrightarrow{\text{reorganization}}$ $q'$: {\it ``Who is the wife of \textbf{the American film producer} David Gest?''}

This adds thematic keywords for the sparse retriever and, more importantly, aids the tree retriever in locating relevant upper-level abstract nodes with the high-level context. 
An algorithm of the agentic retrieval is given in Appendix \ref{alg}.

\section{Distribution Adaptability of Tree-RAG}

This section analyzes the distribution adaptability of $k$-means-type Tree-RAG and $\Psi$-RAG.
Tree-RAG methods like RAPTOR \cite{raptor} build an equi-depth tree index via bottom-up $k$-means-type clustering, such as the Gaussian Mixture Model (GMM). However, $k$-means-type clustering tends to assign objects from major clusters to minor ones, resulting in clusters of relatively uniform size even when the corpus follows a skewed distribution. This is known as the ``uniform effect'' \cite{kmeansuniform06, kmeansuniform12, kmeansuniform23}. 
For RAG, this effect may cause the retriever to overlook chunks with minor themes. 
In contrast, $\Psi$-RAG better preserves skewed distributions in corpora. We compare the two tree RAG frameworks through the lens of their optimization objectives.

\subsection{The Uniform Effect}

{\bf $k$-means-type tree index.} 
% Define an equivalence relation "cluster" on the set of document chunks $\ell(\mathcal{T})$:
% \
% \begin{equation}
%     \rho = \big\{(a,b)\in \ell(\mathcal{T}) \times \ell(\mathcal{T}) | a, b \in \ell(\mathcal{T}_c) \ \forall \mathcal{T}_c \subseteq\mathcal{T}, \mathcal{V}(\mathcal{T}_c) = \bigcup_{u \in \ell(\mathcal{T_c})}{\{p^k(u)\}_{k=0}^{\delta(\mathcal{T}_c)}}\big\}
% \end{equation}
% It partitions the tree into subtrees ${\mathcal{T}_c}$ and correspondingly partitions $\ell(\mathcal{T})$ into clusters $c=\ell(\mathcal{T}_c)$. 
A cluster can be defined as a family of leaf sets $\mathcal{C}= \{c=\ell(\mathcal{T}_c)\}$ from an equi-depth forest $\{\mathcal{T}_c\subseteq\mathcal{T}|\delta(\mathcal{T}_c)=\delta(\mathcal{T})-1\}$. 
A $k$-means-type cluster tree searches for an optimal $\mathcal{C}$ by minimizing the objective
\vskip -0.15in
\begin{equation}
\mathcal{L} = \sum_{i=1}^{k} {\sum_{x_i \in c_i}{\Vert x_i - \mu_i \Vert}^2}, \label{kmeansloss}
\end{equation}
\vskip -0.1in
where $k=\vert \mathcal{C}\vert$, $\mu_i = \sum_{x_i \in c_i}{\frac{x_i}{n_i}}$ is the cluster centroid and $n_i = \vert c_i \vert$. \cref{thm:1} presents a variant of \eqref{kmeansloss} that reveals the ``uniform effect'' in $k$-means clustering. 

\begin{theorem}[Uniform effect of $k$-means \cite{kmeansuniform12}]
  \label{thm:1}
Given $\mathcal{L}$ in \eqref{kmeansloss}, we have
\vskip -0.1in
\begin{equation}
\mathcal{L} = - A\sum_{1 \le i < j \le k}{n_in_j {\Vert \mu_i - \mu_j \Vert}^2} + B \label{uniformeffect}
\end{equation}
\vskip -0.1in
where $A$ and $B$ are constants dependent only on the data. 
\end{theorem}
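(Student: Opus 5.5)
The plan is to use the classical decomposition of total scatter into within-cluster and between-cluster parts, and then rewrite the between-cluster part as a pairwise sum over centroids. Let $n=\sum_{i=1}^k n_i$ be the total number of data points (chunks) and $\mu=\frac{1}{n}\sum_{x}x=\frac{1}{n}\sum_{i=1}^k n_i\mu_i$ the global mean. Neither depends on the partition $\mathcal{C}$. Define the total scatter $T=\sum_{x}\Vert x-\mu\Vert^2$, which is likewise a constant of the data. We will show $\mathcal{L}=T-\frac{1}{n}\sum_{i<j}n_in_j\Vert\mu_i-\mu_j\Vert^2$, so that $A=1/n$ and $B=T$.

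\textbf{Step 1 (within/between decomposition).} For each cluster $c_i$, write $x-\mu=(x-\mu_i)+(\mu_i-\mu)$ and expand the square. The cross term $2\langle\sum_{x\in c_i}(x-\mu_i),\mu_i-\mu\rangle$ vanishes because $\mu_i$ is the mean of $c_i$. Summing over $i$ gives
\begin{equation*}
T=\mathcal{L}+\sum_{i=1}^k n_i\Vert\mu_i-\mu\Vert^2 .
\end{equation*}

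\textbf{Step 2 (pairwise form of the between term).} We show that $\sum_i n_i\Vert\mu_i-\mu\Vert^2=\frac{1}{n}\sum_{1\le i<j\le k}n_in_j\Vert\mu_i-\mu_j\Vert^2$. To do so, expand $\frac{1}{2}\sum_{i,j}n_in_j\Vert\mu_i-\mu_j\Vert^2=n\sum_i n_i\Vert\mu_i\Vert^2-\Vert\sum_i n_i\mu_i\Vert^2$, using $\sum_j n_j=n$. Separately, expand $\sum_i n_i\Vert\mu_i-\mu\Vert^2=\sum_i n_i\Vert\mu_i\Vert^2-n\Vert\mu\Vert^2$, using $n\mu=\sum_i n_i\mu_i$. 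Multiplying the second identity by $n$ shows the two expressions agree.

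\textbf{Step 3 (combine).} Substituting Step 2 into Step 1 gives \eqref{uniformeffect} with $A=1/n>0$ and $B=T$. Both constants depend only on the data and not on the clustering, which is what the statement requires.

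None of these steps is a real obstacle, since each is a routine algebraic identity. The only point needing care is the bookkeeping in Step 2 (the factor $\frac12$ between ordered and unordered pairs, and the use of $\sum_j n_j=n$). It is also worth stressing why the form matters: minimizing $\mathcal{L}$ is equivalent to maximizing $\sum_{i<j}n_in_j\Vert\mu_i-\mu_j\Vert^2$. For a fixed total $n$, the product weights $n_in_j$ favor balanced cluster sizes, and this is the uniform effect.
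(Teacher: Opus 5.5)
Your proof is correct, and it reaches the same constants as the paper by a different route.

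\textbf{The paper's route.} The paper never introduces the global mean. It works entirely with pairwise squared distances, defining $d(c_i,c_j)=\sum_{x\in c_i,\,y\in c_j}\Vert x-y\Vert^2$ and the total pairwise dispersion $\mathcal{P}_k=\sum_{x,y}\Vert x-y\Vert^2$. It then proves, by induction on the number of clusters, that $\mathcal{P}_k=\sum_i \frac{n}{n_i}d(c_i,c_i)+2\sum_{i<j}n_in_j\Vert\mu_i-\mu_j\Vert^2$. The base case $k=2$ is a direct expansion. The inductive step deletes one cluster, applies the hypothesis to the remaining $n-n_i$ points, and sums over the deleted cluster. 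This forces a division by $K-2$, so the $k=2$ base case is essential. Finally, the paper uses $\mathcal{L}=\frac12\sum_i d(c_i,c_i)/n_i$ to conclude.

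\textbf{Why the two agree.} Since $\mathcal{P}_k=2nT$ and $d(c_i,c_i)=2n_i\sum_{x\in c_i}\Vert x-\mu_i\Vert^2$, the paper's identity is exactly your $T=\mathcal{L}+\frac1n\sum_{i<j}n_in_j\Vert\mu_i-\mu_j\Vert^2$ multiplied by $2n$. Its $B=\mathcal{P}_k/(2n)$ therefore coincides with your $B=T$.

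\textbf{What each approach buys.} Your route is the within/between (Huygens) decomposition followed by the weighted Lagrange identity. It is shorter, needs no induction, and avoids the ordered-versus-unordered-pair bookkeeping that the pairwise formulation invites. It also makes transparent that $B$ is the total inertia of the data and that $A=1/n>0$. The paper's last line states $A=-\frac1n$, which is a sign slip relative to its own derived formula $\mathcal{L}=-\frac1n\mathcal{L}_k^{\leftrightarrow}+\frac{\mathcal{P}_k}{2n}$; your sign is the one consistent with the statement. The paper's mean-free formulation follows the cited source and matches the pairwise-sum style of its later Dasgupta-cost arguments. As a proof of this particular statement, though, yours is the cleaner one.
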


See \cref{proof1} for the proof. 
Assuming $n_in_j$ and ${\Vert \mu_i - \mu_j \Vert}^2$ are independent, maximizing the term $n_in_j$ in \eqref{uniformeffect} leads to $n_1=n_2=\cdots=n_k=n/k$. This indicates that $k$-means favors clusters of uniform size. 
% The phenomenon has been studied extensively including the analyses under non-independent assumptions; see \cite{kmeansuniform12} and relative literature. 
(Proposition 1, \cite{kmeansuniform23}) further notes that soft $k$-means, {\it i.e.}, GMM, shares the same global optimum and the uniform effect with $k$-means. 
\cref{fig:uniformeffect} shows a toy example where $k$-means gradually shifts the minor cluster centroid toward the major class, inflating the minor cluster beyond its actual size. 
% Given the prevalence of $k$-means, we believe this is also a potential issue for other hierarchical indexes such as Faiss \cite{faiss}. 
This may confuse abstract nodes in a tree index which leads to semantic bias or neglect of minor nodes.

\begin{figure}[ht]
  \begin{center}
    \centerline{\includegraphics[width=\columnwidth]{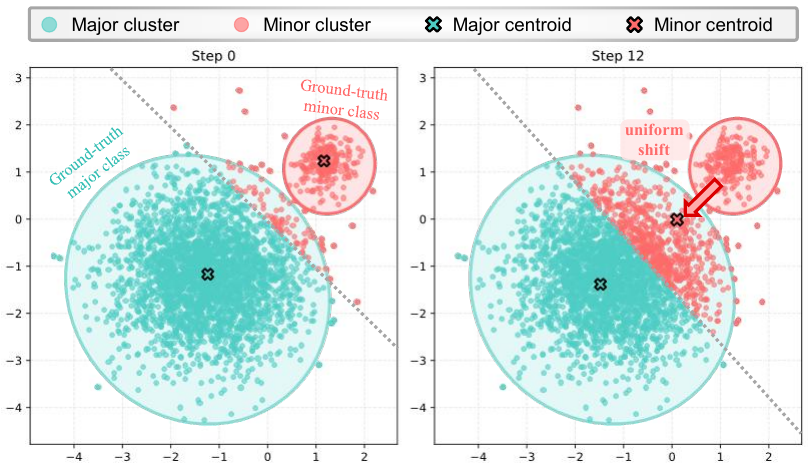}}
    \caption{
      The uniform effect of $k$-means clustering. Initial centroids are set at the center of each class (step 0). 
      % At step 0, only a few major class nodes are misassigned to the minor cluster. 
      As iteration proceeds, more major class nodes are misassigned to the minor cluster, eventually shifting the minor centroid (step 12).
    }
    \label{fig:uniformeffect}
  \end{center}
  \vskip -0.2in
\end{figure}

{\bf $\Psi$-RAG's hierarchical index.} 
$\Psi$-RAG is inspired by agglomerative hierarchical clustering (AHC) \cite{singlelinkage, upgma}, a greedy heuristic algorithm without an explicit optimization objective. We therefore adopt a widely used metric for evaluating AHC.

\begin{definition}[Dasgupta's cost \cite{dasgupta}]
\label{def:dc}
For a tree $\mathcal{T}$, the Dasgupta's cost is defined as
\begin{equation}
c(\mathcal{T})=\sum_{1\le i < j \le n}d(u_i,u_j) \vert \ell(\text{LCA}(u_i,u_j)) \vert
\end{equation}
where $d(\cdot, \cdot)$ is a distance function. 
\end{definition}

The intuition is that semantically similar points (with small $d$) should have a deeper LCA, {\it i.e.,} be placed closer in the tree. 
% meaning the size of their ancestor's subtree should be relatively small, bringing them closer in the index space. 
% If two similar points have an LCA near the root, they are far apart in the index space, incurring a high cost. 
% Directly minimizing this cost is an NP-hard problem, so it cannot be proven to be the exact implicit objective of AHC. 
Dasgupta’s cost is widely regarded as the implicit minimization objective of AHC \cite{dccite1, dccite2, dccite3}.
% This is further supported by the relationship between approximation algorithms such as the sparsest cut \cite{dasgupta}.

\begin{figure*}[ht]
  \begin{center}
    \centerline{\includegraphics[width=0.98\textwidth]{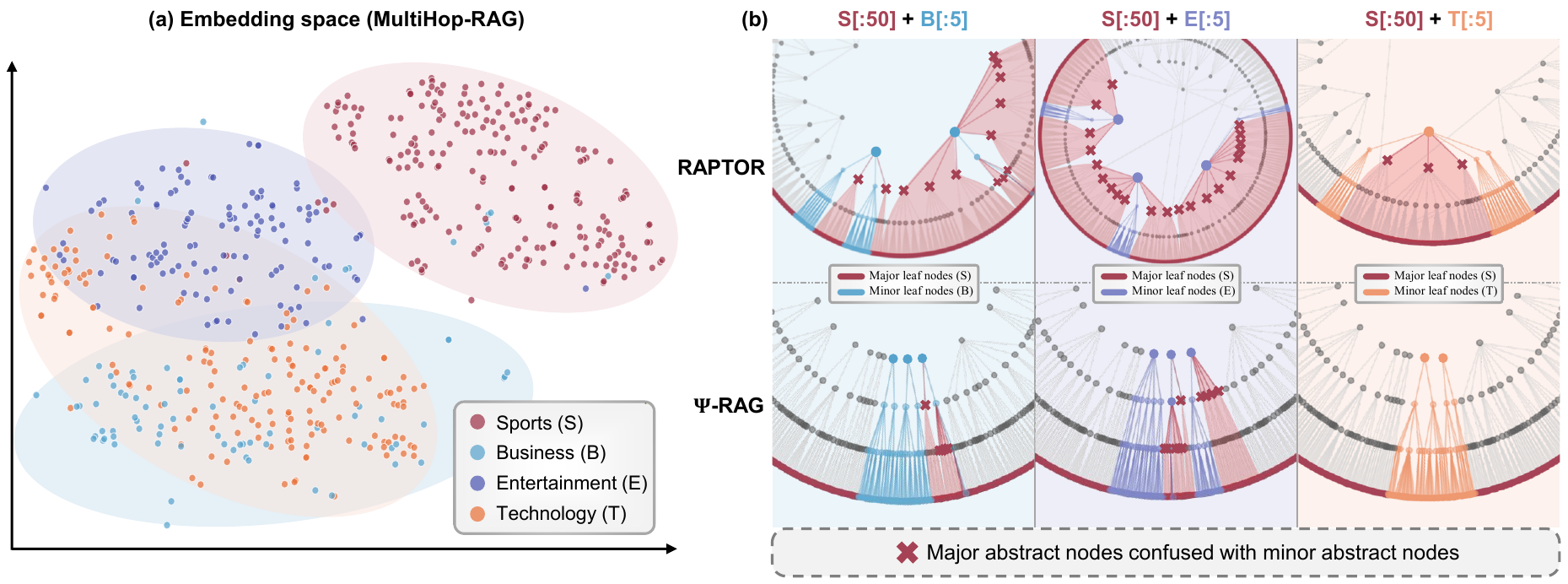}}
    \caption{
RAPTOR vs. $\Psi$-RAG on skewed distributions.
(a) 2-dimensional embedding space of the four main categories of MultiHop-RAG.
(b) Circular tree visualizations (only regions containing minor class nodes are shown). The outermost ring represents leaf nodes. For \textcolor{BrickRed}{S[:50]}+\textcolor{CornflowerBlue}{B[:5]}, \textcolor{BrickRed}{red} points represent ``\textcolor{BrickRed}{Sports}'' nodes, while \textcolor{CornflowerBlue}{blue} points and edges trace root-to-leaf paths for ``\textcolor{CornflowerBlue}{Business}'' nodes. The red shaded area indicates the retrieval noise introduced by confused nodes (\!\!{\small  \textcolor{BrickRed}{\faTimes}}\!\!). Comparing the number of confused nodes and the size of the shaded areas, $\Psi$-RAG organizes minor classes more effectively.
    }
    \label{fig:distributionadapt}
  \end{center}
\vskip -0.3in
\end{figure*}

Based on Dasgupta's cost, we analyze two scenarios: (1) whether $\Psi$-RAG tends to build a tree with uniform leaf distribution, and (2) whether it adaptively generates subtrees of varying sizes for minor clusters. Our findings are formalized in the following Theorems.

\begin{theorem}[Non-uniformity preference of $\Psi$-RAG] \label{thm:2}
Consider a 3-layer equi-depth multifurcating tree $\mathcal{T}$ with a perfectly uniform leaf distribution among its subtrees $\mathcal{T}_1, \cdots, \mathcal{T}_k$: $n_1=n_2=\cdots=n_k=n/k$, where $k \ge 3$, $n \ge 3k$. A new tree $\mathcal{T}'$ is constructed by moving one leaf from $\mathcal{T}_j$ to another subtree $\mathcal{T}_i$. Then: \\ %  such that $n'_i=n/k+1, n'_j=n/k-1$
(1) $c(\mathcal{T}') < c(\mathcal{T})$; \\
(2) Let $\mathcal{T} = \psi(\mathcal{T}_{\text{AHC}})$ and $\mathcal{T'}=\psi(\mathcal{T}'_{\text{AHC}})$, where $\psi(\cdot)$ denotes the collapsing process from a binary tree to an equi-depth multifurcating tree. Denote by $\Delta c$ and $\Delta c'$ the increments of Dasgupta's cost on $\mathcal{T}$ and $\mathcal{T}'$ introduced by $\psi(\cdot)$, respectively. Then we have $c(\mathcal{T}') - c(\mathcal{T}) = \Delta c' - \Delta c$.
\end{theorem}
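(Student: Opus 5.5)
The plan is to reduce Dasgupta's cost on a 3-layer equi-depth tree to a closed form in the subtree sizes $n_1,\dots,n_k$, and then compare $\mathcal{T}$ and $\mathcal{T}'$ by a discrete second difference. \Cref{def:dc} only gives a handle on sizes if distances are factored out. I will therefore work in the regime the statement implicitly uses: $d(\cdot,\cdot)$ is taken as constant (equivalently, replaced by its mean $\bar d$). Since $c(\mathcal{T})$ is linear in $d$, $\bar d$ is then a common positive factor and can be dropped. This is the main modelling step: for arbitrary distances the sign of $c(\mathcal{T}')-c(\mathcal{T})$ depends on which leaf is moved, so I will state the assumption explicitly. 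It parallels the independence assumption used after \cref{thm:1}.

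For part (1), note that in the 3-layer tree a pair $(u_a,u_b)$ lying in the same subtree $\mathcal{T}_i$ has its LCA at the middle node of $\mathcal{T}_i$, so $|\ell(\text{LCA})|=n_i$. Any other pair has the root as LCA, so $|\ell(\text{LCA})|=n$. Hence, up to the factor $\bar d$,
\begin{equation*}
c(\mathcal{T})=\sum_{i=1}^k n_i\binom{n_i}{2}+n\Big(\binom{n}{2}-\sum_{i=1}^k\binom{n_i}{2}\Big)=n\binom{n}{2}+\sum_{i=1}^k f(n_i),
\end{equation*}
where $f(m)=\tfrac12 m(m-1)(m-n)$. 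Moving one leaf from $\mathcal{T}_j$ to $\mathcal{T}_i$ with $n_i=n_j=m=n/k$ changes the cost by
\begin{equation*}
f(m+1)+f(m-1)-2f(m).
\end{equation*}
This is the second difference of the cubic $\tfrac12(m^3-(n+1)m^2+nm)$, which I expect to simplify to $3m-n-1$. With $m=n/k$ and $k\ge 3$ it equals $3n/k-n-1\le -1<0$, which gives $c(\mathcal{T}')<c(\mathcal{T})$. The hypothesis $n\ge 3k$ gives $m\ge 3$, so $n_j-1\ge 2$. This guarantees that $\mathcal{T}'$ is still a valid equi-depth multifurcating tree in which every internal node keeps at least two children.

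For part (2), write $c(\mathcal{T})=c(\mathcal{T}_{\text{AHC}})+\Delta c$ and $c(\mathcal{T}')=c(\mathcal{T}'_{\text{AHC}})+\Delta c'$, which is just the definition of the increments introduced by $\psi$. It then suffices to show $c(\mathcal{T}_{\text{AHC}})=c(\mathcal{T}'_{\text{AHC}})$. Under constant distances this follows from Dasgupta's observation that on a clique with unit weights every binary tree has the same cost, $(n^3-n)/3$. I would reprove this briefly by induction on the top split: splitting $n$ leaves into $(a,b)$ contributes $ab\cdot n$, and $(a^3-a)/3+(b^3-b)/3+abn=(n^3-n)/3$. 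Subtracting the two decompositions then gives $c(\mathcal{T}')-c(\mathcal{T})=\Delta c'-\Delta c$.

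The interpretation for the paper is that the only cost difference comes from the collapse step, and by part (1) collapsing toward non-uniform sizes is cheaper. I expect the main obstacle to be justifying the constant-distance reduction rather than any calculation; if a more general argument is wanted, I would first try assuming the moved leaf has equal average distance to both subtrees.
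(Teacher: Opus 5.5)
Your proposal is correct. Part (1) is the paper's own computation: unit distances, the closed-form cost of the 3-layer tree, and the difference $3m-n-1$. Writing it through $f(m)=\tfrac12 m(m-1)(m-n)$ and a second difference is only tidier bookkeeping.

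Part (2) takes a genuinely different route. The paper decomposes $\psi$ into leaf-node and abstract-node collapses (Lemma~\ref{lem:3}, Lemma~\ref{lem:4}). It sums the increments with the hockey-stick identity (Corollary~\ref{crl:5}) to get explicit values such as $\Delta c=k\binom{m}{3}+\binom{k}{3}m^3$, does the same for $\Delta c'$, and checks directly that $\Delta c'-\Delta c=3m-n-1$. You instead use Dasgupta's clique fact that every binary tree on $n$ leaves has unit-distance cost $(n^3-n)/3$. Hence $c(\mathcal{T}_{\text{AHC}})=c(\mathcal{T}'_{\text{AHC}})$, and (2) follows by subtraction from the definition $\Delta c=c(\psi(\mathcal{T}_{\text{AHC}}))-c(\mathcal{T}_{\text{AHC}})$.

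Your argument has several advantages:
\begin{itemize}
\item It is shorter and needs no collapse bookkeeping.
\item It does not depend on which preimage of $\psi$ is chosen, whereas the paper's sum implicitly fixes one collapse order and leans on the order-invariance remark.
\item It shows that, under unit distances, (2) holds for any two trees on the same $n$ leaves. In effect it turns the paper's closing remark $c(\mathcal{T})-\Delta c=c(\mathcal{T}')-\Delta c'$ from a consequence into a proof.
\end{itemize}
What the paper's route buys is explicit formulas for the increments in terms of the actual collapse operations of $\Psi$-RAG, which it reuses for \cref{thm:3}(2).

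Your observation also gives a free consistency check there. It forces parts (1) and (2) of \cref{thm:3} to have identical formulas. The appendix, however, gives $2n(n_j-n_i-1)+\cdots$ for (1) and $(n_j-n_i-1)(n-n_i-n_j-\frac{n_i+n_j}{2}+1)$ for (2). The factor $2n$ comes from counting the $\mathcal{T}_i$--$\mathcal{T}_j$ cross pairs twice. With the correct factor $n$ the two expressions coincide, and formally setting $n_i=n_j=m$ in either recovers your $3m-n-1$.
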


\begin{theorem}[Distribution adaptability of $\Psi$-RAG] \label{thm:3}
Consider a 3-layer equi-depth multifurcating tree $\mathcal{T}$ with subtrees $\mathcal{T}_1, \cdots, \mathcal{T}_k$ of any size where $k \ge 3$. A subtree of a minor leaf cluster $\mathcal{T}_i$ has $2 \le n_i < n_j, \forall 1 \le j \ne i \le k$. A new tree $\mathcal{T}'$ is constructed by moving one leaf from some subtree $\mathcal{T}_j$ to the minor subtree $\mathcal{T}_i$. Then: \\
(1) $c(\mathcal{T}') \ge c(\mathcal{T})$, and $c(\mathcal{T}') = c(\mathcal{T})$ iff $n_j=n_i + 1$; \\
(2) Let $\mathcal{T} = \psi(\mathcal{T}_{\text{AHC}})$,  $\mathcal{T'}=\psi(\mathcal{T}'_{\text{AHC}})$ and $\Delta c, \Delta c'$ defined as in \cref{thm:2}. Then we have $\Delta c' \ge \Delta c$, and $\Delta c' = \Delta c$ iff $n_j=n_i + 1$.
\end{theorem}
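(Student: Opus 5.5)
The plan is to compute $c(\cdot)$ in closed form for a 3-layer equi-depth multifurcating tree, using the same normalization of $d$ as in the proof of \cref{thm:2}. There, distances are treated as a common constant (w.l.o.g.\ $d\equiv 1$) so that the cost depends only on the cluster sizes $n_1,\dots,n_k$; I will state this assumption explicitly. In such a tree the root has children $\mathcal{T}_1,\dots,\mathcal{T}_k$, each an abstract node whose children are its $n_l$ leaves. Two leaves in the same $\mathcal{T}_l$ have LCA $root(\mathcal{T}_l)$ with $\vert\ell(\cdot)\vert=n_l$, and two leaves in different subtrees have LCA the root with $\vert\ell(\cdot)\vert=n$. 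Hence
\begin{equation*}
c(\mathcal{T})=n\binom{n}{2}+\sum_{l=1}^{k} g(n_l),\qquad g(m)=\tfrac12 m(m-1)(m-n),
\end{equation*}
since the cross pairs number $\binom{n}{2}-\sum_l\binom{n_l}{2}$.

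Moving one leaf from $\mathcal{T}_j$ to $\mathcal{T}_i$ changes only the $i$ and $j$ terms. The first-difference identity $g(m+1)-g(m)=\tfrac12 h(m)$ with $h(m)=m(3m+1-2n)$ gives
\begin{equation*}
c(\mathcal{T}')-c(\mathcal{T})=\tfrac12\big(h(n_i)-h(n_j-1)\big)=\tfrac12\,(n_i-n_j+1)\big(3(n_i+n_j-1)+1-2n\big).
\end{equation*}
The first factor is $\le 0$ because $n_i<n_j$, and it vanishes exactly when $n_j=n_i+1$; in that case the move merely swaps the two sizes, so equality is also clear directly. The second factor should therefore be shown to be $\le 0$, which is the condition $n_i+n_j\le (2n+2)/3$.

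This sign control is the main obstacle. I would use the hypotheses $k\ge 3$ and that $n_i$ is the strict minimum: every one of the $k-2$ other subtrees has at least $n_i+1$ leaves, so $n-n_i-n_j\ge (k-2)(n_i+1)$. This handles balanced configurations, but not a single dominant $\mathcal{T}_j$ with $n_j$ close to $n$. In that regime I would first try to sharpen the cost model, for example by using the AHC ordering to give the moved leaf a larger distance to the members of $\mathcal{T}_i$ than to those of $\mathcal{T}_j$, which adds a nonnegative term to $c(\mathcal{T}')-c(\mathcal{T})$. Failing that, I would add the mild size condition $n_i+n_j\le(2n+2)/3$ as the regime in which (1) is claimed.

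For (2), I would follow the argument of \cref{thm:2}(2). Write $c(\psi(\mathcal{T}_{\text{AHC}}))=c(\mathcal{T}_{\text{AHC}})+\Delta c$, where $\Delta c$ collects, for each pair, $d\cdot(\vert\ell(\text{LCA})\vert$ after collapse minus before$)$. Then show that the binary AHC trees underlying $\mathcal{T}$ and $\mathcal{T}'$ have equal cost under the same distance normalization, since both arise from the same sequence of $n-1$ merges on equidistant points. This yields $c(\mathcal{T}')-c(\mathcal{T})=\Delta c'-\Delta c$, and (1) then gives $\Delta c'\ge\Delta c$ with equality iff $n_j=n_i+1$.
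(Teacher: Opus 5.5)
Your closed form $c(\mathcal{T})=n\binom{n}{2}+\sum_l g(n_l)$ and your factorization are correct. The obstacle you hit is not a defect of your method: under the normalization $d\equiv 1$ that the paper fixes before Lemma~\ref{lem:4}, the statement is false for a general donor $\mathcal{T}_j$. With $S=n-n_i-n_j$, your formula reads $c(\mathcal{T}')-c(\mathcal{T})=\tfrac12(n_j-n_i-1)(2S+2-n_i-n_j)$. Take $k=3$ with sizes $2,3,100$ and let $\mathcal{T}_j$ be the size-$100$ subtree. Then $c(\mathcal{T})=548141>c(\mathcal{T}')=543582$ and $\Delta c=162301>\Delta c'=157742$, so (1) and (2) both fail. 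The equality clause fails even without a dominant subtree: sizes $2,3,6$ with $\mathcal{T}_j$ the size-$6$ subtree give $c(\mathcal{T}')=c(\mathcal{T})=497$ and $\Delta c'=\Delta c=57$, although $n_j\neq n_i+1$.

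The paper's proof reaches a positive sign through two slips. In (1) it writes the cross-pair contribution as $n_i(n-n_i)n+n_j(n-n_j)n$, which counts the $n_in_j$ pairs between $\mathcal{T}_i$ and $\mathcal{T}_j$ twice. This adds $n(n_j-n_i-1)$ to the difference and turns the correct factor $n+1-3n_i-\tfrac{3x}{2}$ (with $x=n_j-n_i$) into the always-positive $2n-3n_i-\tfrac{3x-2}{2}$. In (2) its formula $\Delta c'-\Delta c=(n_j-n_i-1)(n-n_i-n_j-\tfrac{n_i+n_j}{2}+1)$ is correct and equals yours. It is then evaluated at ``the minimum case $n=n_i+2n_j$'', which tacitly assumes every other subtree has at least $n_j$ leaves.

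So drop the fallback of sharpening the cost model. The paper's analysis is in the uniform-$d$ setting, and a distance perturbation would be a different theorem. Its extra term would still have to dominate $\tfrac12(n_j-n_i-1)(n_i+n_j-2S-2)$, and nothing in the hypotheses provides that. Commit to a corrected statement instead. Your condition works for the inequality, but the ``iff'' needs it strict, $3(n_i+n_j)<2n+2$; the $2,3,6$ example sits exactly on the boundary. The more natural repair, and the one the paper's (2) implicitly relies on, is to let $\mathcal{T}_j$ be a smallest non-minor subtree, $n_j\le n_o$ for all $o\ne i$. Then $S\ge (k-2)n_j\ge n_j$, so $n_i+n_j<2n_j\le 2S$, and both parts hold with equality iff $n_j=n_i+1$.

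For (2), reducing to (1) is a cleaner route than the paper's term-by-term use of Corollary~\ref{crl:5}, and it shows why both parts share one sign condition. However, your justification is wrong: $\mathcal{T}_{\text{AHC}}$ and $\mathcal{T}'_{\text{AHC}}$ do not arise from the same merge sequence. The correct reason is that with $d\equiv 1$ every binary tree on $n$ leaves has cost $(n^3-n)/3$. This follows by induction on $c(T)=c(T_L)+c(T_R)+n_Ln_R(n_L+n_R)$ over the two root subtrees. Hence $c(\mathcal{T}_{\text{AHC}})=c(\mathcal{T}'_{\text{AHC}})$ and $c(\mathcal{T}')-c(\mathcal{T})=\Delta c'-\Delta c$ exactly. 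Applied to the paper's correct formula for (2), this identity already contradicts its formula for (1).
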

See \cref{proof2} and \cref{proof3} for the proofs and remarks. The above theorems prove that $\Psi$-RAG incurs a larger increment in Dasgupta's cost for (1) a tree with perfectly uniform leaf distribution, and (2) a tree where nodes from major clusters are assigned to a minor cluster. They show that $\Psi$-RAG does not suffer from the uniform effect and, furthermore, adaptively preserves skewed distributions within corpora.

\subsection{Visualization Study on Skewed Data} \label{skewvisual}

We conduct a visualization study by constructing tree indexes with both RAPTOR and $\Psi$-RAG on skewed datasets to validate the above conclusions. Three skewed datasets are generated from MultiHop-RAG \cite{multihoprag}: each combines the first 5 documents from one of three categories: ``Business'', ``Entertainment'', or ``Technology'', with the  first 50 documents from the semantically distant ``Sports'' class, as shown in \cref{fig:distributionadapt}(a). We extract the bottom four layers of each tree 
% (as upper layers near the root are too coarse) 
and visualize them as circular trees. 

The results in \cref{fig:distributionadapt}(b) reveal that major abstract nodes in RAPTOR tend to be confused with ({\it i.e.}, share a parent with) minor ones especially at higher layers, which is a manifestation of the uniform effect. 
In contrast, $\Psi$-RAG assigns far fewer major abstract nodes to minor ones, resulting in an intuitively superior index structure. A detailed case study of the confused abstracts and its negative effect on retrieval is provided in Appendix \ref{treeablation}.

\section{Experiments}

\subsection{Experimental Setup}

{\bf Tasks and evaluation metrics.} Our evaluation encompasses four types of tasks.
(1) Single-hop and multi-hop QA (token-level fact answering) involves detailed facts within one or multiple passages under a {\it cross-document setting}. A word or a phrase is required as an answer. We use Recall@$k$ ($k=2,5$) for retrieval metrics and use Exact Match (EM) and F1 score for QA. 
(2) Narrative QA (passage-level causal reasoning) requires the model to identify causal relationships within a {\it single long document} for a narrative answer. Given the variable answer length, we employ F1 and the Recall-Oriented Understudy for Gisting Evaluation using the Longest Common Subsequence (ROUGE-L) as metrics.
(3) Document-level summarization requires a comprehensive understanding of a {\it single long document} in response to a fixed query: ``Summarize the contents of this meeting / news event''. 
Iterative retrieval and the sparse index are disabled for this task. 
Given the variable answer length and format, we use ROUGE-L as the metric.

\begin{table*}[t]
\caption{Single- and multi-hop QA results in EM (\%) and F1 (\%). Best results are in {\bf bold}. Second-bests are \underline{underlined}. }
\vskip -0.1in
\label{tab:tokenqa}
\begin{center}
\adjustbox{max width=\textwidth}{
\begin{tabular}{clcccccccccccc|cc}
\toprule[1.2pt]
\multirow{2}{*}{Type} & \multirow{2}{*}{Method} & \multicolumn{2}{c}{NQ} & \multicolumn{2}{c}{PopQA} & \multicolumn{2}{c}{HotpotQA} & \multicolumn{2}{c}{2Wiki} & \multicolumn{2}{c}{MuSiQue} & \multicolumn{2}{c}{{MultiHop-RAG}} & \multicolumn{2}{c}{{\it Average}} \\ \cmidrule(){3-16}
& & EM & F1 & EM & F1 & EM & F1 & EM & F1 & EM & F1 & EM & F1 & EM & F1 \\
\midrule
\multirow{4}{*}{\makecell[c]{Traditional\\RAG}} & BM25 & $45.4$ & $56.76$ & $40.9$ & $49.01$ & $43.8$ & $54.07$ & $34.7$ & $36.90$ & $13.0$ & $17.96$ & $2.5$ & $3.93$ & $23.55$ & $36.44$ \\
& DPR & $46.0$ & $59.09$ & $47.2$ & $55.89$ & $52.0$ & $63.10$ & $39.9$ & $43.13$ & $18.7$ & $24.03$ & $41.3$ & $42.72$ & $40.85$ & $47.99$ \\
& BM25 + DPR + Q3R & $48.2$ & $60.58$ & \underline{$47.2$} & $56.01$ & $58.4$ & $70.98$ & $41.6$ & $45.43$ & $28.3$ & $34.81$ & $52.9$ & $53.49$ & $46.10$ & $53.55$ \\ 
& DPR + IRCoT + Q3R & \underline{$50.1$} & $62.71$ & $\mathbf{47.3}$ & $\mathbf{56.98}$ & $59.1$ & $71.00$ & $48.9$ & $53.44$ & $29.7$ & $37.36$ & $41.7$ & $42.28$ & $46.13$ & $53.96$ \\ \midrule
& GraphRAG & $36.0$ & $50.85$ & $32.7$ & $42.13$ & $54.9$ & $66.30$ & $52.2$ & $56.62$ & $25.8$ & $30.70$ & $37.3$ & $39.08$ & $39.82$ & $47.61$ \\
Graph-RAG & HippoRAG 2 & $47.4$ & $62.42$ & $43.4$ & $55.90$ & $61.9$ & $\mathbf{75.40}$ & $64.8$ & $71.35$ & $37.5$ & \underline{$48.00$} & $17.7$ & $19.36$ & $45.45$ & $55.41$ \\
& HippoRAG 2 + Q3R & $45.7$ & $60.99$ & $43.3$ & $56.05$ & $58.7$ & $71.52$ & $60.1$ & $66.00$ & $32.9$ & $43.27$ & $17.6$ & $19.37$ & $43.05$ & $52.87$ \\ \midrule
& RAPTOR + Q3R & $43.5$ & $55.40$ & $19.8$ & $25.76$ & $43.1$ & $52.54$ & $19.1$ & $22.16$ & $16.6$ & $21.10$ & $42.5$ & $43.20$ & $30.77$ & $36.69$ \\
\rowcolor{gray!25}
\cellcolor{white} \makecell[c]{Tree-RAG} & $\Psi$-RAG \!\!{\small \faFile*[regular]}\!\! & $50.0$ & \underline{$63.45$} & $46.7$ & \underline{$56.74$} & \underline{$62.1$} & $74.59$ & $\mathbf{69.1}$ & \underline{$76.73$} & $\mathbf{38.7}$ & $\mathbf{48.90}$ & $\mathbf{55.3}$ & $\mathbf{56.23}$ & $\mathbf{53.65}$ & $\mathbf{62.77}$ \\
\rowcolor{gray!25}
\cellcolor{white} & $\Psi$-RAG \!\!{\small \faKey}\!\! & $\mathbf{50.6}$ & $\mathbf{63.58}$ & $46.1$ & $55.88$ & $\mathbf{62.2}$ & \underline{$74.85$} & \underline{$69.0$} & $\mathbf{76.94}$ & \underline{$37.9$} & $47.83$ & \underline{$54.1$} & \underline{$55.18$} & \underline{$53.32$} & \underline{$62.38$} \\
\bottomrule
\end{tabular}
}
\end{center}
\end{table*}

\begin{table*}[t]
\caption{Retrieval results for single- and multi-hop QA in Recall@2 (R@2\%) and Recall@5 (R@5\%). Best results are in {\bf bold}. Second-bests are \underline{underlined}. }
\vskip -0.1in
\label{tab:tokenrecall}
\begin{center}
\adjustbox{max width=\textwidth}{
\begin{tabular}{clcccccccccc|cc}
\toprule
\multirow{2}{*}{Type} & \multirow{2}{*}{Method} & \multicolumn{2}{c}{NQ} & \multicolumn{2}{c}{PopQA} & \multicolumn{2}{c}{HotpotQA} & \multicolumn{2}{c}{2Wiki} & \multicolumn{2}{c}{MuSiQue} & \multicolumn{2}{c}{{\it Average}} \\ \cmidrule(){3-14}
& & R@2 & R@5 & R@2 & R@5 & R@2 & R@5 & R@2 & R@5 & R@2 & R@5 & R@2 & R@5 \\
\midrule
\multirow{4}{*}{\makecell[c]{Traditional\\RAG}} & BM25 & $28.20$ & $56.03$ & $23.90$ & $35.70$ & $57.25$ & $74.80$ & $54.70$ & $64.98$ & $28.78$ & $35.40$ & $38.57$ & $53.38$ \\
& DPR & $27.13$ & $69.51$ & $20.85$ & $49.70$ & $35.25$ & $86.75$ & $32.30$ & $71.03$ & $33.11$ & $48.08$ & $29.73$ & $65.01$ \\
& BM25 + DPR + Q3R & $\mathbf{46.29}$ & \underline{$76.63$} & $42.90$ & $50.55$ & $85.70$ & $93.80$ & $68.95$ & $74.52$ & $50.92$ & $67.35$ & $58.95$ & $72.57$ \\ 
& DPR + IRCoT + Q3R & \underline{$46.16$} & $76.10$ & $42.75$ & $50.45$ & $84.50$ & $92.40$ & $69.20$ & $74.08$ & $50.90$ & $66.02$ & $58.70$ & $71.81$ \\ \midrule
\multirow{2}{*}{\makecell[c]{Graph-RAG}} & HippoRAG 2 & $45.32$ & $\mathbf{77.11}$ & $44.30$ & \underline{$50.75$} & $84.05$ & $94.40$ & $75.40$ & $89.92$ & $55.67$ & $73.77$ & $60.95$ & $77.19$ \\
& HippoRAG 2 + Q3R & $42.09$ & $73.68$ & $\mathbf{46.60}$ & $50.45$ & $74.70$ & $90.65$ & $70.00$ & $80.73$ & $49.02$ & $64.64$ & $56.48$ & $72.03$ \\
\midrule
& RAPTOR + Q3R & $41.56$ & $68.48$ & $21.20$ & $23.80$ & $67.10$ & $72.25$ & $40.93$ & $44.57$ & $33.29$ & $42.26$ & $40.82$ & $50.27$ \\
\rowcolor{gray!25}
\cellcolor{white} Tree-RAG & $\Psi$-RAG \!\!{\small \faFile*[regular]}\!\! & $46.08$ & $75.72$ & \underline{$43.35$} & $\mathbf{51.30}$ & $\mathbf{88.15}$ & $\mathbf{96.00}$ & $\mathbf{81.35}$ & $\mathbf{96.13}$ & $\mathbf{57.21}$ & $\mathbf{75.77}$ & $\mathbf{63.23}$ & $\mathbf{78.98}$ \\
\rowcolor{gray!25}
\cellcolor{white} & $\Psi$-RAG \!\!{\small \faKey}\!\! & $45.88$ & $75.13$ & $42.60$ & $50.55$ & \underline{$88.10$} & \underline{$95.80$} & \underline{$81.23$} & \underline{$95.85$} & \underline{$55.69$} & $75.08$ & \underline{$62.70$} & \underline{$78.48$} \\
\bottomrule[1.2pt]
\end{tabular}
}
\end{center}
\end{table*}

{\bf Datasets.} Each task is evaluated on multiple benchmark datasets. For single-hop and multi-hop QA, we mainly follow HippoRAG 2 \cite{hipporag2} and use 1,000 randomly sampled queries from (1) single-hop: NQ \cite{nq} and PopQA \cite{popqa}; (2) multi-hop: HotpotQA \cite{hotpotqa}, 2Wiki (2WikiMultihopQA) \cite{2wiki}, and MuSiQue \cite{musique}. We also include the first 1,000 questions from MultiHop-RAG \cite{multihoprag}, a multi-hop benchmark containing negative samples. For narrative QA, 10 long documents 
% and their corresponding 293 queries 
are randomly sampled from NarrativeQA \cite{narrativeqa}. We also use the first 100 documents from the LongBook subset in $\infty$Bench \cite{inftybench}, a narrative QA dataset featuring significantly longer documents. For summarization, we adopt QMSum \cite{qmsum} and WCEP \cite{wcep}. Overly short documents are filtered out following \cite{gor}. Further dataset details are provided in Appendix \ref{setupapp}.

{\bf Baselines.} Our baselines cover various RAG paradigms. (1) Traditional RAG: the sparse retriever BM25 \cite{bm25}, the dense vector retriever DPR \cite{dpr} with Faiss \cite{faiss}, and the strong hybrid retriever BM25 + DPR equipped with a reranker. Another baseline involves DPR equipped with an iterative agentic retriever IRCoT \cite{ircot}. (2) Graph-RAG: GraphRAG from Microsoft
% \footnote{We use Nano-GraphRAG (\url{https://github.com/gusye1234/nano-graphrag}) as the implementation.}
\cite{graphrag}, the state-of-the-art framework HippoRAG 2\footnote{
% As NV-Embed-V2 is not compatible with Q3R, 
We report both the original setting (with NV-Embed-V2 \cite{nvembed}) and our setting with Q3-8B and Q3R.} \cite{hipporag2}, 
and GoR \cite{gor}, a new summarization-exclusive Graph-RAG framework. 
(3) Tree-RAG: RAPTOR \cite{raptor}, the only open-source framework that can be technically implemented to a corpus-level index.

{\bf Implementation details.} 
$\Psi$-RAG (and all dense retrieval baselines) by default uses Qwen3-Embedding-8B (Q3-8B) \cite{qwen3embedding} as the encoder and Llama-3.3-70B (L3-70B) \cite{llama} as the abstraction and R\&A agent. We use BM25S \cite{bm25s} as the BM25 implementation. $\Psi$-RAG is equipped with Qwen3-Reranker-8B (Q3R) to fuse retrieval results from sparse and dense retrievers. For a fair comparison, most baselines are also equipped with Q3R. 
Detailed configurations and instruction prompts are provided in Appendix \ref{setupapp}. 

\subsection{Single- and Multi-Hop QA} \label{tokentask}

We use the symbols ``\!{\small \faFile*[regular]}\!'' and ``\!{\small \faKey}\!'' to denote $\Psi$-RAG with summative and keyword abstract, respectively. Retrieval and QA results on single-hop and multi-hop datasets are presented in \cref{tab:tokenqa} and \ref{tab:tokenrecall}, yielding several key observations.

(1) {\bf Traditional baselines excel at single-hop QA.} For factual questions with little context such as NQ and PopQA, DPR and its hybrid combination with BM25 achieve recall and accuracy competitive with HippoRAG 2 and $\Psi$-RAG. In such cases, answers often lie in documents with frequent query-related entities or strong semantic correlation. 

(2) {\bf Multi-hop QA is the domain of iterative agentic RAG and structured RAG.} Both IRCoT and HippoRAG 2 outperform the hybrid retriever BM25 + DPR on 3 out of 4 multi-hop datasets, with the highest F1 gains exceeding 20\%. This underscores the necessity of modeling implicit entity relationships for multi-hop reasoning. 

(3) {\bf RAPTOR's effectiveness diminishes at corpus scale.} On 5 out of 6 datasets, RAPTOR + Q3R underperforms DPR without a reranker, showing degradations exceeding 25\% in Recall@5 and 30\% in F1 on PopQA. 
This highlights a key limitation of traditional Tree-RAG:
as the index space expands, performance on factual QA is severely hampered by coarse-grained abstracts. 

(4) {\bf $\Psi$-RAG achieves strong performance across both task types.} Compared to RAPTOR, $\Psi$-RAG achieves average gains of 25.9\% in F1 and 23.7\% in Recall@5, irrespective of the abstraction type. Notably, $\Psi$-RAG surpasses RAPTOR by up to 54.78\% in F1 on 2Wiki. To our knowledge, $\Psi$-RAG is the first Tree-RAG to surpass advanced Graph-RAG, outperforming HippoRAG 2 by 7.4\% and HippoRAG 2 + Q3R by 9.9\% in average F1. 
We also find that the summative abstract slightly outperforms the keyword abstract. This indicates that logical relationships between entities can effectively compensate for abstract coarseness. 

\begin{table}[t]
\caption{Narrative QA results in F1 (\%) and ROUGE-L (R-L\%), and summarization results in ROUGE-L (R-L\%). Best results are in {\bf bold}. Second-bests are \underline{underlined}. }
\vskip -0.1in
\label{tab:highlevel}
\begin{center}
\adjustbox{max width=1.0\columnwidth}{
\begin{tabular}{lcccccc}
\toprule[1.2pt]
\multirow{2}{*}{Method} & \multicolumn{2}{c}{NarrativeQA} & \multicolumn{2}{c}{$\infty$-LongBook} & QMSum & WCEP \\ \cmidrule(){2-7}
& F1 & R-L & F1 & R-L & R-L & R-L \\
\midrule
BM25 & $13.95$ & $14.07$ & $14.87$ & $15.54$ & $15.63$ & $16.08$ \\
DPR & $17.78$ & $17.87$ & $21.51$ & $23.60$ & $13.43$ & $8.14$ \\
BM25 + DPR + Q3R & $20.30$ & $20.37$ & $24.46$ & $24.87$ & $13.42$ & $8.14$ \\
\midrule
GoR + Q3R & -- & -- & -- & -- & $15.98$ & $13.98$ \\
\midrule
RAPTOR + Q3R & $21.36$ & $21.61$ & $13.90$ & $15.85$ & $16.53$ & $18.33$ \\
\rowcolor{gray!25}
$\Psi$-RAG \!\!{\small \faFile*[regular]}\!\! & $\mathbf{27.84}$ & $\mathbf{28.43}$ & \underline{$25.52$} & \underline{$27.11$}  & $\mathbf{16.87}$  & $\mathbf{20.28}$  \\
\rowcolor{gray!25}
$\Psi$-RAG \!\!{\small \faKey}\!\! & \underline{$27.66$} & \underline{$28.35$} & $\mathbf{28.44}$ & $\mathbf{28.36}$ & \underline{$16.65$} & \underline{$18.81$} \\
\bottomrule[1.2pt]
\end{tabular}
}
\end{center}
\vskip -0.2in
\end{table}

\subsection{Narrative QA and Summarization} \label{highlevel}

\cref{tab:highlevel} shows the performance of narrative QA and summarization. We derive the following observations:

(1) {\bf Traditional baselines struggle to capture causal relationships and high-level semantics.} Unlike in token-level QA, BM25 and DPR achieve the lowest performance in most cases. This indicates that simple keyword matching and vector similarity  are insufficient for passage-level and document-level user demands. On the contrary, structured indexes of RAPTOR and GoR empower the agent with high-level information and thus higher performance on passage- and document-level tasks.  

(2) {\bf $\Psi$-RAG excels in both narrative and summarization tasks.} It consistently outperforms all baseline methods across both settings, surpassing RAPTOR by an average of 9.7\% in F1 on NarrativeQA and $\infty$-LongBook. It also surpasses GoR by 3.2\% in average ROUGE-L.

\begin{figure}[h]
  \begin{center}
    \centerline{\includegraphics[width=0.8\columnwidth]{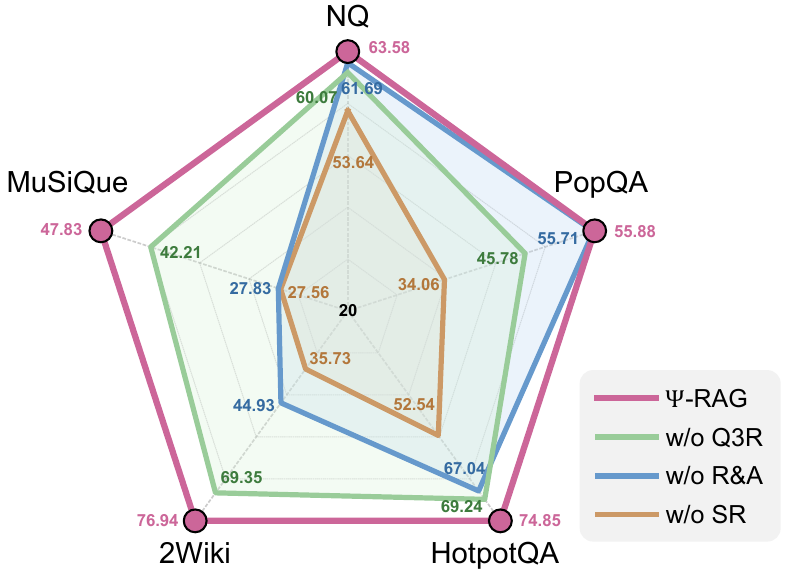}}
    \caption{
      Ablation studies in F1 (\%). 
    }
    \label{fig:ablation}
  \end{center}
  \vskip -0.3in 
\end{figure}

\subsection{Ablation Studies} \label{ablation}

\cref{fig:ablation} presents the QA performance of $\Psi$-RAG with different components ablated: the R\&A agent, sparse retrieval (SR), and reranker (Q3R). We use RRF instead of the reranker in the ``w/o Q3R'' setting. We have some key observations:

(1) {\bf $\Psi$-RAG achieves peak performance when all components are combined together.} Our R\&A agent consistently improves performance on multi-hop QA tasks with an average gain of $\sim$20\%. The contribution of sparse retrieval is also significant, especially on datasets like PopQA (+21.82\%) and 2Wiki (+41.21\%) with large amounts of short factual questions related to specific names. This validates its effectiveness in mitigating the coarse abstraction of the tree index. Furthermore, despite additional computational and storage costs, a large reranker integrates retrievers more effectively than the non-parametric RRF, but it does not provide a core performance contribution in $\Psi$-RAG.

(2) {\bf R\&A agent is also beneficial for single-hop QA. } For example, it brings a 1.89\% F1 gain on NQ. This benefit stems from its query reorganization mechanism, which improves retrieval for queries with insufficient context.

\begin{table}[t]
\caption{Query reorganization (QR) ablation in F1 (\%). Best results are in \textbf{bold}.}
\vskip -0.1in
\label{tab:reorg}
\begin{center}
\begin{small}
\adjustbox{max width=1.0\columnwidth}{
\begin{tabular}{lllll}
\toprule[1.2pt]
& HotpotQA & 2Wiki & MuSiQue & MultiHop-RAG \\
\midrule
\rowcolor{gray!25}
$\Psi$-RAG & $\mathbf{74.85}$ & $\mathbf{76.94}$ & $\mathbf{47.83}$ & $\mathbf{55.18}$ \\
w/o QR & $74.17$ (\textcolor{Red}{$\downarrow 0.68$}) & $76.71$ (\textcolor{Red}{$\downarrow 0.23$}) & $45.82$ (\textcolor{Red}{$\downarrow 2.01$}) & $55.03$ (\textcolor{Red}{$\downarrow 0.15$})\\
\bottomrule[1.2pt]
\end{tabular}
}
\vskip -0.1in
\end{small}
\end{center}
\end{table}

{\bf Query reorganization.} 
\cref{tab:reorg} evaluates a counterpart of $\Psi$-RAG without query reorganization, that is, the R\&A agent is no longer encouraged to reorganize and enrich their generated queries at each retrieval attempt. This leads to performance degradation across multi-hop datasets, with a 2\% F1 drop on MuSiQue. Therefore, the thematic keywords and high-level context introduced by query reorganization contribute to accurate hybrid retrieval of $\Psi$-RAG.

\sethlcolor{gray!25}

\begin{table}[ht]
\caption{Experiment results of $\Psi$-RAG with different open-source backbones. Replaced components are \hl{shaded}. }
\label{tab:local}
\begin{center}
\adjustbox{max width=\linewidth}{
\begin{tabular}{llllcccccc}
\toprule[1.2pt]
\multirow{2.5}{*}{Embedding} & \multirow{2.5}{*}{Abstraction} & \multirow{2.5}{*}{R\&A} & \multirow{2.5}{*}{Reranker} & \multicolumn{2}{c}{HotpotQA} & \multicolumn{2}{c}{2Wiki} & \multicolumn{2}{c}{MuSiQue} \\
\cmidrule(){5-6} \cmidrule(){7-8}  \cmidrule(){9-10} 
& & & & R@5 & F1 & R@5 & F1 & R@5 & F1 \\
\midrule
Q3-8B & L3-70B & L3-70B & Q3R & $95.80$ & $74.85$ & $95.85$ & $76.94$ & $75.08$ & $47.83$ \\
\cellcolor{gray!25}Q3-0.6B & L3-70B & L3-70B & Q3R & $94.40$ & $74.48$ & $95.55$ & $75.94$ & $73.42$ & $46.57$ \\
Q3-8B & \cellcolor{gray!25}L3-1B & L3-70B & Q3R & $94.45$ & $73.23$ & $95.77$ & $76.78$ & $73.14$ & $47.00$ \\
Q3-8B & \cellcolor{gray!25}L3-8B & L3-70B & Q3R & $93.65$ & $73.72$ & $95.83$ & $76.30$ & $74.29$ & $47.46$ \\
Q3-8B & L3-70B & \cellcolor{gray!25}G3-27B & Q3R & $95.00$ & $73.65$ & $94.05$ & $71.78$ & $71.80$ & $49.30$ \\
Q3-8B & L3-70B & L3-70B & \cellcolor{gray!25}BGE & $95.10$ & $74.32$ & $95.80$ & $76.85$ & $72.46$ & $48.40$ \\
\bottomrule[1.2pt]
\end{tabular}
}
\end{center}
\vskip -0.1in
\end{table}

\begin{table}[ht]
\caption{Experiment results of $\Psi$-RAG with different closed-source R\&A backbones.}
\label{tab:api}
\begin{center}
\adjustbox{max width=0.8\linewidth}{
\begin{tabular}{lcccc}
\toprule[1.2pt]
\multirow{2.5}{*}{R\&A} & \multicolumn{2}{c}{HotpotQA} & \multicolumn{2}{c}{MuSiQue} \\ \cmidrule(){2-3} \cmidrule(){4-5} 
& R@5 & F1 & R@5 & F1 \\
\midrule
\smash{\raisebox{-2pt}{\includegraphics[scale=0.12]{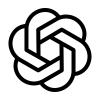}}}~GPT-5-mini & $94.40$ & $78.10$ & $72.67$ & $49.48$ \\
\smash{\raisebox{-2pt}
{\includegraphics[scale=0.02]{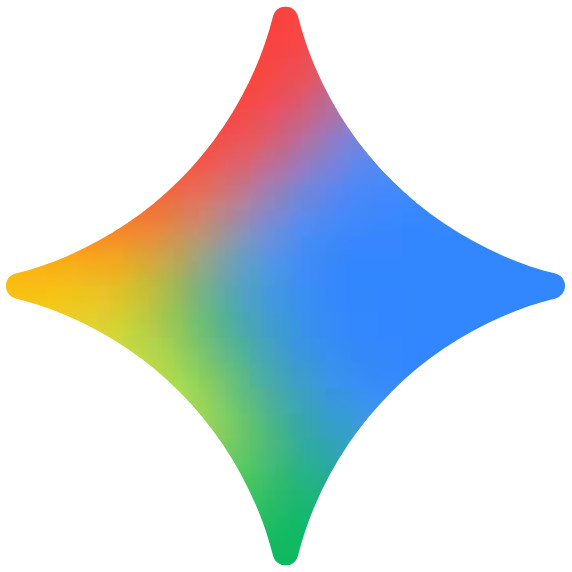}}}~Gemini-2.5-Flash & $96.00$ & $77.18$ & $77.06$ & $51.86$ \\
\bottomrule[1.2pt]
\end{tabular}
}
\end{center}
\vskip -0.1in
\end{table}

\sethlcolor{yellow}
\begin{table*}[t]
\caption{Time costs of different structured RAG. All experiments follow the settings in Appendix \ref{setupapp}. The embedding time of Q3-8B for HotpotQA and MuSiQue is 277.3s and 309.0s, respectively. L3-70B is used by default for abstraction of both RAPTOR and $\Psi$-RAG. ``\textcolor{Green}{$\uparrow$}'' and ``\textcolor{Red}{$\downarrow$}'' indicate a distinguishable higher and lower efficiency compared to RAPTOR + Q3R, respectively.}
\vskip -0.1in
\label{tab:efficiency}
\begin{center}
\adjustbox{max width=1.0\textwidth}{
\begin{tabular}{cllllll}
\toprule[1.2pt]
\multirow{2.5}{*}{Method} & \multicolumn{2}{c}{Indexing time} & \multicolumn{2}{c}{Average single retrieval time} & \multicolumn{2}{c}{Average QA time} \\
\cmidrule(){2-3} \cmidrule(){4-5} \cmidrule(){6-7}
 & \makecell[c]{HotpotQA} & \makecell[c]{MuSiQue} & \makecell[c]{HotpotQA} & \makecell[c]{MuSiQue} & \makecell[c]{HotpotQA} & \makecell[c]{MuSiQue} \\
\midrule
HippoRAG 2 & \makecell[l]{OpenIE: $105,360$s \\ Graph indexing: $32,187$s \\ \textbf{Total}: $137,547$s \textcolor{Red}{$\downarrow \downarrow$}} & \makecell[l]{OpenIE: $128,192$s \\ Graph indexing: $20,668$s \\ \textbf{Total}: $148,860$s \textcolor{Red}{$\downarrow \downarrow$}} & Graph retrieval: $10.44$s \textcolor{Red}{$\downarrow \downarrow$} & Graph retrieval: $8.94$s \textcolor{Red}{$\downarrow \downarrow$} & $19.24$s \textcolor{Red}{$\downarrow$} & $17.56$s \textcolor{Red}{$\downarrow$} \\ \midrule
RAPTOR + Q3R & \makecell[l]{Tree indexing: $1,123$s \\ Abstraction: $5,509$s \\ \textbf{Total}: $6,632$s} & \makecell[l]{Tree indexing: $1,684$s \\ Abstraction: $7,570$s \\ \textbf{Total}: $9,254$s} & \makecell[l]{Tree retrieval: $0.22$s \\ Reranking: $0.98$s \\ \textbf{Total}: $1.20$s} & \makecell[l]{Tree retrieval: $0.22$s \\ Reranking: $2.17$s \\ \textbf{Total}: $2.39$s} & Single retrieval: $8.03$s & Single retrieval: $12.71$s \\ \midrule
\makecell[c]{$\Psi$-RAG} & \makecell[l]{Tree indexing: $124$s \textcolor{Green}{$\uparrow$} \\ Abstraction: $10,425$s \textcolor{Red}{$\downarrow$} \\ \textbf{Total}: $10,549$s \textcolor{Red}{$\downarrow$}} & \makecell[l]{Tree indexing: $258$s \textcolor{Green}{$\uparrow$} \\ Abstraction: $11,899$s \textcolor{Red}{$\downarrow$} \\ \textbf{Total}: $12,157$s \textcolor{Red}{$\downarrow$}} & \makecell[l]{Tree retrieval: $0.19$s \\ Sparse retrieval: $<0.01$s \\ Reranking: $0.79$s \\ \textbf{Total}: $0.98$s \textcolor{Green}{$\uparrow$}} & \makecell[l]{Tree retrieval: $0.19$s \\ Sparse retrieval: $<0.01$s \\ Reranking: $1.07$s \textcolor{Green}{$\uparrow$} \\ \textbf{Total}: $1.26$s \textcolor{Green}{$\uparrow$}} & Agentic retrieval: $8.49$s & Agentic retrieval: $19.66$s \textcolor{Red}{$\downarrow$} \\ \midrule
\makecell[c]{$\Psi$-RAG\\(Abstraction: L3-8B;\\R\&A Agent: G3-27B;\\Reranker: BGE)} & \makecell[l]{Tree indexing: $128$s \textcolor{Green}{$\uparrow$} \\ Abstraction: $2,058$s \textcolor{Green}{$\uparrow$} \\ \textbf{Total}: $2,186$s \textcolor{Green}{$\uparrow$}} & \makecell[l]{Tree indexing: $265$s \textcolor{Green}{$\uparrow$} \\ Abstraction: $2,222$s \textcolor{Green}{$\uparrow$} \\ \textbf{Total}: $2,487$s \textcolor{Green}{$\uparrow$}} & \makecell[l]{Tree retrieval: $0.19$s \\ Sparse retrieval: $<0.01$s \\ Reranking: $0.56$s \textcolor{Green}{$\uparrow$} \\ \textbf{Total}: $0.75$s \textcolor{Green}{$\uparrow$}} & \makecell[l]{Tree retrieval: $0.19$s \\ Sparse retrieval: $<0.01$s \\ Reranking: $0.37$s \textcolor{Green}{$\uparrow$} \\ \textbf{Total}: $0.56$s \textcolor{Green}{$\uparrow$}} & Agentic retrieval: $5.45$s \textcolor{Green}{$\uparrow$} & Agentic retrieval: $7.17$s \textcolor{Green}{$\uparrow$} \\
\bottomrule[1.2pt]
\end{tabular}
}
\end{center}
\end{table*} 

\subsection{Backbone Flexibility} \label{backbone}

$\Psi$-RAG can flexibly accommodate different backbone language models for its embedding model, abstraction agent, R\&A agent, and reranker. We select various open-source models, including (1) Qwen3-Embedding-0.6B (Q3-0.6B) \cite{qwen3embedding} as the embedding model; (2)  Llama-3.2-1B (L3-1B) and Llama-3.1-8B (L3-8B) \cite{llama} as the abstraction agent; (3) Gemma-3-27B (G3-27B)  \cite{gemma} as the R\&A agent; and (4) BGE-Reranker-Large (BGE) \cite{bge} as the reranker. The results are shown in \cref{tab:local}. We observe that these models, especially L3-1B/8B and BGE, serve as effective lightweight alternatives, as they achieve comparable or even superior performance with lower costs. 

We also employ two popular closed-source LLMs: \smash{\raisebox{-2pt}{\includegraphics[scale=0.1]{fig/icon/gpt.png}}}~GPT-5-Mini \cite{gpt5} and \smash{\raisebox{-2pt}
{\includegraphics[scale=0.016]{fig/icon/gemini.png}}}~Gemini-2.5-Flash \cite{gemini}, as the R\&A agent in $\Psi$-RAG. The results are presented in \cref{tab:api}. 
Even with a much smaller size than L3-70B, both models yield certain improvements in F1 score. 
For instance, employing Gemini-2.5-Flash as the R\&A agent achieves a notably high F1 score of 51.86\%.

\subsection{Efficiency Analysis} \label{efficiency}

As $\Psi$-RAG is based on AHC, the computation, sorting, and storage of the similarity matrix introduce time and space complexities of $O(n^2 \log n)$ and $O(n^2)$, respectively\footnote{We omit the vector dimension $d$ as $d \ll n$ in the real-world scenario.}. 
Similarly, the iterative merging and collapse process takes $O(n^2)$ time. 
However, the highly optimized matrix multiplication and sorting techniques make our proposed framework still efficient for a domain-specific index. 

\cref{tab:efficiency} compares the time efficiency of $\Psi$-RAG with other baseline methods. Our abstract tree indexing is very fast, taking only 258 seconds on MuSiQue with 1.3 million tokens (the similarity ranking process only takes $\sim$30s), which is 6.5$\times$ faster than RAPTOR and considerably more efficient than HippoRAG 2 on domain-specific corpora\footnote{Existing Graph-RAG papers tend not to include OpenIE extraction in the calculation of indexing time. This may lead to a significant inconsistency between the reported time and the actual running time.}. 
However, as $\Psi$-RAG includes more abstract nodes, it takes more time for abstraction. 
For retrieval efficiency, the top-down search algorithm inherently has $O(\log n)$ complexity, resulting in lower average retrieval latency compared to HippoRAG 2. 
While $\Psi$-RAG's iterative agentic retrieval process increases the latency, it is proportional to the complexity of the user demand (one additional retrieval attempt opportunity only brings an increment of 0.46s compared to RAPTOR on HotpotQA), so we believe that it is justified in quality improvement for complex queries.  
% Given the flexibility of $\Psi$-RAG, the abstraction and generation latency can also be mitigated by employing more lightweight LLMs. As shown in \cref{tab:efficiency}, L3-8B and G3-27B greatly accelerate $\Psi$-RAG with minimal performance loss.

Considering the limitations of similarity ranking when applied to industrial-scale corpora with 10M+ and even 100M+ tokens, we further propose two efficient tree-building extensions for $\Psi$-RAG: {\bf Bucketing} and {\bf Approximate Nearest Neighbor Search with Hierarchical Navigable Small World (HNSW)}~\cite{hnsw}. See Appendix \ref{extensions} for details. 

{\bf More experiment results.}
Appendix \ref{experimentapp} provides more extensive results, including case studies, failure studies, and more analyses towards the effectiveness of $\Psi$-RAG components. 

\section{Conclusion}

We propose $\Psi$-RAG, a hierarchical Tree-RAG framework supporting corpus-level indexing and cross-document multi-hop user questions. 
By a hierarchical clustering-based abstract tree index and a multi-granular agentic retriever, 
$\Psi$-RAG addresses key limitations of $k$-means-type tree indexes such as poor distribution adaptability, structural isolation, and coarse abstraction. 
Extensive experimental results demonstrate that $\Psi$-RAG achieves substantial gains across tasks of diverse granularities, promoting the development of tree-RAG in the rapidly evolving era of LLMs.

\section*{Acknowledgements}

This work was partially supported by the Guangdong Provincial Natural Science Foundation General Program (Grant No. 2026A1515012118).

We thank Linxiao Cao for his assistance in the comparative experiment. We also thank the anonymous reviewers, the Associate Chair, the Program Chair, etc. for their constructive feedback and recognition of this work.

\section*{Impact Statement}

This work focuses on retrieval-augmented generation (RAG) for large language models (LLMs) with a structured index. While this work may have broad applications in knowledge‑intensive natural language processing tasks, we do not identify any societal impacts that extend beyond those common to LLMs and retrieval systems.

% In the unusual situation where you want a paper to appear in the
% references without citing it in the main text, use \nocite
% \nocite{langley00}

\bibliography{ref}
\bibliographystyle{icml2026}

%%%%%%%%%%%%%%%%%%%%%%%%%%%%%%%%%%%%%%%%%%%%%%%%%%%%%%%%%%%%%%%%%%%%%%%%%%%%%%%
%%%%%%%%%%%%%%%%%%%%%%%%%%%%%%%%%%%%%%%%%%%%%%%%%%%%%%%%%%%%%%%%%%%%%%%%%%%%%%%
% APPENDIX
%%%%%%%%%%%%%%%%%%%%%%%%%%%%%%%%%%%%%%%%%%%%%%%%%%%%%%%%%%%%%%%%%%%%%%%%%%%%%%%
%%%%%%%%%%%%%%%%%%%%%%%%%%%%%%%%%%%%%%%%%%%%%%%%%%%%%%%%%%%%%%%%%%%%%%%%%%%%%%%
\newpage
\appendix
\onecolumn
\renewcommand{\contentsname}{Appendix Contents}
\tableofcontents
\let\addcontentsline\OriginalAddContentsLine
\newpage
\twocolumn
\section{Algorithms of $\Psi$-RAG} \label{alg}
\cref{alg:index} shows the algorithm of Abstract Tree Indexing, while \cref{alg:retrieve} and \cref{alg:agentretrieve} show the algorithm of Multi-granular Agentic Retrieval. 

\begin{algorithm}
  \caption{Abstract Tree Indexing}
  \label{alg:index}
  \begin{algorithmic}
    \STATE {\bfseries Input:} pairwise similarity matrix $\mathbf{S}$
    \STATE Descending sort $s(i,j)$ for all $1 \le i < j \le n$
    \STATE Initialize $\mathcal{T}=(\mathcal{V}=\emptyset, \mathcal{E}=\emptyset)$
    \FOR{$rank(u,v)=1$ {\bfseries to} $n-1$, $(u,v) \in \mathbf{S}$}
    \IF{$p(u)=p(v)=\emptyset$}
    \STATE Add node $a$ in $\mathcal{V}$ and set $c(a) = \{u, v\}$
    \ELSIF{$root(u) \ne root(v)$ \AND $\delta(u)=\delta(v)$ }
    \STATE Add node $a$ in $\mathcal{V}$ and set $c(a) = \{root(u), root(v)\}$
    \ELSIF{$root(u) \ne root(v)$ \AND $\delta(u)>\delta(v)$ }
    \STATE $c(p^{\delta(v)+1}(u)) \leftarrow c(p^{\delta(v)+1}(u)) \cup \{root(v)\}$
    \ELSIF{$root(u) \ne root(v)$ \AND $\delta(u)<\delta(v)$ }
    \STATE $c(p^{\delta(u)+1}(v)) \leftarrow c(p^{\delta(u)+1}(v)) \cup \{root(u)\}$
    \ENDIF
    \ENDFOR
    \STATE Tree rebalancing by splitting overload nodes
    \STATE Generate abstract text for every abstract node
    \STATE {\bfseries Output:} abstract tree $\mathcal{T}$
  \end{algorithmic}
\end{algorithm}

\newpage

\begin{algorithm}
  \caption{Tree Retrieval $r(q, \mathcal{T})$}
  \label{alg:retrieve}
  \begin{algorithmic}
    \STATE {\bfseries Input:} query $q$, tree index $\mathcal{T}$, number of retrieved chunks $k$, maximum retrieval step $i_{\max}$
    \STATE Initialize candidate nodes $\mathcal{V}_0 = \{root(\mathcal{T})\}$
    \FOR{$l=0$ {\bfseries to} $\delta(\mathcal{T})$}
        \IF{$l=\delta(\mathcal{T})$ \AND $\vert \{u \vert \delta(u)=l\} \vert \le k$}
        \STATE $\mathcal{D}^*$ = $\ell(\mathcal{T})$
        \STATE {\bfseries break}
        \ELSIF{$\vert \{u \vert \delta(u)=l\} \vert \le k$}
        \STATE $\mathcal{V}_{l+1} = \{u \vert \delta(u)=l+1\}$
        \STATE {\bfseries continue}
        \ENDIF
        \STATE \COMMENT{Top-$k$ nearest neighbor search}
        \STATE $\mathcal{U}_l = topk(\mathcal{V}_l, q; k)$ 
        \IF{$l=\delta(\mathcal{T})$}
        \STATE $\mathcal{D}^*$ = $\mathcal{U}_l$
        \ELSE
        \STATE $\mathcal{V}_{l+1} = \bigcup_{u \in \mathcal{U}_l} c(u)$
        \ENDIF
    \ENDFOR
    \STATE {\bfseries Output:} top-$k$ chunks $\mathcal{D}^*$
  \end{algorithmic}
\end{algorithm}

\begin{algorithm}
  \caption{Multi-granular Agentic Retrieval}
  \label{alg:agentretrieve}
  \begin{algorithmic}
    \STATE {\bfseries Input:} user query $q$, tree index $\mathcal{T}$, sparse index $\mathcal{I}$, LLM agent $f(\cdot;I(\cdot))$, number of retrieved chunks $k$
    \STATE \COMMENT{Hybrid retrieval}
    \STATE Initialize $\mathcal{D}^*_0 = r(q,\mathcal{T}) \cup r(q,\mathcal{I})$ 
    \STATE Current retrieval step $i=1$
    \REPEAT
        \IF{$i=i_{\max}$}
            \STATE Set $I(\mathcal{D}^*_i)$ such that $\texttt{<action>}=\texttt{<answer>}$
        \ENDIF
        \STATE Generate $a_{i} = (R_{i}, \texttt{<action>}, \cdot)$ from \eqref{eq:iterretrieve}
        \IF{$a_{i} = (R_{i}, \texttt{<retrieve>}, q'_{i})$}
            \STATE $\mathcal{D}^*_{i} = r(q'_{i},\mathcal{T}) \cup r(q'_{i},\mathcal{I})$ 
            \STATE $i\leftarrow i+1$
        \ENDIF
    \UNTIL{$a_{i} = (R, \texttt{<answer>}, y)$}
    \STATE {\bfseries Output:} answer $y$
  \end{algorithmic}
\end{algorithm}

\onecolumn

\section{Theoretical Proofs}

\subsection{Proof of \cref{thm:1}} \label{proof1}

\noindent\textbf{Theorem \ref{thm:1}} (Uniform effect of $k$-means \cite{kmeansuniform12}). 
{\it 
Given $\mathcal{L}$ in \eqref{kmeansloss}, we have
}
\begin{equation}
\mathcal{L} = - A\sum_{1 \le i < j \le k}{n_in_j {\Vert \mu_i - \mu_j \Vert}^2} + B
\end{equation}
{\it 
where $A$ and $B$ are constants only relative to the data. 
}
\begin{proof}
Let $d(c_i, c_j) = \sum_{x_i \in c_i, x_j \in c_j}{{\Vert x_i - x_j \Vert}^2}$. We first denote the sum of all pairwise distances of data samples $x \in \ell(\mathcal{T})$ within $k$ clusters as
\begin{equation}
\mathcal{P}_k=\sum_{i=1}^{n}{\sum_{j=1}^{n}{{\Vert x_i - x_j \Vert}^2}}=\sum_{i=1}^{k}{d(c_i, c_i)} + 2 \sum_{1\le i < j \le k}{d(c_i,c_j)} \label{pkdef}
\end{equation}
Then, we are going to prove
\begin{equation}
\mathcal{P}_k = \sum_{i=1}^{k}{\frac{n}{n_i}d(c_i, c_i)} + 2\mathcal{L}_k^{\leftrightarrow}, \label{pklemma}
\end{equation}
where
\begin{equation}
\mathcal{L}_k^{\leftrightarrow} = \sum_{1 \le i < j \le k}{n_in_j {\Vert \mu_i - \mu_j \Vert}^2}
\end{equation}
with mathematical induction. 

For $k=1$, we have $\mathcal{L}_1^{\leftrightarrow}=0$ as condition $1 \le i < j \le k$ cannot be met. So $\mathcal{P}_1 = \sum_{x,y\in \ell(\mathcal{T})}{{\Vert x - y \Vert}^2}$ which meets \eqref{pklemma}. 

For $k=2$, we have
\begin{align}
\mathcal{P}_2 &= d(c_1,c_1) + d(c_2,c_2) + 2d(c_1,c_2) \\
&= \sum_{1 \le i < j \le n_1}{{\Vert x_i - x_j \Vert}^2} + \sum_{1 \le i < j \le n_2}{{\Vert x_i - x_j \Vert}^2} + 2 \sum_{1\le i \le n_1}{\sum_{1\le j \le n_2}{{\Vert x_i - x_j \Vert}^2}} \\
&= 2(n_1-1)\sum_{i=1}^{n_1}{{\Vert x_i \Vert}^2} - 4 \sum_{1 \le i < j \le n_1}{x_i x_j} + 2(n_1-2)\sum_{i=1}^{n_2}{{\Vert x_i \Vert}^2} - 4 \sum_{1 \le i < j \le n_2}{x_i x_j} \notag \\ 
&\quad + 2n_2\sum_{i=1}^{n_1}{{\Vert x_i \Vert}^2} + 2n_1\sum_{i=1}^{n_2}{{\Vert x_i \Vert}^2} -4 \sum_{1\le i \le n_1}{\sum_{1\le j \le n_2}{x_i x_j}} \\
&= \left(\frac{2n(n_1-1)}{n_1}\sum_{i=1}^{n_1}{{\Vert x_i \Vert}^2} - 4 \cdot \frac{n}{n_1} \sum_{1 \le i < j \le n_1}{x_i x_j} \right) + \left(\frac{2n(n_2-1)}{n_2}\sum_{i=1}^{n_2}{{\Vert x_i \Vert}^2} - 4 \cdot \frac{n}{n_2} \sum_{1 \le i < j \le n_2}{x_i x_j}\right) \notag \\
&\quad + \left( \frac{2n_2}{n_1}\left(\sum_{i=1}^{n_1}{{\Vert x_i \Vert}^2} + 2 \sum_{1 \le i < j \le n_1}{x_i x_j} \right) + \frac{2n_1}{n_2}\left(\sum_{i=1}^{n_2}{{\Vert x_i \Vert}^2} + 2 \sum_{1 \le i < j \le n_2}{x_i x_j} \right) -4 \sum_{1\le i \le n_1}{\sum_{1\le j \le n_2}{x_i x_j}} \right) \\
&= \frac{n}{n_1}\sum_{1 \le i < j \le n_1}{{\Vert x_i - x_j \Vert}^2} + \frac{n}{n_2}\sum_{1 \le i < j \le n_2}{{\Vert x_i - x_j \Vert}^2} + 2n_1n_2 {\Vert \mu_1 - \mu_2 \Vert}^2 \\
&= \frac{n}{n_1}d(c_1,c_1) + \frac{n}{n_2}d(c_2,c_2) + 2\mathcal{L}_2^{\leftrightarrow}
\end{align}
which meets \eqref{pklemma}. 

Assume \eqref{pklemma} holds with $k=K-1$. Then for $k=K$, we have
\begin{equation}
\mathcal{P}_K = \mathcal{P}_{K-1}^{\hat{i}} + d(c_i, c_i) + 2\sum_{1 \le j \ne i \le K}{d(c_i, c_j)} \label{pkiter}
\end{equation}
where $\mathcal{P}_{K-1}^{\hat{i}}$ is the sum of pairwise distances within $K-1$ clusters excluding cluster $c_i$ for $i =1,2,\cdots,K$. 

We sum up \eqref{pkiter} for every cluster $c_i$ and get
\begin{equation}
K\mathcal{P}_K = \sum_{i=1}^{K}{\mathcal{P}_{K-1}^{\hat{i}}}+\sum_{i=1}^{K}{d(c_i,c_i)} + 4 \sum_{1 \le i < j \le K}{d(c_i, c_j)}
\end{equation}
With our assumption:
\begin{equation}
\mathcal{P}_{K-1}^{\hat{i}} = \sum_{1 \le j \ne i \le K}{\frac{n-n_i}{n_j}d(c_j,c_j)} + 2 \sum_{1 \le j < \kappa \ne i \le K}{n_jn_\kappa {\Vert \mu_j-\mu_\kappa \Vert}^2}
\end{equation}
So
\begin{equation}
\sum_{i=1}^{K}{\mathcal{P}_{K-1}^{\hat{i}}} = (K-2)\left(\sum_{i=1}^{K}\frac{n}{n_i}d(c_i, c_i) + 2\sum_{1 \le i < j \le K}{n_in_j} {\Vert \mu_i-\mu_j \Vert}^2 \right) + \sum_{i=1}^{k}{d(c_i, c_i)},
\end{equation}
and from \eqref{pkdef}, we have
\begin{align}
K\mathcal{P}_K &= (K-2)\left(\sum_{i=1}^{K}\frac{n}{n_i}d(c_i, c_i) + 2\mathcal{L}_K^{\leftrightarrow} \right) + 2\sum_{i=1}^{k}{d(c_i, c_i)} + 4\sum_{1 \le i < j \le K}{d(c_i, c_j)} \\
&= (K-2)\left(\sum_{i=1}^{K}\frac{n}{n_i}d(c_i, c_i) + 2\mathcal{L}_K^{\leftrightarrow} \right) + 2\mathcal{P}_K \\
\Rightarrow \quad \mathcal{P}_K &= \sum_{i=1}^{K}\frac{n}{n_i}d(c_i, c_i) + 2\mathcal{L}_K^{\leftrightarrow}
\end{align}
So \eqref{pklemma} holds. With \eqref{pklemma}, we can now write $\mathcal{L}$ as 
\begin{align}
\mathcal{L} &= \sum_{i=1}^{k} {\sum_{x_i \in c_i}{\Vert x_i - \mu_i \Vert}^2} = \frac{1}{2}\sum_{i=1}^{k}{\frac{d(c_i,c_i)}{n_i}}\\
&= \frac{1}{2n}\sum_{i=1}^{k}{\left(\frac{n}{n_i}d(c_i,c_i)+2\mathcal{L}_{k}^{\leftrightarrow}\right)} - \frac{1}{n}\sum_{i=1}^{k}\mathcal{L}_{k}^{\leftrightarrow} \\
&= - \frac{1}{n}\mathcal{L}_{k}^{\leftrightarrow} + \frac{\mathcal{P}_k}{2n},
\end{align}
which completes the proof with $A=-\frac{1}{n}$ and $B=\frac{\mathcal{P}_k}{2n}$. 
\end{proof}
{\it Remark.} \cref{thm:1} proves that the objective of $k$-means is essentially maximizing $\mathcal{L}_{k}^{\leftrightarrow}=\sum_{1 \le i < j \le k}{n_in_j {\Vert \mu_i - \mu_j \Vert}^2}$. If we assume $n_in_j$ and ${\Vert \mu_i - \mu_j \Vert}^2$ is independent, the maximization of $n_in_j$ leads to $n_1=n_2=\cdots=n_k=n/k$.

\subsection{Proof of \cref{thm:2}} \label{proof2}

We first introduce necessary lemmas for \cref{thm:2} and \cref{thm:3}.

\begin{lemma}[Modularity of Dasgupta's cost \cite{dasgupta}] \label{lem:1}
Let $\mathcal{T}_a$ be the subtree in $\mathcal{T}$ rooted at node $a$. If a different tree $\mathcal{T}'$ is constructed by replacing $\mathcal{T}_a$ with another subtree $\mathcal{T}'_a$ satisfying $\ell(\mathcal{T}'_a) = \ell(\mathcal{T}_a)$, then
\begin{equation}
c(\mathcal{T}') = c(\mathcal{T}) - c(\mathcal{T}_a) + c(\mathcal{T}'_a)
\end{equation}
\end{lemma}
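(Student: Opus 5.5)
We prove the lemma directly from Definition~\ref{def:dc}. The idea is to split the sum over leaf pairs according to whether both leaves lie in $\mathcal{T}_a$, and to show that every pair not fully inside $\mathcal{T}_a$ contributes the same amount to $c(\mathcal{T})$ and to $c(\mathcal{T}')$.

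\textbf{Step 1: the partition.} Write $L=\ell(\mathcal{T}_a)=\ell(\mathcal{T}'_a)$. Leaf pairs $(u_i,u_j)$ of $\mathcal{T}$ fall into three classes:
\begin{itemize}
\item (i) both $u_i,u_j\in L$;
\item (ii) exactly one of them lies in $L$;
\item (iii) neither lies in $L$.
\end{itemize}
Since $\mathcal{T}'$ has the same leaf set as $\mathcal{T}$, the same partition applies to $\mathcal{T}'$.

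\textbf{Step 2: pairs of type (ii) and (iii) are unchanged.} Let $a'$ be the root of $\mathcal{T}'_a$, which sits in the position previously occupied by $a$. Nodes outside $\mathcal{T}_a$ have identical ancestor chains in both trees.

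For a pair of type (iii), the LCA is some node $b$ outside $\mathcal{T}_a$. This holds because a node inside $\mathcal{T}_a$ has only leaves in $L$ beneath it, so it cannot be a common ancestor of leaves outside $L$.

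For a pair of type (ii), the LCA must be a proper ancestor of $a$, because one leaf is in $L$ and the other is not. In $\mathcal{T}'$ the same node is the LCA, since the leaf in $L$ still reaches that ancestor through $a'$.

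In both cases the LCA $b$ is the same node in both trees, and $\ell(b)$ is the same set. The subtree below $b$ has only replaced the leaf set $L$ with the identical set $L$. The distances $d(u_i,u_j)$ do not depend on the tree. Hence these pairs contribute identically to $c(\mathcal{T})$ and $c(\mathcal{T}')$.

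\textbf{Step 3: pairs of type (i).} For $u_i,u_j\in L$, the LCA in $\mathcal{T}$ lies in $\mathcal{T}_a$ and equals $\mathrm{LCA}_{\mathcal{T}_a}(u_i,u_j)$. The set $\ell(\cdot)$ of this node is the same whether computed in $\mathcal{T}$ or in $\mathcal{T}_a$. So the type (i) pairs sum exactly to $c(\mathcal{T}_a)$, and in the same way to $c(\mathcal{T}'_a)$ in $\mathcal{T}'$.

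\textbf{Conclusion.} Subtracting the two decompositions gives $c(\mathcal{T}')=c(\mathcal{T})-c(\mathcal{T}_a)+c(\mathcal{T}'_a)$.

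\textbf{Main obstacle.} The only delicate point is Step 2: arguing rigorously that LCAs of pairs leaving $L$ are unaffected by the swap and keep the same leaf counts. This rests on the convention for $\mathrm{LCA}$ in the preliminaries and on $\mathcal{T}'_a$ being attached at the same parent as $\mathcal{T}_a$. The edge case $a=\mathrm{root}$ is trivial, since then types (ii) and (iii) are empty.
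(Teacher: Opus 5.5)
Your proof is correct and follows the paper's route: the paper likewise splits leaf pairs into those inside $\mathcal{T}_a$, those crossing between $\mathcal{T}_a$ and the rest of the tree, and those entirely outside, and notes that only the first group's contribution changes when $\mathcal{T}_a$ is swapped for $\mathcal{T}'_a$. Your version is somewhat more careful than the paper's. You show that every crossing or outside pair keeps the very same LCA node, so its full term $d(u_i,u_j)\,\vert\ell(\mathrm{LCA}(u_i,u_j))\vert$ is unchanged. The paper instead writes the crossing term without the weights $d$ and argues only that its leaf counts depend on $n_a$ alone.
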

\begin{proof}
For any leaf pair where one leaf is inside $\mathcal{T}'_a$ and the other is outside, the leaf count of its LCA is given by
\begin{equation}
c_{\text{cross}}(\mathcal{T}'_a, \mathcal{T}' \backslash \mathcal{T}'_a) = \sum_{u \in \mathcal{T}'_a, v \in \mathcal{T}' \backslash \mathcal{T}'_a}{\vert \ell(\text{LCA}(u,v)) \vert} = n'_a + n_1 + n_2 + \cdots
\end{equation}
Here, the total number of leaves in the current subtree $n'_a = n_a$ remains unchanged regardless of the subtree structure. Therefore: 
\begin{align} 
c(\mathcal{T}') &= c(\mathcal{T}'_a) + c(\mathcal{T}' \backslash \mathcal{T}'_a )+ c_{\text{cross}}(\mathcal{T}'_a, \mathcal{T}' \backslash \mathcal{T}'_a)\\
&= c(\mathcal{T}'_a) + c(\mathcal{T} \backslash \mathcal{T}_a ) + c_{\text{cross}}(\mathcal{T}_a, \mathcal{T} \backslash \mathcal{T}_a)\\
&= c(\mathcal{T}'_a) + c(\mathcal{T}) - c(\mathcal{T}_a)
\end{align}
where $c(\mathcal{T} \backslash \mathcal{T}_a)$ denotes the sum of costs among the subtrees of $\mathcal{T}$ excluding $\mathcal{T}_a$.
\end{proof}

{\it Remark.} The modularity of Dasgupta's cost allows us to, without loss of generality, decompose the problem of computing the global cost into analyzing cost differences within corresponding subtrees.

\begin{figure*}[ht]
  \begin{center}
    \centerline{\includegraphics[width=0.9\textwidth]{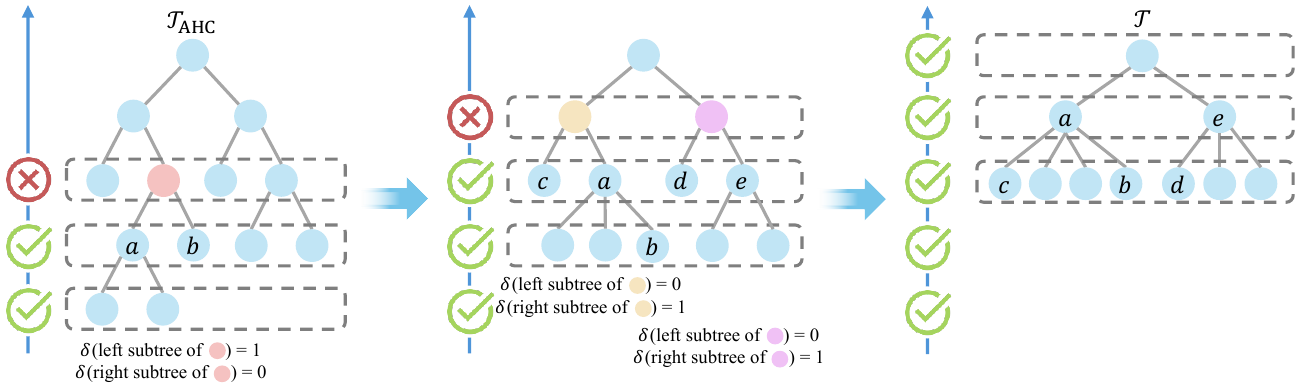}}
    \caption{
Illustration of $\psi(\cdot)$: bottom-up traversal and collapsing. Note that both $\mathcal{T}_{\text{AHC}}$ and $\mathcal{T}$ here are unordered, so the positions of $b$ and $c$ in $\mathcal{T}$ do not matter as long as they are children of $a$. 
    }
    \label{fig:hc2psi}
  \end{center}
  \vskip -0.2in
\end{figure*}

Lemma \ref{lem:3} establishes the connection between the equi-depth multifurcating tree of $\Psi$-RAG and the binary tree of AHC.

\begin{lemma}[Connection between AHC and $\Psi$-RAG] \label{lem:3}
There exists a surjection $\psi: \mathcal{T}_\text{AHC} \mapsto \mathcal{T}$ that maps every binary tree $\mathcal{T}_\text{AHC} $ to an equi-depth multifurcating tree $\mathcal{T}$ by iteratively performing node collapsing. 
\end{lemma}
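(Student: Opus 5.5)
We will construct $\psi$ explicitly and then check that it is well defined, lands in the right class of trees, and is onto.

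\textbf{Construction.} We process the internal nodes of the binary tree $\mathcal{T}_{\text{AHC}}$ bottom-up, in post-order. This mirrors the ``merging and collapse'' rules of \cref{treeindex}. For an internal node $a$ with children $b,c$, assume the subtrees rooted at $b$ and $c$ have already been turned into equi-depth multifurcating trees of depths $h_b$ and $h_c$ (a leaf has depth $0$).
\begin{itemize}[leftmargin=10pt,itemsep=1pt,topsep=2pt]
\item If $h_b=h_c$, we keep $a$ as a new root with children $b,c$, giving depth $h_b+1$. This is the analogue of a merging.
\item If $h_b>h_c$, we delete $a$ and attach the root of the $c$-subtree as an extra child of the node at height $h_c+1$ on the $b$-subtree. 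This is exactly the abstract node collapse $c(p^{\delta(v)+1}(u))\leftarrow c(p^{\delta(v)+1}(u))\cup\{root(v)\}$. The case $h_b<h_c$ is symmetric.
\end{itemize}
Induction on the number of internal nodes shows that each step preserves the invariants: every leaf of the current subtree sits at the same depth, and every non-leaf node has at least two children. Hence the final output $\psi(\mathcal{T}_{\text{AHC}})$ is equi-depth with every internal node having $\ge 2$ children. The leaf set $\ell(\cdot)$ is never changed. The output is unique up to the unordered-children convention, so $\psi$ is a well-defined function.

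\textbf{Surjectivity.} For this we construct a preimage for an arbitrary equi-depth multifurcating tree $\mathcal{T}$, by induction on its depth. Take the root $r$ of $\mathcal{T}$ with children $t_1,\dots,t_m$, all of depth $h-1$. By induction each subtree at $t_i$ has a binary preimage $B_i$.
\begin{itemize}[leftmargin=10pt,itemsep=1pt,topsep=2pt]
\item We combine $B_1$ and $B_2$ under a new binary node. Since both have image depth $h-1$, the equal-depth rule creates a single root of depth $h$ with children $t_1,t_2$.
\item We then attach each further $B_i$ ($i\ge 3$) as a child of a new binary node whose other child is the current tree. Its image depth $h-1$ is smaller than the current depth $h$, so the collapse rule attaches $t_i$ to the node at height $h$, namely $r$.
\end{itemize}
This reproduces $r$ with children $t_1,\dots,t_m$, so $\psi$ is onto.

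\textbf{Main obstacle.} The delicate point is the collapse step, where the combined subtrees have different heights. There we must confirm two things. First, the node at height $h_c+1$ on the taller subtree exists and is unique; this follows because, by the invariant, every path from its root to a leaf has the same length. Second, attaching the shorter subtree there keeps all leaves at equal depth.

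Uniqueness of the attachment node also matters because $\psi$ must be a function. If nodes at height $h_c+1$ were ambiguous, we would fix a canonical choice, for example the ancestor of the leaf that $\mathcal{T}_{\text{AHC}}$ merged first, as in the paper's algorithm via the pair $(u,v)$. Finally, we would remark that the binary merges which $\psi$ deletes are what produce the cost increments $\Delta c$ used in \cref{thm:2} and \cref{thm:3}.
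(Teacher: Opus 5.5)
Your construction is the one the paper uses. Its proof likewise traverses the binary tree bottom-up, keeps a merge when the two subtrees have equal depth, and otherwise collapses the root of the shallower subtree onto ``the corresponding node'' of the deeper one. You formalize this as a post-order recursion and make explicit the invariants (equal leaf depth, at least two children per internal node) that the paper leaves implicit. More importantly, you actually prove surjectivity, which the lemma asserts but the paper's proof never argues. Your preimage construction is correct. It also has a useful feature: every collapse it triggers attaches a subtree of height $h-1$ to a tree of height $h$, i.e.\ to the root, so it does not depend on any tie-breaking.

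The step that is wrong as written is the claim that the node at height $h_c+1$ on the taller subtree ``exists and is unique'' because all root-to-leaf paths have the same length. Equal path length gives uniqueness only along a fixed root-to-leaf path. Whenever $h_c+1<h_b$ there are in general several nodes at that height. For instance, collapsing a single leaf into a height-2 subtree whose root has branches of 2 and 3 leaves yields branch sizes $(3,3)$ or $(2,4)$, depending on the choice. This is precisely why the paper's rule is anchored at a specific leaf, $p^{\delta(v)+1}(u)$.

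So the canonical choice you mention as a contingency is always needed. Your proposed rule, the ancestor of ``the leaf that $\mathcal{T}_{\text{AHC}}$ merged first'', is not determined by a bare binary tree, which records neither a merge order nor a linking pair $(u,v)$. You can fix this in either of two ways. One is a tree-intrinsic rule, e.g.\ fix a total order on the leaves and attach to the height-$(h_c+1)$ ancestor of the smallest leaf of the taller subtree. The other is to take the domain of $\psi$ to be dendrograms annotated with the linking pair at each internal node. With either fix, your invariants and surjectivity argument go through unchanged, and $\psi$ becomes a well-defined function.

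Finally, note that the paper defines ``multifurcating'' to require some node with more than two children. Under that definition a perfect binary $\mathcal{T}_{\text{AHC}}$ is fixed by $\psi$ and falls outside the stated codomain. Your reading (every internal node has at least two children) is the one under which the lemma actually holds.
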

\begin{proof} 
The proof is intuitive. First, the merging process of $\Psi$-RAG is identical to that in an AHC tree: both of them group two nodes under a common parent. 

We now focus on collapsing. As illustrated in \cref{fig:hc2psi}, the process $\psi(\mathcal{T}_{\text{AHC}})$ can be viewed as a bottom-up layer-by-layer traversal process. Whenever a node is found whose subtrees have unequal depths, the root of the shallower subtree is collapsed to the corresponding node in the deepest subtree. This process iterates until every node has subtrees of equal depth. Therefore, any $\mathcal{T}_{\text{AHC}}$ can always be mapped to an equi-depth multifurcating tree $\mathcal{T} = \psi(\mathcal{T}_{\text{AHC}})$. 
% This step is also equivalent to the node collapsing process in $\Psi$-RAG. 
\end{proof}
{\it Remark.} Lemma \ref{lem:3} allows us to decompose $\psi(\mathcal{T}_{\text{AHC}})$ bottom-up into a sequence of collapsing operations. Combined with Lemma \ref{lem:1}, it implies that $\Delta c = c(\psi(\mathcal{T}_{\text{AHC}})) - c(\mathcal{T}_{\text{AHC}})$ can be decomposed into a layer-by-layer summation of the cost increments introduced by node collapse. 
Moreover, it is obvious that Dasgupta's cost is invariant to the order of children of a certain node, so $\Delta c$ is invariant to the order of collapsing within a certain layer as well. 

In the following analysis, we follow the original paper \cite{dasgupta} by considering a computable scenario where the distance $d = 1$ for all pairs of chunks. 
The subsequent conclusions demonstrate that this assumption does not affect our conclusion of $\Psi$-RAG's property. 
To analyze the non-uniformity preference of $\Psi$-RAG, Lemma \ref{lem:4} quantifies the increment in Dasgupta's cost introduced by node collapse. 

\begin{figure*}[ht]
  \begin{center}
    \centerline{\includegraphics[width=\textwidth]{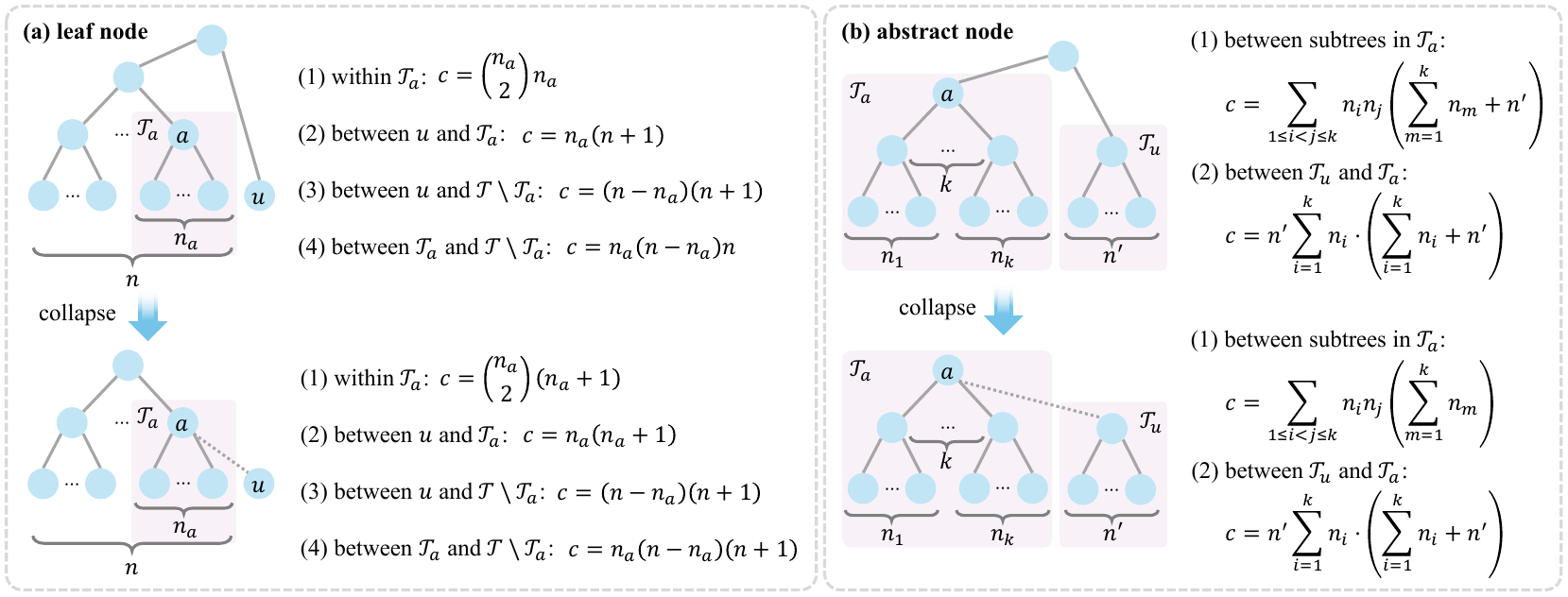}}
    \caption{
Decomposition of Dasgupta's cost for (a) leaf node collapse and (b) abstract node collapse. The Dasgupta's costs of unmentioned components in the figure obviously remain invariant before and after the collapse.
    }
    \label{fig:collapsecost}
  \end{center}
  \vskip -0.2in
\end{figure*}

\begin{lemma}[Increment of Dasgupta's cost of node collapse] \label{lem:4}
The following holds:

(1) For a leaf node collapse within a tree $\mathcal{T}$, let the total number of leaves be $\vert \ell(\mathcal{T}) \vert = n$, and let $\mathcal{T}_a$ be the subtree rooted at the abstract node $a$ to which the leaf node $u$ is being linked, with $\vert \ell(\mathcal{T}_a) \vert = n_a$. This introduces an increment of Dasgupta's cost $\Delta c =\binom{n_a}{2}$.

(2) For an abstract node collapse within a tree $\mathcal{T}$, let the number of leaves of the subtree to be collapsed $\mathcal{T}_u$ be $n'$. Let $\mathcal{T}_a$ be the subtree rooted at the abstract node $a$ to which $\mathcal{T}_u$ is being linked. $\mathcal{T}_a$ has $k$ branches with the number of leaves $n_1, n_2, \cdots, n_k$, respectively. This introduces an increment of Dasgupta's cost $\Delta c = n' \sum_{1 \le i < j \le k} n_i n_j$.
\end{lemma}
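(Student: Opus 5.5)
The plan is to compute each increment directly from Definition~\ref{def:dc} under the standing assumption $d\equiv 1$, so that $c(\mathcal{T})=\sum_{i<j}\vert \ell(\text{LCA}(u_i,u_j))\vert$. By Lemma~\ref{lem:1}, only leaf pairs whose LCA changes can alter the cost, so we localise to the smallest subtree containing the modified region. We then compare, pair by pair, the LCA sizes before and after the collapse, in the style of the decomposition in \cref{fig:collapsecost}.

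For part (1), we model the leaf node collapse as the binary-tree picture behind $\psi$. Before collapsing, $u$ and $\mathcal{T}_a$ are siblings under a common parent $b$ with $\ell(b)=\ell(\mathcal{T}_a)\cup\{u\}$. After collapsing, $u$ becomes a child of $a$ and $b$ disappears. We then classify leaf pairs:
\begin{itemize}
\item pairs outside $\ell(b)$, or with exactly one leaf in $\ell(b)$, keep their LCA and hence their leaf count, since the total leaf set under $b$ (now $a$) is unchanged;
\item pairs $(u,x)$ with $x\in\ell(\mathcal{T}_a)$ had LCA $b$ before and have LCA $a$ after, and $a$ now carries the same $n_a+1$ leaves as $b$ did, so these pairs are unchanged;
\item pairs $(x,y)$ inside $\mathcal{T}_a$ whose LCA is $a$ itself now see $n_a+1$ leaves instead of $n_a$.
\end{itemize}
Deeper pairs are unaffected. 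This pins the increment to the number of pairs whose LCA is $a$. In the leaf-collapse situation of the algorithm, $a$ has leaf children only, so every pair in $\ell(\mathcal{T}_a)$ has LCA $a$, and the count is $\binom{n_a}{2}$.

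The main obstacle is making precise which pre-collapse tree is intended, because "leaf node collapse" as defined in the algorithm has no before-state containing the same edges. I would fix this via the AHC correspondence of Lemma~\ref{lem:3}: the before-state is the binary tree in which $u$ sits next to $\mathcal{T}_a$ under a fresh parent, and the after-state is $\psi$ applied to it.

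Part (2) is handled analogously. Before the collapse, $\mathcal{T}_u$ (with $n'$ leaves) and $\mathcal{T}_a$ hang under a common node $b$. After the collapse, $\mathcal{T}_u$ becomes a $(k+1)$-th branch of $a$, and $b$ is absorbed so that $a$ carries $n'+\sum_i n_i$ leaves. The pairs with one leaf in $\mathcal{T}_u$ and one in $\mathcal{T}_a$ keep the same LCA size. Pairs inside $\mathcal{T}_u$, or inside a single branch of $\mathcal{T}_a$, are untouched. The only changed pairs are those drawn from two distinct branches $i<j$ of $a$: their LCA grows from $\sum_m n_m$ leaves to $\sum_m n_m+n'$ leaves, so each gains exactly $n'$. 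Summing over the $n_in_j$ such pairs gives $\Delta c=n'\sum_{i<j}n_in_j$.

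As a sanity check, taking every $n_i=1$ and $n'=1$ in part (2) recovers part (1), since $\sum_{i<j}1=\binom{n_a}{2}$.
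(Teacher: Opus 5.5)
Your proposal is correct and follows the paper's argument. Under $d\equiv 1$ you classify leaf pairs by their LCA and find that only pairs with LCA $a$ (in (1)) or pairs across two branches of $a$ (in (2)) change, gaining $1$ and $n'$ respectively, which is exactly the paper's decomposition. The one difference is in (1): you take the pre-collapse tree to be the binary configuration with $u$ a sibling of $\mathcal{T}_a$, so the $(u,\mathcal{T}_a)$ and $(\mathcal{T}_a,\mathcal{T}\setminus\mathcal{T}_a)$ contributions each vanish. The paper instead gives $u$'s LCA with $\mathcal{T}_a$ $n+1$ leaves, and those two terms, $n_a(n_a-n)$ and $n_a(n-n_a)$, cancel each other. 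Your choice matches the binary branch that Corollary~\ref{crl:5} iterates over, and it makes (1) visibly the $n'=n_i=1$ case of (2).
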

\begin{proof}
(1) From Fig~\ref{fig:collapsecost} (a), 
\begin{equation}
\Delta c = \underbrace{\binom{n_a}{2}(n_a+1-n_a)}_{\Delta c(\text{within } \mathcal{T}_a)} + \underbrace{n_a(n_a + 1) - n_a(n + 1)}_{\Delta c(\text{between }u\text{ and }\mathcal{T}_a)}  + \underbrace{n_a(n-n_a)(n+1-n)}_{\Delta c(\text{between }\mathcal{T}_a\text{ and }\mathcal{T} \backslash \mathcal{T}_a)} = \binom{n_a}{2}
\end{equation}
(2) From Fig~\ref{fig:collapsecost} (b), 
\begin{equation}
\Delta c = \underbrace{\sum_{1 \le i < j \le k}{n_i n_j \left( \sum_{m=1}^{k}n_m + n' - \sum_{m=1}^{k}n_m \right)}}_{\Delta c(\text{between subtrees in }\mathcal{T}_a)} = n'\sum_{1 \le i < j \le k}{n_i n_j}
\end{equation}
which ends the proof.
\end{proof}
\begin{corollary} \label{crl:5}
By the Hockey-stick identity \cite{hockeystick}, the total cost increment from performing $n-2$ consecutive leaf collapses on a binary branch (resulting in $n \ge 3$ leaves) is
\begin{equation}
\Delta c = \sum_{i=2}^{n-1}{\binom{i}{2}} = \binom{n}{3}
\end{equation}
The total cost increment from collapsing $K-k$ ($K > k \ge 2$) identical subtrees onto node $a$ (resulting in $K$ subtrees) is
\begin{equation}
\Delta c = \sum_{\kappa=k}^{K-1}{\left(n'\sum_{1 \le i < j \le \kappa}{n_i n_j}\right)} = n'\sum_{\kappa=k}^{K-1}{\sum_{1 \le i < j \le \kappa}{n_i n_j}}    
\end{equation}
\end{corollary}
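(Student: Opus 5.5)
The statement to prove is Corollary~\ref{crl:5}. The plan is to apply Lemma~\ref{lem:4} once per collapse step and sum the increments. By Lemma~\ref{lem:1} and the remark after Lemma~\ref{lem:3}, each collapse changes only the cost inside the affected subtree, so the total increment of a sequence of collapses is the sum of the single-step increments computed on the successive intermediate trees.

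\emph{Leaf collapses.} Start from a binary branch, i.e. an abstract node $a$ with two leaf children, so $n_a=2$. Collapse leaves onto $a$ one at a time. Before the $m$-th collapse ($m=1,\dots,n-2$), the node $a$ has $i=m+1$ leaves. By Lemma~\ref{lem:4}(1), that step contributes $\binom{i}{2}$. As $m$ runs over $1,\dots,n-2$, the index $i$ runs over $2,\dots,n-1$, and the final subtree has $n$ leaves. The total is therefore $\sum_{i=2}^{n-1}\binom{i}{2}$. The hockey-stick identity $\sum_{i=r}^{N}\binom{i}{r}=\binom{N+1}{r+1}$ with $r=2$, $N=n-1$ gives $\binom{n}{3}$. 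If one wants to avoid citing the identity, a one-line induction on $n$ works, using Pascal's rule $\binom{n}{3}+\binom{n}{2}=\binom{n+1}{3}$. The condition $n\ge 3$ guarantees at least one collapse, so the sum is nonempty.

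\emph{Abstract node collapses.} Node $a$ currently has $\kappa$ branches with leaf counts $n_1,\dots,n_\kappa$. Collapsing one further subtree with $n'$ leaves costs $n'\sum_{1\le i<j\le\kappa}n_in_j$ by Lemma~\ref{lem:4}(2). After that step, $a$ has $\kappa+1$ branches, and the new branch has size $n_{\kappa+1}=n'$. Since the collapsed subtrees are identical, $n'$ is the same at every step. Iterating from $\kappa=k$ up to $\kappa=K-1$ gives exactly $K-k$ collapses and ends with $K$ subtrees. Summing the per-step terms yields $n'\sum_{\kappa=k}^{K-1}\sum_{1\le i<j\le\kappa}n_in_j$, where $n'$ factors out because it is constant.

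The main point requiring care is the bookkeeping. At step $\kappa$, Lemma~\ref{lem:4}(2) must be applied to the current branch sizes, which include the branches added earlier. Earlier collapses only enlarge the branch set at $a$ and leave the internal structure of existing branches untouched. This is what justifies reading off $n_1,\dots,n_\kappa$ directly, with $n_{k+1}=\dots=n_K=n'$.
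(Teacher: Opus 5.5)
Your proposal is correct and is the argument the paper intends. The paper gives no separate proof beyond citing the hockey-stick identity, and your argument supplies it. You sum the per-step increments of Lemma~\ref{lem:4}(1) over $n_a=2,\dots,n-1$ and close with the hockey-stick identity. You then sum Lemma~\ref{lem:4}(2) over $\kappa=k,\dots,K-1$ with the constant $n'$ factored out. The appeal to Lemma~\ref{lem:1} is not needed: the total increment is simply the telescoping sum of the step increments on the successive intermediate trees.
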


Now we are ready for the following theorem. 

\begin{figure*}[ht]
  \begin{center}
    \centerline{\includegraphics[width=\textwidth]{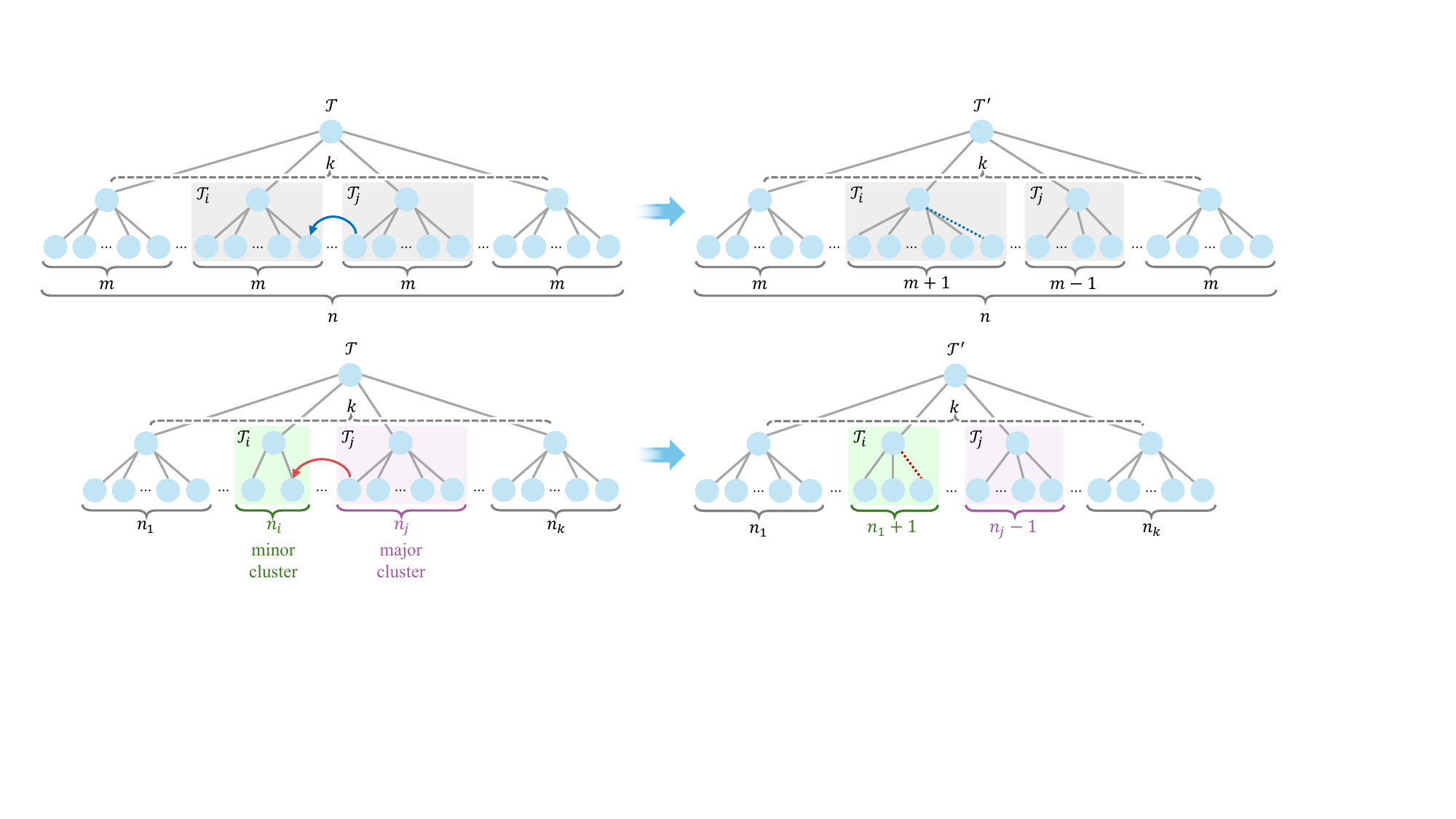}}
    \caption{
  An example of a uniform cluster tree (left) and moving a node from $\mathcal{T}_j$ to another subtree $\mathcal{T}_i$ (right).
    }
    \label{fig:nonuniform}
  \end{center}
  \vskip -0.2in
\end{figure*}

\noindent\textbf{Theorem \ref{thm:2}} (Non-uniformity preference of $\Psi$-RAG).
{\it 
Consider a 3-layer equi-depth multifurcating tree $\mathcal{T}$ with a perfectly uniform leaf distribution among its subtrees $\mathcal{T}_1, \cdots, \mathcal{T}_k$: $n_1=n_2=\cdots=n_k=n/k$, where $k \ge 3$, $n \ge 3k$. A new tree $\mathcal{T}'$ is constructed by moving one leaf from $\mathcal{T}_j$ to another subtree $\mathcal{T}_i$. Then: \\ %  such that $n'_i=n/k+1, n'_j=n/k-1$
(1) $c(\mathcal{T}') < c(\mathcal{T})$; \\
(2) Let $\mathcal{T} = \psi(\mathcal{T}_{\text{AHC}})$ and $\mathcal{T'}=\psi(\mathcal{T}'_{\text{AHC}})$. Denote by $\Delta c$ and $\Delta c'$ the increments of Dasgupta's cost on $\mathcal{T}$ and $\mathcal{T}'$ introduced by $\psi(\cdot)$, respectively. Then we have $c(\mathcal{T}') - c(\mathcal{T}) = \Delta c' - \Delta c$.
}

\begin{proof}
(1) We visualize the scenario in \cref{fig:nonuniform}. Let $m=n/k$ and we can know from Lemma \ref{lem:1} that
\begin{equation}
c(\mathcal{T}) = k\binom{m}{2}m + \binom{k}{2}m^2n,
\end{equation}
\begin{align}
c(\mathcal{T}') &= (k-2)\binom{m}{2}m + \binom{m+1}{2}\left(m+1\right) + \binom{m-1}{2}\left(m-1\right) \notag \\
&\quad +\underbrace{\binom{k-2}{2}m^2n}_{0 \text{ if } k=3} + \left(m+1\right)\left(m-1\right)n+(k-2)(m+1)mn+(k-2)(m-1)mn
\end{align}
So
\begin{equation}
c(\mathcal{T}')-c(\mathcal{T})=3m-n-1,
\end{equation}
which yields $c(\mathcal{T}') < c(\mathcal{T})$ if $k \ge 3$.

(2) According to Lemma \ref{lem:3}, we can decompose $\psi(\cdot)$ bottom-up into multiple leaf node collapses and multiple abstract node collapses. From Corollary \ref{crl:5}, the total cost increments introduced by $\psi(\mathcal{T}_{\text{AHC}})$ and $\psi(\mathcal{T}'_{\text{AHC}})$ are
\begin{equation}
\Delta c = k\binom{m}{3} + m\sum_{i=2}^{k-1}{\binom{i}{2}m^2} = k\binom{m}{3} + \binom{k}{3}m^3,
\end{equation}
\begin{align}
\Delta c' &= (k-2)\binom{m}{3} + \underbrace{\binom{m-1}{3}}_{0 \text{ if } m=3} + \binom{m+1}{3} + \underbrace{\binom{k-2}{3}m^3}_{0 \text{ if } k\le4} + \underbrace{\binom{k-2}{2}(m-1)m^2}_{0 \text{ if } k=3} \notag \\
& \quad+ (m+1)\left(\underbrace{\binom{k-2}{2}m^2}_{0 \text{ if } k=3} + (k-2)m(m-1) \right)
\end{align}
So we have
\begin{equation}
\Delta c' - \Delta c = 3m-n-1 = c(\mathcal{T}') - c(\mathcal{T})
\end{equation}
which ends the proof.
\end{proof}

{\it Remark.} \cref{thm:2} proves that while traditional AHC assigns the same Dasgupta's cost to the two different tree structures ($c(\mathcal{T}) - \Delta c = c(\mathcal{T}') - \Delta c'$), the collapsing process in $\Psi$-RAG results in a lower Dasgupta's cost for the tree with a non-uniform leaf distribution. This indicates that there is no uniform effect for $\Psi$-RAG even when pairwise distances between leaf nodes are unsorted. 
Moreover, as the computation of Dasgupta's cost can be decomposed into that of each subtree, our conclusions can be generalized to trees with $\delta > 3$.

\subsection{Proof of \cref{thm:3}}  \label{proof3}

\noindent\textbf{Theorem \ref{thm:3}} (Distribution adaptability of $\Psi$-RAG).
{\it
Consider a 3-layer equi-depth multifurcating tree $\mathcal{T}$ with subtrees $\mathcal{T}_1, \cdots, \mathcal{T}_k$ of any size where $k \ge 3$. A subtree of a minor leaf cluster $\mathcal{T}_i$ has $2 \le n_i < n_j, \forall 1 \le j \ne i \le k$. A new tree $\mathcal{T}'$ is constructed by moving one leaf from some subtree $\mathcal{T}_j$ to the minor subtree $\mathcal{T}_i$. Then: \\
(1) $c(\mathcal{T}') \ge c(\mathcal{T})$, and $c(\mathcal{T}') = c(\mathcal{T})$ iff $n_j=n_i + 1$; \\
(2) Let $\mathcal{T} = \psi(\mathcal{T}_{\text{AHC}})$,  $\mathcal{T'}=\psi(\mathcal{T}'_{\text{AHC}})$ and $\Delta c, \Delta c'$ defined as in \cref{thm:2}. Then we have $\Delta c' \ge \Delta c$, and $\Delta c' = \Delta c$ iff $n_j=n_i + 1$.
}

\begin{figure*}[ht]
  \begin{center}
    \centerline{\includegraphics[width=0.95\textwidth]{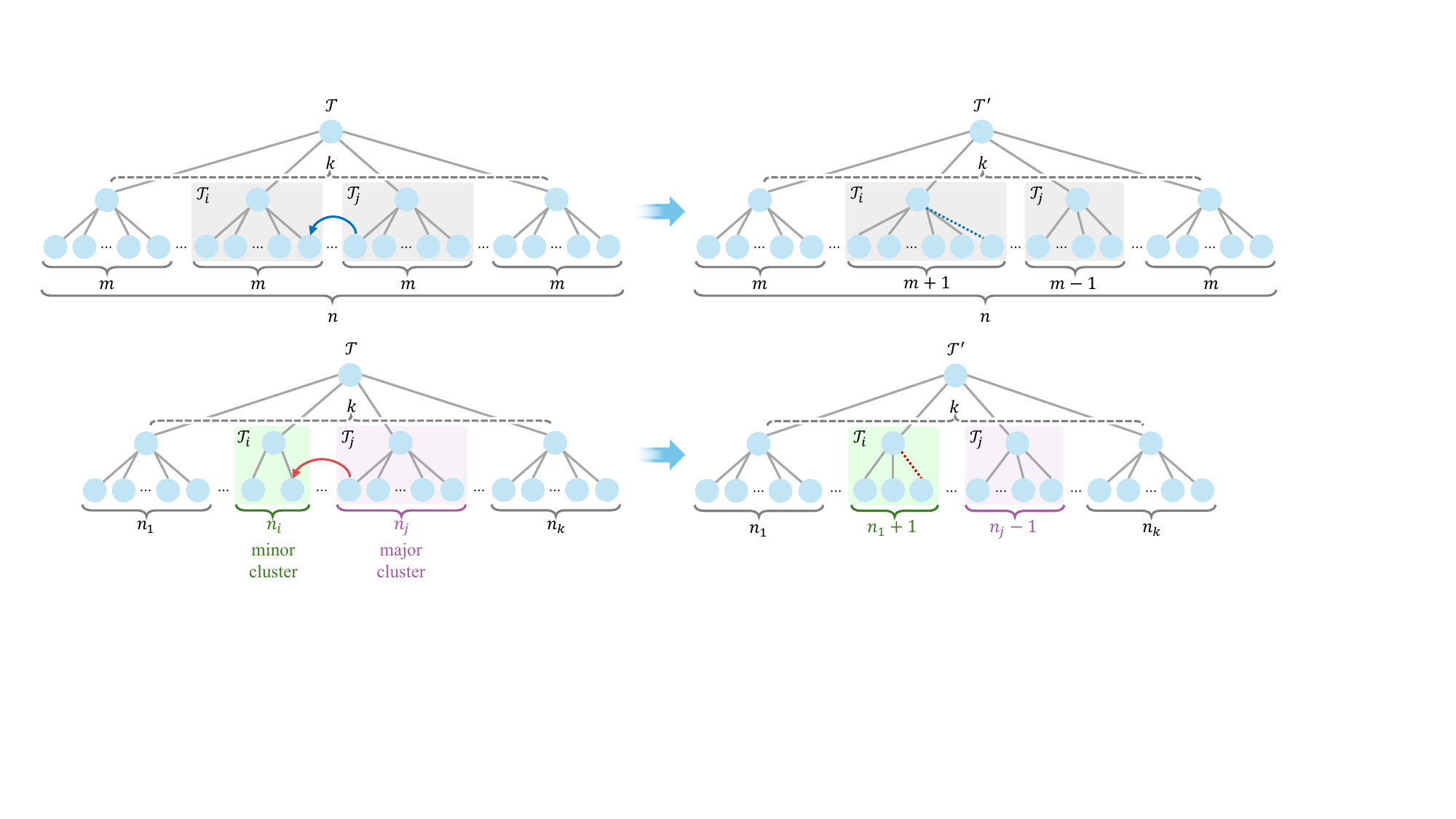}}
    \caption{
  An example of a tree with a minor subtree $\mathcal{T}_i$ (left) and moving a node from a major subtree $\mathcal{T}_j$ to $\mathcal{T}_i$ (right).
    }
    \label{fig:skewed}
  \end{center}
  \vskip -0.2in
\end{figure*}

\begin{proof}
(1) We visualize the scenario in \cref{fig:skewed}. So we have
\begin{equation}
c(\mathcal{T}) = \binom{n_i}{2}n_i+\binom{n_j}{2}n_j + n_i(n-n_i)n + n_j(n-n_j)n + C,
\end{equation}
\begin{equation}
c(\mathcal{T}') = \binom{n_i + 1}{2}(n_i+1)+\binom{n_j-1}{2}(n_j-1) + (n_i-1)(n-n_i+1)n + (n_j-1)(n-n_j+1)n + C
\end{equation}
where $C$ is a constant that remains invariant to the structure change. So
\begin{equation}
c(\mathcal{T}') - c(\mathcal{T}) = 2n(n_j-n_i-1)+\frac{3n_i^2+n_i-3n_j^2 +5n_j-2}{2}
\end{equation}
When $n_j=n_i+1$, $c(\mathcal{T}') - c(\mathcal{T}) = 0$. When  $n_j=n_i+x$ where $x>1$, then
\begin{align}
c(\mathcal{T}') - c(\mathcal{T}) &= 2n(x-1) + \frac{3n_i^2+n_i-3(n_i+x)^2+5(n_i+x)-2}{2} \\
&= \left(2n-3n_i-\frac{3x-2}{2}\right)(x-1)  \\
&= \left(2\left(n_i + n_i + x + \sum_{o\ne i, j}{n_o}\right)-3n_i-\frac{3x-2}{2}\right)(x-1) \\
&= \left(n_i +2 \sum_{o\ne i, j}{n_o} + \frac{x}{2} +1\right)(x-1) \\
&>0
\end{align}
which yields $c(\mathcal{T}') > c(\mathcal{T})$.

(2) From Corollary \ref{crl:5}, the total cost increments introduced by $\psi(\mathcal{T}_{\text{AHC}})$ and $\psi(\mathcal{T}'_{\text{AHC}})$ are
\begin{equation}
\Delta c = \binom{n_i}{3} +\binom{n_j}{3} + (n-n_i-n_j)n_in_j + C,
\end{equation}
\begin{equation}
\Delta c' = \binom{n_i+1}{3} + \underbrace{\binom{n_j-1}{3}}_{0 \text{ if } n_j=3} + (n-n_i-n_j)(n_i+1)(n_j-1) + C
\end{equation}
where $C$ is a constant that remains invariant to the structure change. So
\begin{equation}
\Delta c' - \Delta c = (n_j-n_i-1)\left(n-n_i-n_j - \frac{n_i+n_j}{2} +1\right)
\end{equation}
When $n_j=n_i+1$, $\Delta c' - \Delta c = 0$. When $n_j$ is larger, $n_j-n_i-1 > 0$, so we focus on the second part of the right hand side. We consider the minimum case of $n=n_i + n_j + n_j$ and
\begin{equation}
\Delta c' - \Delta c = (n_j-n_i-1)\left(\frac{n_j-n_i}{2} +1\right) >0
\end{equation}
which ends the proof.
\end{proof}

{\it Remark. } \cref{thm:3} proves that $\Psi$-RAG incurs a larger cost to assign a major cluster node to a minor cluster. For a minor class in a corpus with cluster $c = \{ u \vert \exists v \in c, d(u,v)<d(u,w) \ \forall w \notin c \}$, it always form a minor subtree in $\mathcal{T}$ following the similarity ranking and iterative merging \& collapse processes of $\Psi$-RAG. In conclusion, $\Psi$-RAG tends to preserve the size of minor classes in a skewed corpus. As the computation of Dasgupta's cost can be decomposed into that of each subtree, our conclusions can be generalized to trees with $\delta > 3$. 

\begin{table}[t]
\caption{Statistics of datasets. }
\vskip -0.1in
\label{tab:data}
\begin{center}
\begin{small}
\begin{tabular}{lccrrr}
\toprule[1.2pt]
Dataset & Task type & Index scale & \#Queries & \#Documents & Average \#tokens per document \\
\midrule
NQ & Single-hop QA & Corpus & 1,000 & 9,633 & 141.330 \\
PopQA & Single-hop QA & Corpus & 1,000 & 8,676 & 136.536 \\
HotpotQA & Multi-hop QA & Corpus & 1,000 & 9,811 & 131.602 \\
2Wiki & Multi-hop QA & Corpus & 1,000 & 6,119 & 110.436 \\
MuSiQue & Multi-hop QA & Corpus & 1,000 & 11,656 & 114.516 \\
MultiHop-RAG & Multi-hop QA & Corpus & 1,000 & 609 & 2346.064 \\
NarrativeQA & Narrative QA & Passage & 293 & 10 & 62,824.500 \\
$\infty$-LongBook & Narrative QA & Passage & 100 & 19 & 214,711.895 \\
QMSum & Summarization & Passage & 35 & 35 & 14,375.543 \\
WCEP & Summarization & Passage & 119 & 119 & 9,567.235 \\
\bottomrule[1.2pt]
\end{tabular}
\end{small}
\end{center}
\end{table}

\section{Detailed Experimental Setup} \label{setupapp}

\subsection{Dataset and Index Statistics} 

{\bf Statistics of datasets.} 
We provide the statistics of datasets used in our experiments in \cref{tab:data}. 

\begin{table}[t]
\caption{Statistics of abstract tree indexes of $\Psi$-RAG.}
\vskip -0.1in
\label{tab:treestats}
\begin{center}
\begin{small}
\begin{tabular}{lrrrrr}
\toprule[1.2pt]
Dataset & \#Trees & \makecell{\#Average\\leaf nodes} & \makecell{\#Average\\abstract nodes} & \makecell{Average\\tree depth} & \makecell{Average\\compression rate} \\
\midrule
NQ & 1 & 9,633 & 3,164 & 6 & 82.35\% \!\!{\small \faFile*[regular]}\!\! / 83.06\% \!\!{\small \faKey}\!\! \\
PopQA & 1 & 8,676 & 2,810 & 5 & 83.26\% \!\!{\small \faFile*[regular]}\!\! / 81.87\% \!\!{\small \faKey}\!\! \\
HotpotQA & 1 & 9,811 & 2,635 & 6 & 85.98\% \!\!{\small \faFile*[regular]}\!\! / 85.85\% \!\!{\small \faKey}\!\! \\
2Wiki & 1 & 6,119 & 1,704 & 5 & 82.14\% \!\!{\small \faFile*[regular]}\!\! / 82.89\% \!\!{\small \faKey}\!\! \\
MuSiQue & 1 & 11,656 & 2,938 & 6 & 84.79\% \!\!{\small \faFile*[regular]}\!\! / 83.09\% \!\!{\small \faKey}\!\! \\
MultiHop-RAG & 1 & 16,349 & 4,377 & 6 & 78.94\% \!\!{\small \faFile*[regular]}\!\! / 78.87\% \!\!{\small \faKey}\!\! \\
NarrativeQA & 10 & 411 & 85 & 3.5 & 90.71\% \!\!{\small \faFile*[regular]}\!\! / 98.25\% \!\!{\small \faKey}\!\! \\
$\infty$-LongBook & 19 & 2,446 & 488 & 4.2 & 84.36\% \!\!{\small \faFile*[regular]}\!\! / 84.31\% \!\!{\small \faKey}\!\! \\
QMSum & 35 & 155 & 41 & 3.3 & 82.19\% \!\!{\small \faFile*[regular]}\!\! / 82.33\% \!\!{\small \faKey}\!\! \\
WCEP & 119 & 109 & 33 & 3.2 & 76.32\% \!\!{\small \faFile*[regular]}\!\! / 76.35\% \!\!{\small \faKey}\!\! \\
\bottomrule[1.2pt]
\end{tabular}
\end{small}
\end{center}
\end{table}

{\bf Dataset split.} 
We follow RAPTOR's document splitting strategy, segmenting documents into chunks of approximate 100 tokens without cutting mid-sentence. Notably, we do not re-split datasets composed of similarly sized paragraphs: NQ, PopQA, HotpotQA, 2Wiki, and MuSiQue.

{\bf Statistics of abstract trees.} 
We provide the statistics of $\Psi$-RAG's abstract tree indexes for each dataset in \cref{tab:treestats}. The results show that for corpus-level datasets, the tree depth of $\Psi$-RAG stabilizes at 5 to 6 layers. This demonstrates $\Psi$-RAG's key advantage of generalization ability: it does not require prior assumptions about data, as it naturally uncovers the underlying semantic hierarchy.

% $\Psi$-RAG's abstracts can reach a compression rate of at least 76.35\%. 

\subsection{Configurations} 

{\bf Indexing setup.}
We set the maximum token length of the summative abstract for each abstract node to 100, and the maximum number of keywords of the keyword abstract to 20. 
We set the maximum number of children nodes to 40. Abstract nodes with excessive children will be split in half in the rebalancing process. 

The keyword abstract \!\!\faKey\!\! is used for $\Psi$-RAG in all non-comparative experiments in this paper, unless otherwise stated. 

{\bf Retrieval setup.} 
We mainly follow the multi-step setting of HippoRAG \cite{hipporag} for agentic retrieval. For QA tasks, top-10 retrieved passages from both sparse and dense tree retrievers are integrated and reranked to the final top-5 at each retrieval step. We maintain a maximum score list for retrieved document chunks at each step and take top-5 documents from the list for recall@$k$ evaluation. For multi-hop datasets, we limit the maximum number of retrieval attempts ($i_{\max}+1$) to be equal to the maximum hop count of user questions: 2 for HotpotQA and 2Wiki, 4 for MuSiQue and MultiHop-RAG. For single-hop datasets (NQ and PopQA), though, we allow one extra retrieval attempt for possible query reorganization. For summarization datasets, we perform single retrieval. The $k$ value is determined by the size of documents: top-20 for QMSum and top-15 for WCEP. 

{\bf LLM setup.} 
All experiments are conducted on up to 3 NVIDIA GeForce RTX 4090 GPUs with 48GB VRAM each. We use Ollama\footnote{\url{https://ollama.com/}} and Hugging Face as default LLM sources. We follow the majority of default settings of Ollama and Hugging Face models: for example, the context window is fixed to 4096 tokens. We set the decoding temperature to 0 and the penalty for generating repetitive content to 1.3 for every LLM. 
For the summarization task, the maximum LLM response token length is determined by the size of documents: 200 for QMSum and 150 for WCEP. 

\subsection{Instruction Prompts} 

We show LLM prompts used for generating summative abstract, keyword abstract, and R\&A agent below. For single-hop and multi-hop QA, we follow existing works and provide a one-shot in-context example.

\begin{small}
\begin{tcolorbox}[breakable, pad at break=2.5mm, before skip=1mm, after skip=1mm, width=1.0\linewidth, boxsep=2mm, arc=1.5mm, left=1.5mm, right=1.5mm, top=1.5mm, bottom=1mm, parskip=\baselineskip, title={LLM Prompt for \!\faFile*[regular]\!\! Summative Abstract}]

\begin{center}
======================== {\bf \{System Instruction\}} ========================
\end{center}

As an advanced document summarization assistant, your task is to summarize the given list of documents, including as many key details and relationships as possible. 

(1) Your answer should always start with ``Summary:''. NO UNNECESSARY WORDS: Always directly output the summary instead of useless modifiers like `Here is a summary of the given list ...'. 

(2) Be faithful to the original documents. Unnecessary paraphrasing and adding content not mentioned in the original documents are NOT allowed. 

(3) Your summary should be NO MORE THAN $\texttt{\{summary\_max\_length\}}$ WORDS. Keep in mind that a longer summary is not always better. 

An in-context example is provided below.

\begin{center}
======================== {\bf \{In-context example\}} ========================
\end{center} 

\faUser User input:

Hotel Tallcorn

The Hotel Tallcorn is located in Marshalltown, Iowa... 

The Oberoi Group

The Oberoi Group is a hotel company with its head office in Delhi...

Hotel Bond

Hotel Bond is a historic hotel, built in two stages in 1913 and 1921... 

Mohan Singh Oberoi

Rai Bahadur Mohan Singh Oberoi (15 August 1898 – 3 May 2002)...

Ritz-Carlton Jakarta

The Ritz-Carlton Jakarta is a hotel and skyscraper in Jakarta... \mbox{} \\

\faRobot Assistant response: 

The Hotel Tallcorn in Marshalltown, Iowa, built in 1928, is now an apartment building. It was part of one of the 
second-largest hotel sales in U.S. history in 1956. Hotel Bond is a historic hotel in Hartford, Connecticut, built in two stages. 
After facing competition, it went through multiple ownership changes and bankruptcies. It is now a Homewood Suites by Hilton, 
with its ballroom having reopened in 2001. The Oberoi Group, founded in 1934, is a luxury hotel company based in Delhi. 
It owns and operates over 30 hotels and two cruise ships. It was founded by Mohan Singh Oberoi, who started India's second-largest 
hotel company. The Ritz-Carlton Jakarta is a skyscraper hotel in Indonesia, opened in 2005. It consists of two towers, 
one for the hotel and one for apartments, and is located near the JW Marriott Hotel. \mbox{} \\

\end{tcolorbox}
\end{small}

\begin{small}
\begin{tcolorbox}[breakable, pad at break=2.5mm, before skip=1mm, after skip=1mm, width=1.0\linewidth, boxsep=2mm, arc=1.5mm, left=1.5mm, right=1.5mm, top=1.5mm, bottom=1mm, parskip=\baselineskip, title={LLM Prompt for \!\faKey\!\! Keyword Abstract}]

\begin{center}
======================== {\bf \{System Instruction\}} ========================
\end{center}

As an advanced document summarization assistant, your task is to summarize the given list of text paragraphs into multiple key phrases or words. These key phrases include: 

· Key phrases that overview the types of information covered in the text;

· Key phrases that reflect the common themes of these paragraphs;

· Key phrases that distinguish the specific content of each paragraph.

Requirements: 

(1) Generate key phrases mentioned above in order. All key phrases should be strictly formatted as ONE line of text, separated by commas, equivalent to the output format of Python code ``$\texttt{`,'.join(list\_of\_keywords)}$''. 

(2) Different key phrases should be distinct from one another.

(3) NO UNNECESSARY WORDS: Always output the direct key phrase list instead of useless modifiers like `Here are the key phrases according to the documents: ...' or `Information key phrases: ... Theme key phrases: ...'. 

(4) The number of key phrases should be NO MORE THAN $\texttt{\{max\_num\_keywords\}}$. 

An in-context example is provided below.

\begin{center}
======================== {\bf \{In-context example\}} ========================
\end{center} 

\faUser User input:

Hotel Tallcorn

The Hotel Tallcorn is located in Marshalltown, Iowa... 

The Oberoi Group

The Oberoi Group is a hotel company with its head office in Delhi...

Hotel Bond

Hotel Bond is a historic hotel, built in two stages in 1913 and 1921... 

Mohan Singh Oberoi

Rai Bahadur Mohan Singh Oberoi (15 August 1898 – 3 May 2002)...

Ritz-Carlton Jakarta

The Ritz-Carlton Jakarta is a hotel and skyscraper in Jakarta... \mbox{} \\

\faRobot Assistant response: 

historic hotels,hotel companies,bankruptcy and renovation,brand operations,Indian hospitality industry,Indonesia skyscraper,hotel locations,hotel building time,hotel architecture,hotel founders,hotel ownership history,Marshalltown Iowa,Hotel Tallcorn,Eppley Hotel Company,Sheraton sale,The Oberoi Group,Mohan Singh Oberoi,Hotel Bond,Hartford Connecticut,Bond Ballroom,Ritz-Carlton Jakarta \mbox{} \\

\end{tcolorbox}
\end{small}

\begin{small}
\begin{tcolorbox}[breakable, pad at break=2.5mm, before skip=1mm, after skip=1mm, width=1.0\linewidth, boxsep=2mm, arc=1.5mm, left=1.5mm, right=1.5mm, top=1.5mm, bottom=1mm, title={LLM Prompt for R\&A Agent in single-hop and multi-hop QA}]

\begin{center}
======================== {\bf \{System Instruction\}} ========================
\end{center}

As an advanced reading comprehension and information retrieval agent, your task is to answer the user question by retrieving documents and multi-step reasoning, following the instructions below: 

(1) Your response starts after ``Thought: '', where you will methodically break down the reasoning process, illustrating how you arrive at conclusions step-by-step. If you believe you have found the answer, conclude with ``Answer: $\texttt{<answer>}$'' to present a concise, definitive response.

(2) You can request for retrieving supporting documents from a relevant database a maximum of $\texttt{\{max\_retrieval\_time}\}$ times. Some retrieved documents are provided the first time. When the question involves content you do not understand, or current documents do not provide enough information, request retrieval by responding with ``Retrieve: $\texttt{<retrieve\_query>}$'', where $\texttt{<retrieve\_query>}$ is a sub-question derived from the user question, which you believe must be resolved to answer the user question.

(3) The retrieval results will be provided in the format ``Retrieved documents: ...'', and you should continue your reasoning based on the user question, your previous reasoning steps, and all retrieved documents. 
Obtain your answer with as few retrieval attempts as possible, which means you need to make your $\texttt{<retrieve\_query>}$ as precise as possible. If newly retrieved documents do not help, try reorganizing your $\texttt{<retrieve\_query>}$. 
If you run out of retrieval attempts and still cannot answer the question, answer with ``Not mentioned'' only. Do NOT provide additional uncertain information. 

(4) NO UNNECESSARY WORDS: After your reasoning, you MUST end your response with ``Answer: $\texttt{<answer>}$'' or ``Retrieve: $\texttt{<retrieve\_query>}$'' in a new line. $\texttt{<answer>}$ is in most cases a word or a phrase. Avoid ambiguous words like ``approximately''.

(5) Your ``Thought: '' response should be NO MORE THAN $\texttt{\{thought\_max\_length\}}$ WORDS. 

An in-context example is provided below.

\begin{center}
======================== {\bf \{In-context example\}} ========================
\end{center} 

\faUser User input:

Retrieved documents: 

Hotel Tallcorn

The Hotel Tallcorn is located in Marshalltown, Iowa... 

Hotel Bond

Hotel Bond is a historic hotel, built in two stages in 1913 and 1921... 

Ritz-Carlton Jakarta

The Ritz-Carlton Jakarta is a hotel and skyscraper in Jakarta...

User question: The Oberoi family is part of a hotel company that has a head office in what city?

Retrieval time remaining: $\texttt{3}$

Thought: \mbox{} \\

\faRobot Assistant response: 

According to the user question, I need to first figure out the hotel company that the Oberoi family is part of. However, the retrieved documents do not include any details regarding the Oberoi family. 

Retrieve: Which hotel company is the Oberoi family part of?  \mbox{} \\

\faUser User input:

Retrieved documents: 

The Oberoi Group

The Oberoi Group is a hotel company with its head office in Delhi...

Mohan Singh Oberoi

Rai Bahadur Mohan Singh Oberoi (15 August 1898 – 3 May 2002)...

User question: The Oberoi family is part of a hotel company that has a head office in what city?

Retrieval time remaining: $\texttt{2}$

Thought: \mbox{} \\

\faRobot Assistant response: 

According to ``Mohan Singh Oberoi'', Mohan Singh Oberoi is the founder and chairman of Oberoi Hotels \& Resorts. From ``The Oberoi Group'', Oberoi Hotels \& Resorts is a brand and its company, The Oberoi Group, is a hotel company with its head office in Delhi. 

Answer: Delhi

\end{tcolorbox}
\end{small}

\begin{small}
\begin{tcolorbox}[breakable, pad at break=2.5mm, before skip=1mm, after skip=1mm, width=1.0\linewidth, boxsep=2mm, arc=1.5mm, left=1.5mm, right=1.5mm, top=1.5mm, bottom=1mm, title={LLM Prompt for R\&A Agent in summarization}]

\begin{center}
======================== {\bf \{System Instruction\}} ========================
\end{center}

As an advanced reading comprehension and information retrieval agent, your task is to answer summative user questions based on retrieved documents. 
For each document, an [ID] is prepended to indicate the relative order in the document corpus. 
These documents provide a condensed version of the original long corpus to help you with summative tasks. 

Requirements: 

(1) Organize the documents to form a logically clear summary of the main content or plot of the original long corpus. 

(2) Your answer should be comprehensive and coherent. Your summative answer should be an independent paragraph, enabling the user to comprehend the content or plot of the original corpus. Do NOT include additional uncertain or made-up information in your answer.

(3) Your answer should be NO MORE THAN $\texttt{\{response\_max\_length\}}$ WORDS.

\end{tcolorbox}
\end{small}

\section{Supplemental Experiments} \label{experimentapp}

\begin{table*}[tp]
\caption{Retrieved documents and agent response of a multi-hop question example. ``\!{\small \textcolor{Green}{\faCheck}}\!'' and ``\!\!{\small \textcolor{OrangeRed}{\faTimes}}\!\!'' indicate supporting chunks and irrelevant chunks, respectively. Relevant tokens matched by BM25 are \hl{highlighted}.}
\vskip -0.1in
\label{tab:casestudy}
\begin{center}
\setlength{\tabcolsep}{3pt}
{\fontsize{6.8pt}{8pt}\selectfont
\begin{tabularx}{\linewidth}{>{\hsize=0.14\hsize}X>{\hsize=1.43\hsize}X>{\hsize=1.43\hsize}X}
\toprule[1.2pt]
Query & \multicolumn{2}{c}{{\it Who is the wife of the man who produced the documentary of the pop star who influenced Beyonc\'e?}} \\
\midrule \rowcolor{SkyBlue!20}
& \multicolumn{1}{c}{$\Psi$-RAG \!\!\faKey\!\! (1st attempt: original query)} & \multicolumn{1}{c}{$\Psi$-RAG \!\!\faKey\!\! (2nd attempt: {\it Who is the spouse of David Gest?})} \\
\midrule
\multirow{6}{*}{BM25} & \Cell{
\!\!\textcolor{OrangeRed}{\faTimes}\!\! 1: ``The Mistress'' features... a married \hl{man}, whose \hl{wife} was played by... \\
\!\!\textcolor{OrangeRed}{\faTimes}\!\! 2: Pizza Man is a 1991 comedy film \hl{starring} Bill Maher... \\
\!\!\textcolor{OrangeRed}{\faTimes}\!\! 3: Real Life is a 1979 American comedy film \hl{starring} Albert Brooks... \\ %portrays a \hl{documentary} filmmaker...
\!\!\textcolor{OrangeRed}{\faTimes}\!\! 4: No Escape is a 1936 British thriller film directed by... \\
\!\!\textcolor{Green}{\faCheck}\!\! 5: Michael Jackson: The Life of an Icon is a \hl{documentary} film about \hl{pop} singer... produced by his friend, {\bf David Gest} ...}
& \Cell{
\!\!\textcolor{Green}{\faCheck}\!\! 1: Liza and \hl{David}... featuring {\bf Liza Minnelli} and her then-husband, \hl{David Gest}. \\ 
\!\!\textcolor{Green}{\faCheck}\!\! 2: Michael Jackson: The Life of an Icon... produced by his friend, \hl{David Gest}... \\
\!\!\textcolor{OrangeRed}{\faTimes}\!\! 3: Dave Fennoy Fennoy in Phoenix, Arizona \hl{David} Henderson Fennoy... \\
\!\!\textcolor{OrangeRed}{\faTimes}\!\! 4: Chileab... was the second son of \hl{David}, King of Israel... \\
\!\!\textcolor{OrangeRed}{\faTimes}\!\! 5: Random Hearts is a 1999 American romantic drama film... their \hl{spouses} were having an affair... }\\
\midrule
\multirow{3}{*}{\Cell{1st\\layer}} & \Cell{
1: Beyonc\'e's music career, NBA players, American football, ... \\
2: film characters, biographies of actors, musical fantasy comedies, ... \\
3: DC Comics characters, superhero films, Marvel and DC comics, ... \\
% 4: HBO drama series, reality TV shows, American television series,... \\
% 5: English monarchs, medieval history, royal families,... 
}
& \Cell{
1: DC Comics characters, superhero films, Marvel and DC comics, ... \\
2: cricket tournaments, FIFA Confederations Cup, ICC Champions Trophy, ... \\
3: film characters, biographies of actors, musical fantasy comedies, ... \\
% 4: English monarchs, medieval history, royal families,... \\
% 5: Laos villages, Hong Kong areas, Thai subdistricts, ...\\
} \\
\midrule
\multirow{3}{*}{\Cell{Last\\abstract\\layer}} & \Cell{
1: Beyonc\'e's music career, R\&B and pop genres, female empowerment, ... \\
2: show business families, celebrity siblings, marriages and spouses, ... \\
3: Beyonc\'e's solo career, music awards, Billboard charts, ... \\
% 4: musical influences, Michael Jackson, Diana Ross,... \\
% 5: Beyonc\'e biography, Destiny's Child history, music career,... 
}
& \Cell{
1: Swedish personalities, biographical accounts, historical figures, ... \\
2: Game of Thrones, Justice League, Aquaman, ... \\
3: European royalty, medieval history, noble families, ... \\
% 4: voice actors, American nationality, video game series,... \\
% 5: Roman emperors, Julio-Claudian dynasty, family relationships,... \\
} \\
\midrule
\multirow{5}{*}{\Cell{Leaf\\layer}} & \Cell{
\!\!\textcolor{OrangeRed}{\faTimes}\!\! 1: Beyonc\'e's work has influenced numerous artists... \\
\!\!\textcolor{OrangeRed}{\faTimes}\!\! 2: On March 30, 2015, it was announced that Beyonc\'e is a co-owner... \\
\!\!\textcolor{Green}{\faCheck}\!\! 3: Beyonc\'e names {\bf Michael Jackson} as her major musical influence... \\
\!\!\textcolor{Green}{\faCheck}\!\! 4: Michael Jackson: The Life of an Icon is a documentary film... \\
\!\!\textcolor{OrangeRed}{\faTimes}\!\! 5: Beyonc\'e Giselle Knowles-Carter (/\textipa{bi:""'jOnseI}/ bee-YON-say)... }
& \Cell{
\!\!\textcolor{OrangeRed}{\faTimes}\!\! 1: Dave Fennoy Fennoy in Phoenix, Arizona... \\
\!\!\textcolor{OrangeRed}{\faTimes}\!\! 2: Gal Gadot - Varsano (Hebrew: ... \\
\!\!\textcolor{OrangeRed}{\faTimes}\!\! 3: Gal Gadot - Varsano (Hebrew: ... \\
\!\!\textcolor{OrangeRed}{\faTimes}\!\! 4: Gal Gadot - Varsano (Hebrew: ... \\
\!\!\textcolor{OrangeRed}{\faTimes}\!\! 5: Catharina Wallenstedt was the daughter of Bishop Laurentius... \\
} \\
\midrule
\multirow{5.5}{*}{\Cell{Final\\retrieved chunks}} & \Cell{
\!\!\textcolor{Green}{\faCheck}\!\! 1: Michael Jackson: The Life of an Icon... [score: $0.406$] \\
\!\!\textcolor{OrangeRed}{\faTimes}\!\! 2: Nine Days that Changed the World is a 2010 documentary... [score: $0.101$] \\
\!\!\textcolor{Green}{\faCheck}\!\! 3: Beyonc\'e names {\bf Michael Jackson} as her major musical influence... [score: $0.071$] \\
\!\!\textcolor{OrangeRed}{\faTimes}\!\! 4: Beyonc\'e's work has influenced numerous artists... [score: $0.064$] \\
\!\!\textcolor{OrangeRed}{\faTimes}\!\! 5: Following the disbandment of Destiny\'s Child... [score: $0.038$] }
& \Cell{
\!\!\textcolor{Green}{\faCheck}\!\! 1: Liza and David... featuring {\bf Liza Minnelli} and her then-husband, David Gest. [score: $0.980$] \\
\!\!\textcolor{Green}{\faCheck}\!\! 2: Michael Jackson: The Life of an Icon... [score: $0.201$] \\
\!\!\textcolor{OrangeRed}{\faTimes}\!\! 3: Gal Gadot - Varsano (Hebrew: ... [score: $0.029$] \\
\!\!\textcolor{OrangeRed}{\faTimes}\!\! 4: Gal Gadot - Varsano (Hebrew: ... [score: $0.026$] \\
\!\!\textcolor{OrangeRed}{\faTimes}\!\! 5: Gal Gadot - Varsano (Hebrew: ... [score: $0.020$]
} \\
\midrule
\multirow{4}{*}{\Cell{Agent\\response}} & \Cell{
According to ``Michael Jackson: The Life of an Icon'', David Gest is the friend and producer of the documentary film about pop singer Michael Jackson... I need more information on who David Gest's wife was at that time or if he had a wife. \\
$\texttt{<retrieve>}$ {\it Who is the spouse of David Gest?} }
& \Cell{
According to ``Michael Jackson: The Life of an Icon''... Lastly, I found that Liza Minnelli was the wife of David Gest at some point in time. \\
$\texttt{<answer>}$ {\it Liza Minnelli} \!\textcolor{Green}{\faCheck}\!\! }
\\
\bottomrule[1.2pt]
\end{tabularx}
}
\end{center}
\vskip -0.2in
\end{table*}

\begin{table*}[tp]
\caption{An example of minor class retrieval on MultiHop-RAG (\textcolor{BrickRed}{S[:50]}+\textcolor{Periwinkle}{E[:5]}). The top-1 summative abstract in each layer is listed. \textcolor{BrickRed}{Red} and \textcolor{Periwinkle}{purple} text relate to the major class ``\textcolor{BrickRed}{Sports}'' and the minor class ``\textcolor{Periwinkle}{Entertainment}'', respectively.}
\vskip -0.1in
\label{tab:casestudytree}
\begin{center}
\setlength{\tabcolsep}{3pt}
{\fontsize{6.8pt}{8pt}\selectfont
\begin{tabularx}{\linewidth}{>{\hsize=0.14\hsize}X>{\hsize=1.43\hsize}X>{\hsize=1.43\hsize}X}
\toprule[1.2pt]
Query & \multicolumn{2}{c}{\textcolor{Periwinkle}{\it What was the name of the tunnel where the fatal car crash occurred in The Crown?}} \\
\midrule \rowcolor{SkyBlue!20}
& \multicolumn{1}{c}{RAPTOR \cite{raptor}} & \multicolumn{1}{c}{$\Psi$-RAG \!\!\faFile*[regular]\!\!} \\
\midrule
\multirow{4}{*}{\Cell{1st\\layer}} & \Cell{
\textcolor{BrickRed}{Various sports teams face outcomes: Inter Miami eliminated, Italy and Australia in Davis Cup final, Rangers lead Diamondbacks 3-1, Lions beat Packers, 49ers defeat Cowboys, Vikings lose to Broncos and Bears,} \textcolor{Periwinkle}{amidst other results and controversies including Mattel's inaccurate Cherokee Barbie.}
}
& \Cell{
\textcolor{Periwinkle}{The Crown's final season covers Prince Charles and Camilla, Princess Diana, and Dodi Fayed. Diana's death and her sons' reactions are depicted. Separate stories include Jada Pinkett Smith and Will Smith's marriage, Mattel's Barbie honoring Wilma Mankiller, and Dodi Fayed's past engagement to Kelly Fisher.}
} \\
\midrule
\multirow{5}{*}{\Cell{2nd\\layer}} & \Cell{
\textcolor{BrickRed}{Inter Miami loses 1-0 to FC Cincinnati, eliminating them from MLS playoff contention, despite Lionel Messi's 35-minute appearance off the bench, as Alvaro Barreal scores late, while other teams like Manchester United struggle in Europe with defensive issues and Chelsea's Reece James faces an abusive conduct charge.}
}
& \Cell{
\textcolor{Periwinkle}{Season six focuses on Prince Charles and Camilla, Princess Diana and Dodi Fayed. After Diana's 1997 death, William went missing at Balmoral, returning 14 hours later. Before her death, Diana told William and Harry not to believe papers and planned life changes, but a planned meeting never occurred due to her fatal car crash in Paris.}
} \\
\midrule
\multirow{6}{*}{\Cell{Leaf\\layer}} & \Cell{
\textcolor{BrickRed}{So, our mistakes can never happen in the Champions League, you get punished for it, so we have to learn from that '' However, Aston Villa legend Gabby Agbonlahor insists Onana needs to take a significant chunk of the blame for United's collapse on Wednesday night ``I feel sorry for Onana,'' Agbonlahor told talkSPORT  ``There's a clip of him throwing his gloves to the ground after the game  He's angry with himself}
}
& \Cell{
\textcolor{Periwinkle}{In both the series and in real-life, Diana and Dodi - along with their chauffeur, Henri Paul - were killed on 31 August 1997 when their Mercedes-Benz was pursued by paparazzi and crashed in the {\bf Pont de l'Alma tunnel} in Paris  The mother of two was 36 years old The fatal car crash was recreated in the very first scene of season six}
} \\
\midrule
\multirow{7}{*}{\Cell{Agent\\response}} & \Cell{
The retrieved documents once again do not mention anything about a tunnel or a fatal car crash from ``The Crown''. They are still discussing football, specifically Aston Villa and Manchester United. Since I have run out of retrieval attempts and the provided information does not answer the question. \\
$\texttt{<answer>}$ {\it Not mentioned} 
\!\!\textcolor{OrangeRed}{\faTimes}\!\! }
& \Cell{
To answer this user question, I need to identify the location and details regarding a specific event depicted in ``The Crown''. According to the retrieved document, Diana, Dodi, their chauffeur Henri Paul were involved in a fatal car crash. This incident occurred when they were pursued by paparazzi while driving through Paris. Their Mercedes-Benz crashed inside the Pont de l'Alma tunnel on 31 August 1997. \\
$\texttt{<answer>}$ {\it Pont de l'Alma} \!\textcolor{Green}{\faCheck}\!\! }
\\
\bottomrule[1.2pt]
\end{tabularx}
}
\end{center}
\end{table*}

\subsection{Case Study} \label{casestudy}

We present a multi-hop example from MuSiQue in \cref{tab:casestudy} to intuitively illustrate how $\Psi$-RAG performs multi-granular agentic retrieval on the abstract tree. The question, ``{\it Who is the wife of the man who produced the documentary of the pop star who influenced Beyonc\'e?}'' requires up to 3 hops to answer. %``Beyonc\'e $\rightarrow$ Michael Jackson $\rightarrow$ David Gest $\rightarrow$ Liza Minnelli''. 
The example shows that the keywords retrieved by $\Psi$-RAG \!\!{\small \faKey}\! at the first layer are relatively coarse, covering broad themes such as music, sports, and movies. 
% However, some terminological keywords (e.g., ``Beyonc\'e's music career'') help $\Psi$-RAG locate the relevant abstract at the beginning of the retrieval. 
With the search going deeper into the tree, the keywords become finer-grained. Consequently, two semantically related retrieved chunks tell that the required ``man who produced the documentary'' is David Gest. 

However, due to the structural isolation, we still do not know who is David Gest's wife, so the R\&A agent calls for a second retrieval attempt. 
As the second question lacks sufficient context, the coarse abstraction in the upper tree layers confuses the retriever, leading it to irrelevant chunks. In contrast, sparse retrieval can provide straightforward facts about David Gest by simply searching for documents with the keyword. As a result, the retrieved chunk leads to the final answer ``Liza Minnelli''. 
This demonstrates the importance of synergy between $\Psi$-RAG's hierarchical tree index and the multi-granular hybrid retriever, the key to its effectiveness. 

\subsection{Analyses on Hierarchical Abstract Tree} \label{treeablation}

{\bf Case study: retrieval on skewed data.} 
To further demonstrate the impact of the uniform effect and distribution adaptability, we conduct a case study based on the visualization study in \cref{skewvisual}. 
We compare the agent response processing a minor-class question on both tree indexes of RAPTOR and $\Psi$-RAG. From \cref{tab:casestudytree}, the 1st layer abstract nodes of RAPTOR are confused because of the uniform effect, with most of the contents related to the major class ``\textcolor{BrickRed}{Sports}''. This significantly interrupts the retriever which selects a major class abstract node in the 2nd layer. On the contrary, $\Psi$-RAG's minor abstract node at the 1st layer clearly points to the relevant topic ``The Crown''. This assists the retriever in finding the correct chunk. 

\begin{minipage}[t]{0.54\textwidth}
\vspace{0pt}
\centering
\captionof{table}{Comparative results of the indexes. ``SR'', ``R\&A'', and ``Q3R'' refer to the sparse retriever, the R\&A agent, and the reranker, specifically.}
\begin{small}
\adjustbox{max width=\linewidth}{
\begin{tabular}{lll}
\toprule[1.2pt]
Index & Single retrieval time (s) & F1 (\%) \\
\midrule \rowcolor{SkyBlue!20}
\multicolumn{3}{c}{{\it MuSiQue}} \\
Flat & $33.39$ & $30.50$ \\
GMM & $0.16$ & $17.17$ \\
HAT & $\mathbf{0.15}$ & $18.62$ (\textcolor{Green}{$\uparrow 1.45$}) \\
GMM + SR + R\&A + Q3R & $1.17$ & $47.26$ \\
HAT + SR + R\&A + Q3R & $1.12$ & $\mathbf{48.90}$ (\textcolor{Green}{$\uparrow 1.64$}) \\
\midrule \rowcolor{SkyBlue!20}
\multicolumn{3}{c}{{\it MultiHop-RAG}} \\
Flat & $62.21$ & $42.44$ \\
GMM & $0.17$ & $35.81$ \\
HAT & $\mathbf{0.16}$ & $36.30$ (\textcolor{Green}{$\uparrow 0.49$}) \\
GMM + SR + R\&A + Q3R & $0.66$ & $51.32$ \\
HAT + SR + R\&A + Q3R & $0.62$ & $\mathbf{56.23}$ (\textcolor{Green}{$\uparrow 4.91$}) \\
\bottomrule[1.2pt]
\end{tabular}
}
\label{tab:index}
\end{small}
\end{minipage}
\hfill
\begin{minipage}[t]{0.4\textwidth}
\vspace{0pt}
\centering
\includegraphics[width=\textwidth]{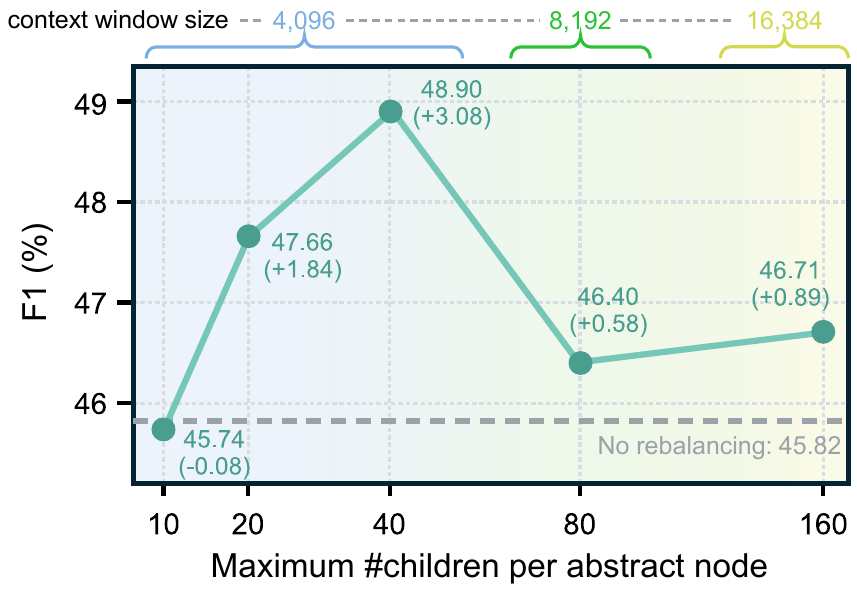}
\captionof{figure}{Sensitivity of $\Psi$-RAG to the maximum number of children per abstract node on MuSiQue.}
\label{fig:rebalance}
\vspace{0.1in}
\end{minipage}

{\bf Quantitative study on tree indexes.} 
We conduct a quantitative comparison between the hierarchical abstract tree (HAT) of $\Psi$-RAG \!\!{\small \faFile*[regular]}\!\! and (1) a flat index, which performs brute-force sequential search on all document chunks with $O(n)$ time; (2) the GMM-based cluster tree of RAPTOR~\cite{raptor}, both without and with the proposed multi-granular agentic retriever modules. 
From \cref{tab:index}, we make the following observations:

(1) {\bf Flat dense retrieval is much slower than tree retrieval.} Although dense retrieval on a flat index serves as a performance upper bound for tree indexes, its linear time complexity incurs at least ${\sim}100\times$ higher runtime than the logarithmic retrieval of tree indexes. 

(2) {\bf HAT outperforms the GMM tree both with or without the agentic retriever modules.} The performance increase points to the minor class distribution in the data, which characterizes the contribution of our tree index to alleviating poor distribution adaptability.

{\bf Effect of rebalancing.} We have set the maximum number of children per abstract node for the abstract tree. This keeps the total abstract length within the context window of the abstraction agent. 
\cref{fig:rebalance} shows the sensitivity of $\Psi$-RAG to the rebalancing process after tree construction, which lead to three observations. (1) Rebalancing without excessive splitting helps build a better tree structure with higher performance. (2) Splitting an abstract node too many times may lead to relevant node missing during top-$k$ dense vector matching: setting maximum \#children to 10 makes the F1 lower than that without rebalancing. (3) When the abstracts become too long (8k--16k tokens), their semantics become more vague which also reduces the accuracy.

\subsection{Analyses on Multi-granular Agentic Retriever} \label{retrieveablation}

\begin{figure*}[t]
  \begin{center}
    \centerline{\includegraphics[width=\textwidth]{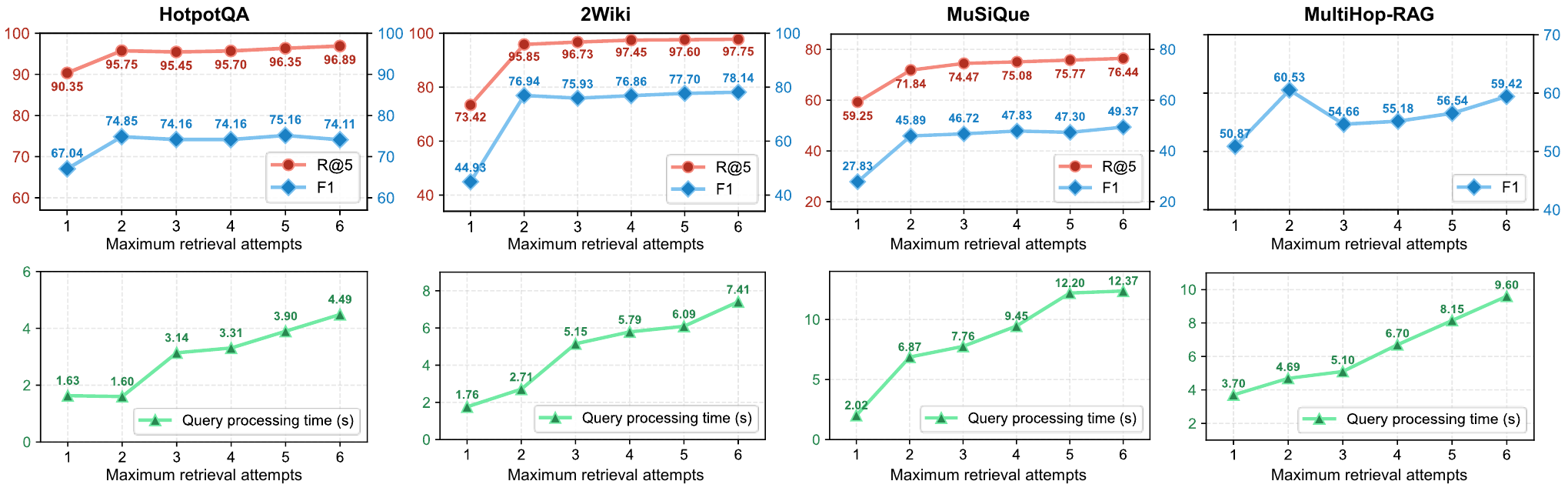}}
    \caption{Retrieval and QA performance in Recall@5 (R@5\%) and F1 (\%), and average processing time for each query (s) under concurrent execution, {\it w.r.t.} the maximum number of retrieval attempts.
    }
    \label{fig:attemptcurve}
  \end{center}
  \vskip -0.2in
\end{figure*}

{\bf Maximum number of retrieval attempts.} 
We evaluate how the efficiency and performance of the R\&A agent vary with the maximum number of iterative retrieval attempts ($i_{\max}+1$), as shown in \cref{fig:attemptcurve}. The results indicate that on multi-hop datasets, the first extra retriever call yields the greatest performance gain, while subsequent attempts contribute minimal or even negative improvement. Each retrieval attempt introduces a relatively constant time overhead, and longer reasoning chains tend to introduce unnecessary noise \cite{longreasonchain}. Therefore, it is crucial to strike a balance between the cost of retrieval iterations and the performance gains across different application scenarios.

\begin{figure*}[t]
  \begin{center}
    \centerline{\includegraphics[width=\textwidth]{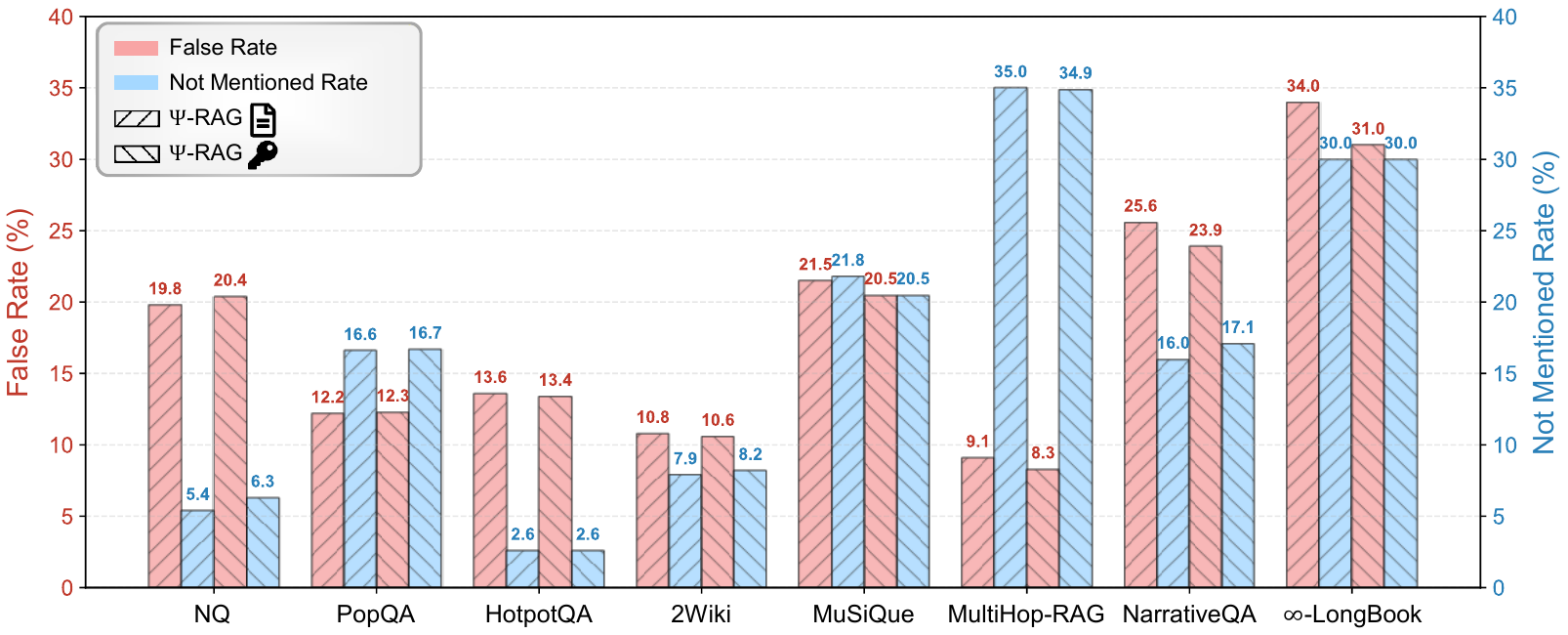}}
    \caption{
False Rate and Not Mentioned Rate of $\Psi$-RAG on different datasets. 
    }
    \label{fig:failure}
  \end{center}
  \vskip -0.2in
\end{figure*}

\subsection{Failure Analysis} \label{failure}

We analyze the failure cases of $\Psi$-RAG by measuring the False Rate (rate of incorrect answers) and Not Mentioned Rate (rate of retrieval failures). An incorrect answer is defined as sharing no overlapping tokens with the ground truth. \cref{fig:failure} reveals the following key patterns:

(1) {\bf Retrieval failures stem from multiple factors.} Index scale is a primary factor: the Not Mentioned Rate for $\infty$-LongBook is significantly higher than that for NarrativeQA, which has an index only 30\% the size on average. This is because upper-level abstracts in a larger search space are coarser and introduce more noise. Another factor is query complexity: datasets with higher maximum hop counts (MuSiQue with 4 hops) exhibit higher Not Mentioned Rates than those with fewer hops (HotpotQA and 2Wiki with 2 hops). Although we increase the maximum retrieval attempts for complex questions, we observe that if the initial retrieval for a reorganized query fails, subsequent retrieval attempts are also highly likely to fail. This is because the reorganized query often does not shift the high-level semantic focus, causing the tree retriever to revisit the same abstract nodes. Meanwhile, the high-frequency entities in the reorganized query lead the sparse retriever to return many noisy chunks. As shown in \cref{casestudy}, BM25 retrieves many irrelevant chunks with ``David'' that are unrelated to the target ``David Gest''. 

(2) {\bf The False Rate consistently exceeds the Not Mentioned Rate.} This occurs on most datasets excluding MultiHop-RAG (with unanswerable negative samples). A manual analysis of 50 failure cases from L3-70B on MuSiQue identifies the following causes: (\romannumeral 1) LLM misguidance or guesses resulting from incomplete retrieval (29 cases); (\romannumeral 2) logical errors in agentic reasoning (8 cases); (\romannumeral 3) LLM hallucination (6 cases); (\romannumeral 4) granularity mismatch between the answer and the ground truth (7 cases). Therefore, the primary cause of a higher false rate is missing key document chunks in retrieval.

\section{Real-World Extensions} \label{extensions}

\subsection{Approximate Nearest Neighbor Search for Efficient Indexing}

We extend $\Psi$-RAG with two efficient techniques for very large corpora with 10M+ and even 100M+ tokens. They include:

\begin{itemize}[leftmargin=10pt,itemsep=1pt,topsep=2pt]
\item {\bf Bucketing}: following the inverted-file index in Faiss~\cite{faiss}, we first partition a large corpus into multiple buckets using a fast recursive spherical $k$-means. We then build a tree inside each bucket and finally merge them at the highest layer. Assuming $b \ll n$ buckets of balanced size, bucketing reduces the average indexing time complexity from $O(n^2)$ to $O(n^2/b)$ and space complexity from $O(n^2)$ to $O(n^2/b^2)$.
\item {\bf ANN search with HNSW}~\cite{hnsw}: we build an HNSW graph over all chunk embeddings and use the top-$k$ graph neighbor pairs of each chunk, which are then ranked to build the tree. This reduces the average time complexity of similarity ranking from $O(n^2 \log n)$ to $O(kn \log kn)$ and space complexity from $O(n^2)$ to $O(kn)$.
\end{itemize}

We evaluate the efficiency of the two extensions on two large subsets from Wiki18 (${\sim}$3B tokens) in FlashRAG~\cite{flashrag}: Wiki18-6M (6.1M tokens, ${\sim}$42k chunks) and Wiki18-50M (50.8M tokens, ${\sim}$350k chunks). The time and memory costs are listed in \cref{tab:annefficiency}. It shows that HNSW can generate sorted document pairs for a corpus of 50M tokens with less than 1.5 GB of storage. Combined with bucketing, tree indexing can be completed with MB-level memory usage and at less than 1\% of the original time cost. Therefore, LLM-based abstraction becomes the primary efficiency bottleneck for scaling the framework to industrial corpora. Exploring efficient alternatives to LLM-based abstraction will be a key direction for our future work.

\begin{table}[h]
\centering
\caption{Time and memory comparison of $\Psi$-RAG's extensions on large-scale Wiki18 subsets. Embedding dimension is set to 1,024. Experiments are conducted on 8 CPU cores (Wiki18-6M) / 64 CPU cores (Wiki18-50M) and 256GB Memory.}
\label{tab:annefficiency}
\begin{small}
\adjustbox{max width=\linewidth}{
\begin{tabular}{llll|ll}
\toprule[1.2pt]
Data & \multicolumn{3}{c}{Wiki18-6M} & \multicolumn{2}{c}{Wiki18-50M} \\
\cmidrule(lr){2-4} \cmidrule(lr){5-6}
Method 
& Vanilla $\Psi$-RAG 
& + HNSW 
& + Bucketing ($b=4$) + HNSW
& + HNSW 
& + Bucketing ($b=63$) + HNSW \\
\midrule
Embedding time (s) 
& \multicolumn{3}{l|}{$1{,}229$}
& \multicolumn{2}{l}{$10{,}208$}
\\

Similarity ranking time (s) 
& $304.84$ 
& $55.99$ (\textcolor{Green}{$5.44\times$}) 
& $\mathbf{0.89}$ (\textcolor{Green}{$342.5\times$}) 
& $168.02$ 
& $\mathbf{7.87}$ (\textcolor{Green}{$21.35\times$}) \\

Similarity matrix memory 
& $13.16\mathrm{GB}$ 
& $10.55\mathrm{MB}$ (\textcolor{Green}{$1{,}277\times$}) 
& $\mathbf{2.45\mathrm{MB}}$ (\textcolor{Green}{$5{,}483\times$}) 
& $88.17\mathrm{MB}$ 
& $\mathbf{2.15\mathrm{MB}}$ (\textcolor{Green}{$41.01\times$}) \\

Similarity ranking memory 
& $65.81\mathrm{GB}$ 
& $656.7\mathrm{MB}$ (\textcolor{Green}{$102.6\times$}) 
& $\mathbf{37.89\mathrm{MB}}$ (\textcolor{Green}{$1{,}778\times$}) 
& $1.34\mathrm{GB}$ 
& $\mathbf{34.10\mathrm{MB}}$ (\textcolor{Green}{$40.24\times$}) \\

Tree indexing time (s) 
& $15{,}996$ 
& $13{,}957$ (\textcolor{Green}{$1.15\times$}) 
& $\mathbf{848}$ (\textcolor{Green}{$18.86\times$}) 
& ${\sim} 280{,}000$ 
& $\mathbf{2{,}261}$ (\textcolor{Green}{${\sim}124\times$}) \\

Abstraction time (s) 
& ${\sim} 205{,}000$ 
& ${\sim} 200{,}000$ 
& ${\sim} 89{,}400$ 
& ${\sim} 1{,}800{,}000$ 
& ${\sim} 1{,}400{,}000$ \\
\bottomrule[1.2pt]
\end{tabular}
}
\end{small}
\end{table}

\subsection{Query Hop Discriminator}

Our evaluation utilizes the maximum hop count of each dataset for the maximum number of iterative retrieval time $i_{\max}$. 
\begin{wrapfigure}[7]{r}{0.3\textwidth}
\centering
\captionof{table}{Multi-hop QA results of $\Psi$-RAG with the query hop discriminator.}
\label{tab:qhd}
\adjustbox{max width=\linewidth}{
\begin{tabular}{lcc}
\toprule[1.2pt]
Dataset & R@5 (\%) & F1 (\%) \\
\midrule
HotpotQA & $95.05$ & $74.27$ \\
MuSiQue & $72.68$ & $46.63$ \\
\bottomrule[1.2pt]
\end{tabular}
}
\end{wrapfigure}
This limits the applicability in real-world datasets with diverse user queries. 
For practical settings without prior maximum hop count, we implement a lightweight query hop discriminator $\mathcal{Q}: q \mapsto i_{\max}$ using a 2-layer MLP trained on questions with 1 to 4 hops. It predicts the hops of each user query before retrieval to automatically decide the maximum retrieval time for the R\&A agent with very small latency. \cref{tab:qhd} shows the QA results with $\mathcal{Q}$ on some multi-hop datasets\footnote{We train $\mathcal{Q}$ with user questions from datasets other than the indexed corpus: NQ, 2Wiki, MultiHop-RAG without negative samples, and MuSiQue for indexing HotpotQA (HotpotQA for indexing MuSiQue). The lengths of decomposed sub-query lists or the number of evidence passages serve as labels.}. Results indicate that the query hop discriminator preserves strong effectiveness of $\Psi$-RAG without the prior maximum hop count.

\section{Related Work} \label{relatedwork}

{\bf Retrieval-Augmented Generation} \cite{rag, ragsurvey}. Based on the type of index and retriever, RAG frameworks can be categorized as: (1) {\it Sparse retrieval}: methods such as TF-IDF \cite{tfidf} and BM25 \cite{bm25} retrieve documents where keywords from the user question appear more often in the corpus. They are fast and suitable for token-level detailed search but less capable of capturing deep semantic meanings. 
(2) {\it Dense vector retrieval} \cite{dpr}: this approach utilizes powerful language embedding models \cite{colbert, contriever, nvembed, qwen3embedding} 
to map queries and documents into a continuous vector space \cite{ragsurvey}. 
It retrieves the top-$k$ documents most relevant to the user question using vector metrics like cosine similarity. Sometimes, embedding-based rerankers \cite{bge, qwen3embedding} are also utilized to rerank retrieved documents based on user preference, further improving retrieval quality. Dense vector-based RAG excels at capturing deep semantics and key features of entities, making it suitable for more complex QA scenarios. Nonetheless, dense retrieval could prefer documents highly relevant in features but totally unrelated to target entities.
(3) {\it Hybrid retrieval}: some approaches integrate both sparse and dense retrieval to combine their strengths, leading to a number of powerful RAG frameworks \cite{clear, roc, hhr, hirag}.

{\bf Agentic RAG.} 
The rapid advancement in the intelligence of LLMs has heightened focus on their reasoning capabilities and demands on RAG frameworks. 
Recent efforts \cite{react, ircot, selfrag, searcho1} incorporate specifically designed LLM agents to integrate retrieval into LLM reasoning process. Specifically, the agent recursively decomposes a question into multiple sub-questions and invokes the retriever at each step \cite{agenticragsurvey}. This enables the LLM to autonomously and flexibly handle multi-hop questions. However, most of them are built upon flattened linear or inverted indexes for token-level information. 

{\bf Graph-RAG.} 
As the most popular form of structured RAG, knowledge graph indexes \cite{graphrag, graphragsurvey, graphragbench} can capture semantic relationships between document chunks or entities. They often build a unified index for an entire corpus using named entity recognition and extraction methods like the Open Information Extraction (OpenIE) \cite{openie}. Cross-document multi-hop associations are then captured by walking from node to node \cite{hipporag, hipporag2}, extracting shortest paths \cite{higraphrag, pathrag}, or extracting subgraphs \cite{lightrag, subgraphrag}. However, graph indexes are less effective at capturing hierarchy. 
Some Graph-RAG methods have to resort to less efficient compensatory strategies to handle summative tasks, including but not limited to hierarchical graphs \cite{hnsw, higraphrag}, community summaries \cite{graphrag, archrag}, LLM-based hierarchical filtering \cite{lightrag}, or graph neural network-based message aggregation \cite{gor}. Coupled with the significant overhead of OpenIE, efficient index construction remains a key challenge for Graph-RAG.

{\bf Tree-RAG.}
Tree indexes are introduced to explicitly model the hierarchical relationships of documents while achieving fast retrieval with logarithmic complexity \cite{llamaindex}. Traditional tree indexes for nearest neighbor search include top-down splitting \cite{kdtree, balltree} and bottom-up hierarchical clustering \cite{singlelinkage, upgma, hc-retrieval}. 
However, traditional algorithms offer limited generalization. Most existing dense vector-based Tree-RAG methods \cite{dhr, llamaindex, longrefiner, e2graphrag, trag, treerag} rely on explicit document structure (session titles, tables of contents, etc.) or contextual relationships between adjacent text chunks. As a result, they are not directly applicable to documents lacking explicit structure or to cross-document retrieval.
RAPTOR \cite{raptor} is the first passage-level Tree-RAG framework designed for unstructured long documents. It recursively clusters document chunks via Gaussian Mixture Model (GMM) and generates a summary for each cluster. MemTree \cite{memtree} further adapts Tree-RAG to the long-term LLM memory scenario, but both of them are limited to the passage-level indexes and the single-document retrieval setting. 
To our knowledge, the only corpus-level Tree-RAG is LATTICE \cite{lattice}, which performs beam search retrieval using an LLM-evaluated path relevance score. However, there are no discussions towards the limitations on multi-hop QA, and its tree building process is not open-sourced at the time of writing. 

\section{Discussion and Future Work}

{\bf Updating the index.} Real-time document insertion and index updates are common requirements for RAG applications, yet they pose challenges for traditional Tree-RAG methods, as adding new documents necessitates re-clustering the entire corpus. Our abstract tree index is constructed based on local pairwise similarities, making it feasible to be updated with new documents.

We propose an easy-to-implement approach for the insertion of $\Delta n$ new document chunks. A single new node is collapsed to the leaf layer by searching for its most similar leaf in the tree via dense vector matching. Then, for keyword abstracts, $\Psi$-RAG traverses upstream to find the parent abstract nodes of the new node (or maintains an $O(n)$ parent index table for faster lookups) and appends new keywords to each parent. For summative abstracts, the abstraction LLM is invoked to regenerate the abstract for each parent node, similar to existing abstract updating strategies \cite{memtree}. This costs up to $O(n\Delta n)$ time. 
For batch insertion of $\Delta n$ chunks, a tree can first be built with only new chunks and then integrated into the main branch via abstract node collapse. If it is collapsed into the $l$th layer with $n_l$ nodes, then this updating costs $O((\Delta n)^2 n_l \log \Delta n)$ which is much smaller than that of re-indexing. 

We note that repeatedly inserting isolated nodes under the same parent may amplify the semantic or granularity biases between them. Considering the semantic coherence in the real-world textual data, we thus recommend inserting new contents as a subtree rather than as individual nodes when they differ substantially from the existing tree. 

{\bf Future work.}
An efficiency challenge for $\Psi$-RAG and other passage-level Tree-RAG lies in the abstraction LLM calls during indexing. 
Developing a more efficient abstraction scheme is a key direction for scaling Tree-RAG to industrial applications. 
Moreover, as analyzed in \cref{failure}, sparse retrievers like BM25 resort to the traditional named entity recognition technique, neglecting the high co-occurrence frequency of named entities. 
Another future work involves adapting agents to more specialized and diverse user needs with various post-training techniques, such as reinforcement learning and rejection sampling-based supervised fine-tuning.

\end{document}